\documentclass{article}

\usepackage{microtype}
\usepackage{graphicx}
\usepackage{subcaption}

\usepackage{hyperref}
\usepackage{amssymb}

\usepackage[accepted]{icml2026}

\usepackage{url}            
\usepackage{booktabs}       
\usepackage{amsfonts}       
\usepackage{nicefrac}       
\usepackage{amsthm} 
\usepackage{xcolor}         
\usepackage{amsmath}
\usepackage{enumitem}
\usepackage{tcolorbox}
\usepackage{pifont}
\usepackage{xspace}
\usepackage{wrapfig,lipsum,booktabs}
\usepackage{adjustbox}
\usepackage{booktabs,longtable,amsmath}
\usepackage[table,xcdraw]{xcolor}
\usepackage{multirow}

\usepackage[capitalize,noabbrev]{cleveref}

\theoremstyle{plain}
\newtheorem{theorem}{Theorem}[section]
\newtheorem{proposition}[theorem]{Proposition}

\newtheorem{definition}[theorem]{Definition}
\newtheorem{assumption}[theorem]{Assumption}
\newtheorem{remark}[theorem]{Remark}

\usepackage[textsize=tiny]{todonotes}

\icmltitlerunning{On Information Self-Locking in Reinforcement Learning for Active Reasoning of LLM agents}

\usepackage{amsmath,amsfonts,bm}

\def\eqref#1{equation~\ref{#1}}

\def\1{\bm{1}}

\def\rvw{{\mathbf{w}}}

\begin{document}

\definecolor{mydarkblue}{rgb}{0,0.08,0.45}
\definecolor{myorange}{rgb}{0.85, 0.45, 0.1} 
\definecolor{mydarkgray}{rgb}{0.45, 0.45, 0.45}
\newcommand{\ob}{\color{mydarkblue}}
\newcommand{\reb}{\color{red}}
\newcommand{\oo}{\color{myorange}}
\newcommand{\og}{\color{mydarkgray}}
\newcommand{\proj}{{\textsc{AReW}}\xspace}
\newcommand{\infoSL}{SeL\xspace}
\newcommand{\GN}{{GuessNumbers}\xspace}
\newcommand{\SP}{{SituationPuzzles}\xspace}
\newcommand{\CD}{{CircuitDecoding}\xspace}
\newcommand{\MR}{{MovieRecommendation}\xspace}
\newcommand{\sel}{{SeL}\xspace}
\newcommand{\PE}{{PreferenceEstimation}\xspace}
\newcommand{\var}[1]{\color{mydarkblue}{\texttt{\{#1\}}}}
\newcommand{\ready}[1]{\textcolor{red}{#1}}
\newcommand{\yq}[1]{{\color{cyan}{yq: #1}}}
\newcommand{\pan}[1]{{\color{red}{pan: #1}}}
\newcommand{\ML}[1]{{\color{teal}M.L.:[#1]}}
\definecolor{fred}{HTML}{D62727}
\newcommand{\fan}[1]{\textcolor{fred}{\textbf{Fan:} #1}}
\twocolumn[
  \icmltitle{
  On Information Self-Locking in Reinforcement Learning \\
  for Active Reasoning of LLM agents
  }



  \icmlsetsymbol{equal}{*}
  \icmlsetsymbol{lead}{$\dagger$}
  \icmlsetsymbol{co}{$\ddagger$}
  \begin{icmlauthorlist}
    \icmlauthor{Deyu Zou}{equal,cuhk}
    \icmlauthor{Yongqiang Chen}{equal,cuhk}
    \icmlauthor{Fan Feng}{sd}
    \icmlauthor{Mufei Li}{gt}
    \icmlauthor{Pan Li}{gt}
    \icmlauthor{Yu Gong}{lead,db}
    \icmlauthor{James Cheng}{co,cuhk}
  \end{icmlauthorlist}

  \icmlaffiliation{cuhk}{The Chinese University of Hong Kong}
  \icmlaffiliation{gt}{Georgia Institute of Technology}
  \icmlaffiliation{db}{ByteDance}
  \icmlaffiliation{sd}{University of California San Diego}

  \icmlcorrespondingauthor{James Cheng}{jcheng@cse.cuhk.edu.hk}

  \icmlkeywords{Agentic Reinforcement Learning, Agentic Active Reasoning, Self-locking}

  \vskip 0.3in
]



\printAffiliationsAndNotice{\icmlEqualContribution \, \icmlpl \, \icmlca}

\begin{abstract}
   Reinforcement learning (RL) has become a \textit{de facto} paradigm for building LLM-based agents that act, interact, and reason over extended task horizons.
  However, in \textbf{\textit{active reasoning}} 
  where agents must elicit new observations through interaction with the environment to solve the task,
  we find that outcome-based RL can induce a systematic failure mode which we call \textbf{\textit{information self-locking (SeL)}}: 
  agents 
  fail both to {elicit informative feedback} and to internalize obtained evidence.
  To understand the issue, we trace agentic behaviors into two coupled capabilities: \textbf{\textit{Action Selection (AS)}}, which determines  observation streams, 
  and \textbf{\textit{Belief Tracking (BT)}}, which updates the agent’s internal task understanding. 
  Theoretical  and empirical analyses reveal a \textbf{\textit{bidirectional}} bottleneck that leads to SeL: weak BT obscures the credit of informative actions, while weak AS deprives BT of useful evidence. 
  This coupling weakens the learning signal for both capabilities and leads to SeL.
  To mitigate this issue, we propose \proj, a simple yet effective \underline{A}dvantage-\underline{Rew}eighting method that uses easy-to-obtain directional critiques to reallocate credit within trajectories.
  Extensive experiments across $\mathbf{9}$ agentic tasks of varying complexity show that \proj significantly mitigates SeL, yielding up to 60-point gains in final performance.  Code is available at
\url{https://github.com/unimpor/T3}.
\end{abstract}

\section{Introduction}

Reinforcement learning (RL) with outcome-based rewards has demonstrated great success in improving the reasoning capabilities of large language models (LLMs)~\citep{wang2024reinforcement,srivastava2025technical,xu2025towards,guo2025deepseek}. Recently, this paradigm has become increasingly important for building LLM agents that solve tasks through extended interaction with external environments, rather than from a single prompt alone~\citep{zhang2025landscape,plaat2025agentic}. In contemporary agentic applications~\citep{zhang2025deep, sager2026comprehensive}, such as deep research, coding, computer use, \textit{etc.}, the agent often does not receive all task-relevant information upfront. Instead, the agent must make progress by actively taking environment-facing actions, receiving new observations or feedback, and updating its understanding of the task state over multiple turns. We refer to this setting as \textbf{\emph{agentic active reasoning}}.

Despite the success of outcome-based RL, we observe a recurring failure mode when training LLM-based agents for active reasoning. Rather than learning to acquire and use more task-relevant information, agents can drift into low-information interaction patterns: they take actions that reveal little useful evidence, and even when useful observations are returned, they do not reliably incorporate them into subsequent reasoning. We refer to this training regime as \textbf{\emph{information self-locking}} (\textit{\textbf{\infoSL}}). 
Under \infoSL, outcome-based RL may still improve apparent task performance through interaction-insensitive shortcuts, but it fails to strengthen the information acquisition and evidence-integration behaviors needed for active reasoning.
This raises a crucial research question: \emph{Why does SeL happen under outcome-based RL for agentic active reasoning, and how can we mitigate it?}

\begin{figure*}[t]
     \centering
\includegraphics[width=0.9\linewidth,trim=40 50 20 50,clip]{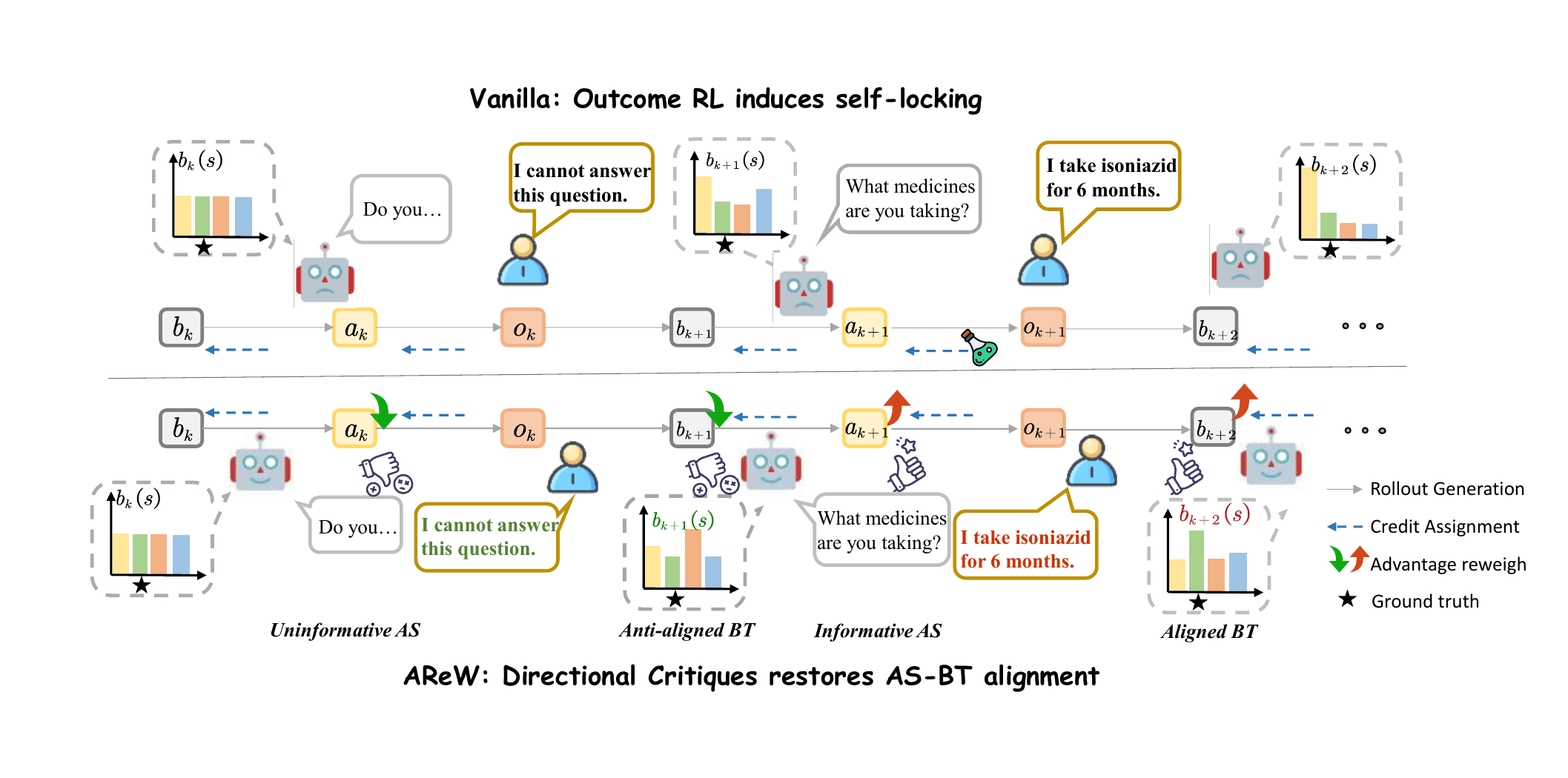}
\vspace{-0.05in}
     \caption{
     Overall illustration of information self-locking (\sel) and its mitigation.
$(b,a,o)$ denote the agent’s internal belief, its chosen action, and the resulting feedback at each turn.
Under vanilla outcome-based RL (top), the agent can become trapped in a self-locking regime: deficient belief tracking masks contributions of  informative actions, leading to misaligned credit assignment.
Our \proj (bottom) introduces advantage reweighting via directional critiques, correcting the learning signal and assisting to mitigate \sel in active reasoning.
}
    \label{fig:pipeline}
    \vspace{-.2in}
\end{figure*}

To answer this question, we start by exploring the coupled nature of the credit assignment system in active reasoning. Specifically, the agent's performance depends on two intertwined capabilities: \emph{\textbf{Action Selection}} (\textbf{\textit{AS}}), which determines the observation stream, and \emph{\textbf{\textit{Belief Tracking}}} (\textbf{\textit{BT}}), which governs evidence internalization. Crucially, these two are mutually dependent: 
informative actions receive credit only when the resulting observations are properly utilized by BT,
while effective BT refinement requires a high-quality information stream from AS.
Concretely, empirical evidence (Sec.~\ref{sec:phenomenon}) reveals this coupling stagnation: even as rewards increase, both AS and BT capabilities remain dormant, suggesting that the sparse outcome reward fails to decouple and individually improve these two fundamental gears.

To further formalize this intuition, we theoretically characterize the coupled learning dynamics of AS and BT under outcome-based optimization (Sec.~\ref{sec:theory}). 
Our analysis decomposes the outcome-gradient into AS and BT channels and demonstrates that, in a low-information regime, their improvement signals become capability-limited: 
weak belief tracking reduces the reward sensitivity to informative actions, thereby masking the learning signal for AS; 
at the same time, deficient AS restricts the evidence stream available for belief refinement, limiting the upward drift of BT. 
Together, these effects form a coupled learning bottleneck (Thm.~\ref{thm:sel_lock_informal}), where the outcome-gradient cannot provide a sufficiently strong improvement signal for either capability, causing the agent to remain trapped in a persistent low-information regime (SeL regime, Def.~\ref{def:locking_regime}).

Built on this theoretical understanding, we propose \proj, a lightweight and calibration-free \underline{A}dvantage-\underline{Rew}eighting framework for breaking \infoSL. 
{The key observation is that active reasoning often provides easy-to-obtain local diagnostic signals, \textit{e.g.}, 
whether a query is informative can be inferred from whether the user reveals new evidence in response. 
\proj uses them as weak \textit{directional critiques} for AS and BT, rather than implementing calibrated intermediate rewards or training a dense reward model.}
It then injects these critiques at the policy-gradient level by reweighting stepwise advantages, reallocating credit within each trajectory from negatively critiqued decisions to positively critiqued ones while preserving the original outcome reward. 
This reward-preserving credit reallocation supplies non-degenerate learning signals in the SeL regime, enabling AS and BT to improve jointly and escape \infoSL.

Extensive experiments across \textbf{9} agentic active reasoning tasks  of varying complexity show that \proj consistently mitigates \infoSL across tasks, algorithms, and model families, and is robust to critique design of different noise levels. Beyond improving final performance, \proj fundamentally alters training dynamics: agents recover information-seeking interaction patterns and exhibit sustained growth in both action selection and belief tracking capabilities.

\section{Self-Locking in Active Reasoning}
\label{sec:phenomenon}

\subsection{Preliminaries}

In active reasoning, an LLM agent solves a task under partial observability: the information provided at the beginning of an episode is insufficient for completing the task, and the agent must interact with an external environment to acquire task-relevant observations. At each turn, the agent takes an environment-facing action $a_t \in \mathcal A$, such as issuing a query or invoking a tool, and receives an observation $o_t \in \mathcal O$ from the environment. The agent must then use the accumulated interaction history to update its understanding of the task state and decide what action to take next.

We model active reasoning as a Partially Observable Markov Decision Process (POMDP) $(\mathcal S,\mathcal A,\mathcal O,T,O,R,\gamma)$~\citep{kaelbling1998planning}, where $\mathcal S$ is the latent state space, $\mathcal A$ is the action space, $\mathcal O$ is the observation space, $T(s'\mid s,a)$ is the transition dynamics, $O(o\mid s,a)$ is the observation model, $R$ is the reward function, and $\gamma$ is the discount factor. In many agentic active reasoning tasks, the latent state $s^\star \in \mathcal S$ may represent the hidden user preference, diagnosis, solution, \textit{etc}. The agent cannot directly observe $s^\star$.
{Belief tracking} is central to active reasoning. For analysis, we associate the agent with a model belief $b_t\in\Delta(\mathcal S)$ at turn $t\in\{0,\ldots,H\}$, which represents its internal understanding of the latent task state and the current progress of problem solving. 
In this work, the belief is mainly leveraged as an analytical abstraction of the agent's task-state understanding induced by its parameters and interaction history.


Under this abstraction, the behavior of an agent with parameters \(\omega\) can be decomposed into two coupled processes.

\textbf{Action Selection (AS).}
Given its current belief, the agent selects an environment-facing action
\(a_t \sim \pi_\omega^{\mathrm{as}}(\cdot \mid b_t)\),
which determines what observations may become available next through
\(o_t \sim O(\cdot \mid s^\star,a_t)\).
The role of AS is therefore to shape the future observation stream by choosing actions that reduce uncertainty or otherwise advance task completion.

\textbf{Belief Tracking (BT).}
After receiving the observation \(o_t\), the agent updates its internal task-state understanding through a belief-update kernel
$b_{t+1} \sim \pi_\omega^{\mathrm{bt}}(\cdot\mid b_t,a_t,o_t)$,
which integrates the newly acquired observation with information accumulated from previous interaction rounds.

Together, these two kernels define the agent's interleaved behavior
$\Pi_\omega := \bigl(\pi_\omega^{\mathrm{as}}, \pi_\omega^{\mathrm{bt}}\bigr)$.
Over a horizon \(H\), \(\Pi_\omega\) induces trajectories
$\tau=(b_0,a_0,o_0,b_1,\ldots,a_{H-1},o_{H-1},b_H)$
with likelihood $p_\omega(\tau)$ which can be factorized as follows, 
{\setlength{\abovedisplayskip}{3pt}
\setlength{\belowdisplayskip}{3pt}
\setlength{\abovedisplayshortskip}{3pt}
\setlength{\belowdisplayshortskip}{3pt}
\begin{equation*}
p_\omega(b_0)
\prod_{t=0}^{H-1}
\pi_\omega^{\mathrm{as}}(a_t\mid b_t)
O(o_t\mid s^\star,a_t)
\pi_\omega^{\mathrm{bt}}(b_{t+1}\mid b_t,a_t,o_t).
\end{equation*}}%
This decomposition highlights the information pipeline underlying active reasoning: AS controls what evidence the agent can obtain, while BT controls whether that evidence is internalized and used for subsequent decisions.

\subsection{Testbeds and decomposed proxies}
\label{sec:phenomenon_setup}

With the above decomposition, we can track the fine-grained behaviors of agents trained with outcome-based RL.
We now consider two interactive benchmarks, where we introduce proxies that separately track AS and BT dynamics. The full details can be found in Appendix~\ref{app:dataset}.

\textbf{Preference Estimation (PE).}
Adapted from \citet{badola2025multi},
PE is an interactive preference inference task under constrained information acquisition.
The agent is given a finite set of items $\mathcal{X} = \{x_1,\dots,x_N\}$, where each item $x_i$ is represented by a known $D$-attribute vector $\mathbf{a}_i \in \mathbb{R}^D$.
The user has an unknown latent preference vector $\mathbf{w}^\star \in [0,1]^D$.
Through interaction, the agent maintains and iteratively refines an estimate $\mathbf{w}_t \in [0,1]^D$ of the user preference.
At each round, the agent actively selects an attribute subspace $\mathcal{S}_t \subseteq \{1,\dots,D\}$ where $|\mathcal{S}_t|=S$ and a pair of items $(x_i, x_j) \in \mathcal{X} \times \mathcal{X}$ designed to elicit the user's preference feedback restricted to $\mathcal{S}_t$.
Based on the feedback, the agent updates its belief state.
The objective is to recover $\mathbf{w}^\star$ under sparse, outcome-based supervision. We consider two variants, \textbf{PE-{G}}ated where $1<S<D$, and \textbf{PE-{F}}ull where $S=D$, \textit{i.e.,} all dimensions are covered each turn.

\textbf{Proxies in PE-G.}
As it's hard to precisely quantify the informativeness, we introduce a binary proxy that is simple to implement and effective in tracking the AS behavior. 
Specifically, for a queried attribute subspace $\mathcal{S}_t$ and item pair $(x_i,x_j)$, we define the AS indicator
$\mathrm{AS}_t = \mathbb{I}\!\left[
\exists\, k_1,k_2 \in \mathcal{S}_t \;\text{s.t.,}\;
a_i^{(k_1)} > a_j^{(k_1)} \wedge
a_i^{(k_2)} < a_j^{(k_2)}
\right]$,
which means neither item strictly dominates the other on $\mathcal{S}_t$, ensuring that the resulting feedback is informative.
BT evaluates whether the agent can incorporate such informative feedback.
We measure BT by the improvement in similarity between the estimate and the ground-truth preference,
$\mathrm{BT}_t = \mathrm{sim}(\mathbf{w}_{t+1}, \mathbf{w}^\star)
-
\mathrm{sim}(\mathbf{w}_{t}, \mathbf{w}^\star)$,
where $\mathrm{sim}(\cdot,\cdot)$ denotes cosine similarity.
Positive $\mathrm{BT}_t$ indicates effective absorption of newly acquired information.

\textbf{MediQ.}
Adapted from \citet{li2024mediq}, agents in MediQ require asking the patient questions to identify the best hypothesis for the patient's symptoms.
The agent is provided with a clinical vignette and an associated medical question whose answer lies in a finite hypothesis set of size $D$.
The agent maintains a belief estimate $\mathbf{w}_t \in [0,1]^D$ for $D$ candidate hypotheses.
Through interaction, the agent actively queries the LLM-simulated user for diagnostic information, receives structured feedback, and updates each hypothesis score accordingly.
The learning objective is to progressively concentrate belief mass onto the correct hypothesis.

\textbf{Proxies in MediQ.}
AS is quantified by the amount of novel diagnostic evidence elicited by the queries.
Let $\mathcal{E}_t$ denote the set of atomic clinical facts revealed at turn $t$, we define
$\mathrm{AS}_t =\bigl|\mathcal{E}_t \setminus \textstyle\bigcup_{\tau < t} \mathcal{E}_\tau\bigr|$,
to capture the information gain of each query.
BT measures whether newly observed evidence sharpens hypothesis discrimination.
Let $\mathrm{gt}$ denote the ground-truth hypothesis index.
We define BT via the change in belief margin
$\mathrm{BT}_t=
\Delta\!\left(
\rvw_t^{(\mathrm{gt})} - \max_{j\neq \mathrm{gt}} \rvw_t^{(j)}
\right),$
aggregated across turns, where larger positive values indicate more effective belief refinement.

\begin{figure*}
    \centering
    \captionsetup{aboveskip=4pt}
    \subfloat[\label{fig:3.1.1}PE-G$_{S=2}$]{\includegraphics[width=0.25\textwidth]{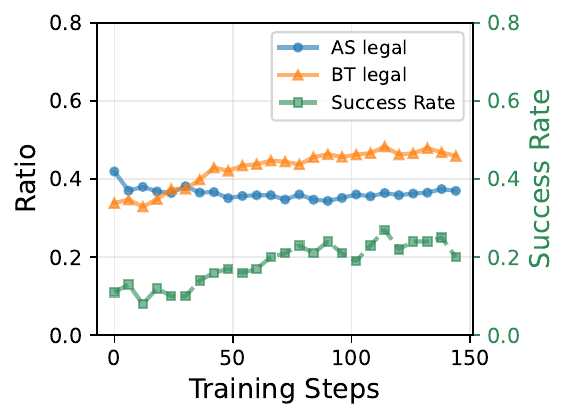}} 
    \subfloat[\label{fig:3.1.2}MediQ]{\includegraphics[width=0.25\textwidth]{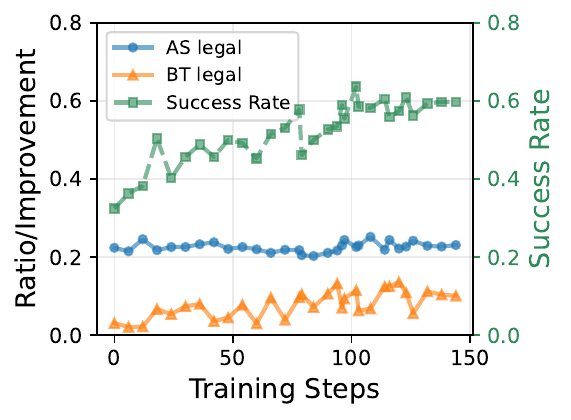}} 
    \subfloat[\label{fig:3.1.3}PE-G$_{S=2}$]{\includegraphics[width=0.248\textwidth]{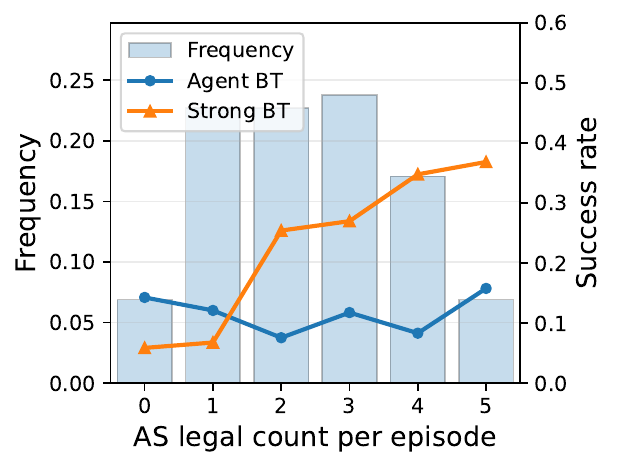}} 
    \subfloat[\label{fig:3.1.4}MediQ]{\includegraphics[width=0.248\textwidth]{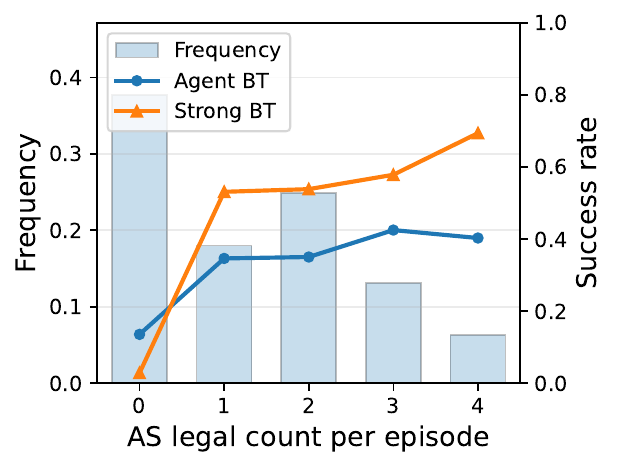}}
    \caption{(a)/(b): the training dynamics of
outcome reward, per-turn AS, and per-turn BT proxies in PE-G$_{S=2}$ and MediQ datasets (Qwen-2.5-7B-Instruct). 
(c)/(d): correlation between the reward and AS proxies in PE-G$_{S=2}$ and MediQ (Qwen-2.5-7B-Instruct). In strong BT patterns (by human-defined rules or frontier LLMs), the same AS sequence exhibits stronger correlation with the final reward. }
    \label{fig:sec3.1}
    \vspace{-0.19in}
\end{figure*}

\subsection{Failure modes in reinforcement learning training}
\label{subsec:isl_observations}
Despite the success of RL with outcome-based rewards, interestingly, we find that LLM agents exhibit several failure modes across both active-reasoning testbeds during training.

\textbf{Observation 1: Reward improvements do not translate into increased information acquisition.}
Fig.~\ref{fig:3.1.1} (PE-G) and Fig.~\ref{fig:3.1.2} (MediQ) report the training dynamics of episode reward, per-turn AS, and per-turn BT.
{ Across both datasets, we observe a pronounced decoupling: while the reward can be improved over training, BT exhibits only limited gains, and AS fails to improve, often plateauing or even degrading.}
This observation raises interesting questions about the confounding behaviors of AS and BT, as AS can not improve even with an improved BT.

To isolate the effect, we analyze the relationship between AS and reward under different BT capabilities.
Specifically, we fix identical action sequences and compare outcomes when the observation stream is processed by (i) the agent’s internal BT versus (ii) stronger belief-update mechanisms, \textit{e.g.}, human-defined update rules or frontier reasoning models.
Since the environment dynamics and action sequences are identical across conditions, the difference can be attributed solely to the belief update mechanism.

\textbf{Observation 2: Weak belief tracking masks the contribution of informative actions.}
As shown in Fig.~\ref{fig:3.1.3} and \ref{fig:3.1.4}, the correlation between AS and reward is substantially higher under strong BT, but remains weak when using the agent’s own BT.
This indicates that the contribution of AS to the reward is \emph{masked} when belief updates are unreliable: even high-information actions yield little reward improvement if their information is not incorporated into internal belief.
As a result, policy optimization cannot yield stable learning signals to 
reinforce informative AS choices.

\textbf{Observation 3: Conservative action selection limits belief refinement and induces interaction-insensitive shortcuts.}
Complementary to Obs.~2, we now examine the reverse direction of the coupling.
When AS gets conservative and yields little informative evidence, BT is deprived of meaningful signals to learn from.
Under outcome-only supervision, this even incentivizes shortcut behaviors that reduce reliance on interaction, reinforcing a low-information training regime.
We observe that as training progresses, agents become less sensitive to informative observations and increasingly rely on early-stage context.
In MediQ, we intervene by replacing all patient feedback with \texttt{Unknown} while keeping all other configurations unchanged.
Notably, the induced performance drop becomes smaller after RL training ($41.25\!\rightarrow\!30.50$ w/o RL versus $61.00\!\rightarrow\!55.50$ with RL; see Fig.~\ref{fig:4.2.1}), suggesting that interaction-derived evidence has a weaker causal effect on the final decision.
Crucially, this reduced sensitivity is accompanied by an  increase in \emph{belief consistency} (Fig.~\ref{fig:4.2.1}; $78.7$ w/o RL versus $92.8$ with RL): the agent increasingly adheres to its initial judgment instead of revising beliefs in response to interaction, which reflects a more “stubborn” belief update pattern.
Together, these form \textbf{\emph{interaction under-utilization}}: once conservative AS restricts information exposure and weak BT struggles to internalize evidence, RL pressure favors non-interactive heuristics that stabilize outcomes while further suppressing information exploration and evidence usage.

Taken together, these observations indicate SeL emerges from a bidirectional coupling between AS and BT.
The reward-relevant value of AS is mediated by the agent’s ability to absorb information through BT, while BT is in turn constrained by the information budget induced by AS.
This mutual dependence can trap training dynamics in a low-information regime, giving rise to a self-locking behavior.

\section{Understanding Self-Locking}
\label{sec:theory}

To formally understand the \sel behaviors of AS and BT, we present a theoretical framework for \sel. Due to space constraints, we defer the full details to Appendix~\ref{sec:isl_theory_u_gated}.

\textbf{AS and BT capabilities.} 
We consider an agent represented by the interleaved behavior
$\Pi_\omega=(\pi_\omega^{\mathrm{as}},\pi_\omega^{\mathrm{bt}})$, where
$\pi_\omega^{\mathrm{as}}$ selects environment-facing actions and
$\pi_\omega^{\mathrm{bt}}$ updates the model belief.
Let $\tau=(b_0,a_0,o_0,\ldots,b_H)\sim\Pi_\omega$ denote the on-policy trajectory, where $b_t$ is the model belief.
To isolate AS from the model's BT mechanism, we also consider an oracle-belief trajectory
$\bar\tau=(\bar b_0,\bar a_0,\bar o_0,\ldots,\bar b_H)\sim\Pi_\omega^{\mathrm{orc}}$,
where barred variables denote \textbf{Bayesian belief updates} using the \textbf{same} action-selection kernel, and $\Pi_\omega^{\mathrm{orc}}:=(\pi_\omega^{\mathrm{as}},\mathsf{BayesUpd})$.
We quantify belief quality by the potential $\Psi(b):=b(s^\star)\in[0,1]$, which measures the confidence mass assigned to the true latent state $s^\star$.
Then, we quantify AS and BT through the following two capability indices.

\begin{definition}[AS Informativeness]
\label{def:as_information_supply}
Given an oracle-belief trajectory $\bar\tau\sim\Pi_\omega^{\mathrm{orc}}$ induced by the action-selection kernel $\pi_\omega^{\mathrm{as}}$ and Bayesian belief updates $\mathsf{BayesUpd}$, the {AS informativeness} is defined as the expected total improvement in oracle belief quality,
$
I_{\mathrm{AS}}(\omega)
\;:=\;
\mathbb E_{\bar\tau\sim\Pi_\omega^{\mathrm{orc}}}
\!\left[\Psi(\bar b_H)-\Psi(\bar b_0)\right].
$
\end{definition}

We next characterize how much of the theoretically supplied information is actually absorbed
by the agent's belief-tracking dynamics.

\begin{definition}[BT Capability]
\label{def:bt_progress_drift}
For an on-policy trajectory $\tau\sim\Pi_\omega$, define
$\Delta\Psi_t:=\Psi(b_{t+1})-\Psi(b_t)$ as the one-step model belief progress at turn $t$, and $\Delta\Psi_t^+:=\max(\Delta\Psi_t,0)$ as the \textbf{absorbed} belief progress.
The belief-tracking capability is defined as
$
C_{\mathrm{BT}}(\omega)
:= \mathbb E_{\tau\sim\Pi_\omega}\!\big[\sum_{t=0}^{H-1} \Delta\Psi_t^+\big].
$
\end{definition}

\textbf{Self-Locking regime.}
We formalize the notion of self-locking via a two-dimensional low-AS and low-BT region:
\begin{definition}[Self-Locking Regime]
\label{def:locking_regime}
With $\delta,\varepsilon>0$, we define the {locking regime} as the subset of parameter space
with low AS informativeness and low BT capability:
\[
\setlength{\abovedisplayskip}{3pt}
\setlength{\belowdisplayskip}{3pt}
\setlength{\abovedisplayshortskip}{3pt}
\setlength{\belowdisplayshortskip}{3pt}
\mathcal R_{\delta,\varepsilon}
\;:=\;
\big\{\omega\in\Omega:\ I_{\mathrm{AS}}(\omega)\le \delta,\ \ C_{\mathrm{BT}}(\omega)\le \varepsilon\big\},
\]
which represents a low-information and low-BT regime.

\end{definition}

\textbf{Policy-gradient decomposition of the outcome reward.}
Given the agent $\Pi_\omega=(\pi_\omega^{\mathrm{as}},\pi_\omega^{\mathrm{bt}})$ trained with outcome-based reward, the policy gradient can be written as
\[
\setlength{\abovedisplayskip}{3pt}
\setlength{\belowdisplayskip}{3pt}
\setlength{\abovedisplayshortskip}{3pt}
\setlength{\belowdisplayshortskip}{3pt}
\nabla_\omega J(\omega)
=
\mathbb E_{\tau\sim\Pi_\omega}\!\big[ R(\tau)\,\nabla_\omega \log p_\omega(\tau)\big].
\]

Then we expand $\nabla_\omega \log p_\omega(\tau)$ as follows:
{
\setlength{\abovedisplayskip}{3pt}
\setlength{\belowdisplayskip}{3pt}
\setlength{\abovedisplayshortskip}{3pt}
\setlength{\belowdisplayshortskip}{3pt}
\begin{equation*}
\sum_{t=0}^{H}\nabla_\omega\log \pi_\omega^{\mathrm{bt}}(b_t\mid c_t)
+
\sum_{t=0}^{H-1}\nabla_\omega\log \pi_\omega^{\mathrm{as}}(a_t\mid b_t).
\end{equation*}}

Here $c_t$ denotes the BT-conditioning context.
We analyze the \textit{\textbf{normalized}} AS- and BT-channel update directions:
\[
\setlength{\abovedisplayskip}{3pt}
\setlength{\belowdisplayskip}{3pt}
\setlength{\abovedisplayshortskip}{3pt}
\setlength{\belowdisplayshortskip}{3pt}
    g_{\mathrm{as}}(\omega)\!=\!
\mathbb E_{\omega}\!\left[
\frac{1}{H}\sum_{t=0}^{H-1}
\nabla_\omega\log \pi_\omega^{\mathrm{as}}(a_t\mid b_t)
A^{\mathrm{as}}_t(b_t,a_t)
\right],
\]
\[
\setlength{\abovedisplayskip}{3pt}
\setlength{\belowdisplayskip}{3pt}
\setlength{\abovedisplayshortskip}{3pt}
\setlength{\belowdisplayshortskip}{3pt}
g_{\mathrm{bt}}(\omega)\!=\!\mathbb E_{\omega}\!\left[
\frac{1}{H+1}\sum_{t=0}^{H}
\nabla_\omega\log\pi_\omega^{\mathrm{bt}}(b_t\mid c_t)
A^{\mathrm{bt}}_t(c_t,b_t)
\right],
\]
where $A^{\mathrm{as}}$ and $A^{\mathrm{bt}}$ are the channel-isolated advantages of AS and BT, respectively.
Naturally, we define the AS-channel projected update as
$
\mathcal T_{\mathrm{as}}(\omega) := \omega + \eta g_{\mathrm{as}}(\omega),
$
and the BT-channel  update as
$
\mathcal T_{\mathrm{bt}}(\omega) := \omega + \eta g_{\mathrm{bt}}(\omega).
$
The coupling effects between AS and BT can be further characterized by the one-sided projected drifts:
\[
\setlength{\abovedisplayskip}{3pt}
\setlength{\belowdisplayskip}{3pt}
\setlength{\abovedisplayshortskip}{3pt}
\setlength{\belowdisplayshortskip}{3pt}
\Delta_{\mathrm{as}}^+ I_{\mathrm{AS}}(\omega)
:=
\big(I_{\mathrm{AS}}(\mathcal T_{\mathrm{as}}(\omega))-I_{\mathrm{AS}}(\omega)\big)_+,
\]
\[
\setlength{\abovedisplayskip}{3pt}
\setlength{\belowdisplayskip}{3pt}
\setlength{\abovedisplayshortskip}{3pt}
\setlength{\belowdisplayshortskip}{3pt}
\Delta_{\mathrm{bt}}^+ C_{\mathrm{BT}}(\omega)
:=
\big(C_{\mathrm{BT}}(\mathcal T_{\mathrm{bt}}(\omega))-C_{\mathrm{BT}}(\omega)\big)_+.
\]

With these quantities, we draw the following result:

\begin{theorem}[Informal]
\label{thm:sel_lock_informal}
Fix $\delta,\varepsilon>0$.
Under the regularity assumptions detailed in Appendix~\ref{sec:self_locking_main}, including Lipschitz reward-belief coupling, bounded gradients, action-invariant harmful belief drift, BT-capability-limited absorption, residual AS-budget comparability, and conservative propagation of belief quality, for any $\omega\in\mathcal R_{\delta,\varepsilon}$,
the one-sided projected drifts satisfy the following inequality:
\[
\begin{pmatrix}
\Delta_{\mathrm{as}}^+ I_{\mathrm{AS}}(\omega)\\[2pt]
\Delta_{\mathrm{bt}}^+ C_{\mathrm{BT}}(\omega)
\end{pmatrix}
\ \preceq\
\eta
\begin{pmatrix}
0 & \alpha\\
\beta_I & \beta_C
\end{pmatrix}
\begin{pmatrix}
I_{\mathrm{AS}}(\omega)\\[2pt]
C_{\mathrm{BT}}(\omega)
\end{pmatrix}
\ +\ o(\eta),
\]
where $\preceq$ denotes elementwise inequality and $\alpha,\beta_I,\beta_C>0$ are problem-dependent constants.
Moreover, for agents initialized inside $\mathcal R_{\delta,\varepsilon}$, the agent cannot leave the locking regime for a non-trivial number of policy-update steps, with the explicit escape-time lower bound given in Appendix~\ref{sec:self_locking_main}.
\end{theorem}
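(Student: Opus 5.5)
The plan is to prove the two rows of the drift inequality separately, using a first-order Taylor expansion of each capability index along its own channel update, and then to combine them into the escape-time bound by iterating the resulting linear recursion. For a single channel update $\mathcal T_{\mathrm{as}}(\omega)=\omega+\eta g_{\mathrm{as}}(\omega)$ we have
\[
I_{\mathrm{AS}}(\mathcal T_{\mathrm{as}}(\omega))-I_{\mathrm{AS}}(\omega)=\eta\,\langle\nabla_\omega I_{\mathrm{AS}}(\omega),g_{\mathrm{as}}(\omega)\rangle+o(\eta).
\]
The bounded-gradient assumption controls the remainder uniformly, so it is $o(\eta)$ on the regime. The same holds for $C_{\mathrm{BT}}$ along $g_{\mathrm{bt}}$. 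Taking positive parts preserves the bound, so each row reduces to bounding the inner product $\langle\nabla I,g\rangle$ from above.

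\textbf{AS row.} The key claim is that the AS-channel advantage is small when BT is weak. The outcome reward depends on the trajectory only through the model belief, and the Lipschitz reward--belief coupling gives $|R(\tau)-R(\tau')|\le L\,|\Psi(b_H)-\Psi(b'_H)|$ up to constants. Swapping an action $a_t$ can change $\Psi(b_H)$ only through belief progress that the model actually absorbs. Here I would use the BT-capability-limited absorption assumption: the action-dependent part of $\Psi(b_H)$ is controlled by $\sum_t\Delta\Psi_t^+$. The action-invariant harmful-drift assumption is needed to show that negative drift does not depend on $a_t$, so it cancels in the advantage. It then follows that $\mathbb E|A^{\mathrm{as}}_t|\lesssim L\,C_{\mathrm{BT}}(\omega)$.

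Cauchy--Schwarz and the bounded score $\|\nabla\log\pi^{\mathrm{as}}\|\le G$ then give $\|g_{\mathrm{as}}\|\le GL\,C_{\mathrm{BT}}$. Multiplying by a bound on $\|\nabla I_{\mathrm{AS}}\|$ yields $\alpha C_{\mathrm{BT}}$. Crucially, no $I_{\mathrm{AS}}$ term appears, which is why the $(1,1)$ entry of the matrix is $0$.

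\textbf{BT row.} Here $C_{\mathrm{BT}}$ is itself an expectation over on-policy trajectories, and the total absorbable progress is capped by the evidence that is available. The residual AS-budget comparability and conservative-propagation assumptions should bound model progress by oracle progress plus what has already been absorbed, i.e. $\sum_t\Delta\Psi_t^+\lesssim \beta_I' I_{\mathrm{AS}}+\beta_C' C_{\mathrm{BT}}$. I would apply this both to the value of $C_{\mathrm{BT}}$ and to its directional derivative. The derivative of an expected sum along $g_{\mathrm{bt}}$ is a covariance of the summand with the score, so it is bounded by $G$ times the magnitude of the BT advantage. That magnitude is in turn bounded by how much reward can vary, namely $I_{\mathrm{AS}}+C_{\mathrm{BT}}$. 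This yields $\beta_I I_{\mathrm{AS}}+\beta_C C_{\mathrm{BT}}$.

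\textbf{Escape time.} Write $x_k=(I_{\mathrm{AS}},C_{\mathrm{BT}})(\omega_k)$. If the full update is a sum of the two channel updates, with cross-terms absorbed into $o(\eta)$ via the bounded gradients, then $x_{k+1}\preceq (I+\eta M)x_k+o(\eta)$, where $M$ is the displayed matrix. Iterating from $x_0\preceq(\delta,\varepsilon)$ gives $x_k\preceq(I+\eta M)^k x_0+k\,o(\eta)$, which grows at most like $(1+\eta\rho(M))^k$, with $\rho(M)=\tfrac{\beta_C+\sqrt{\beta_C^2+4\alpha\beta_I}}{2}$. Leaving the region requires one coordinate to exceed a constant threshold starting from size $\delta,\varepsilon$. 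This forces $k\gtrsim \log(\text{threshold}/\max(\delta,\varepsilon))/(\eta\rho(M))$, which is the explicit escape-time bound.

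\textbf{Main obstacle.} I expect the hardest step to be the AS row: showing that the AS advantage is controlled purely by $C_{\mathrm{BT}}$. This requires careful use of the action-invariance assumption so that harmful negative drift cancels in the advantage and only absorbed progress remains. I would attempt it by a coupling argument that compares trajectories sharing the same BT randomness but differing in $a_t$.
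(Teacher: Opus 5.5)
Your overall architecture matches the paper's: a first-order expansion of each index along its own channel update, then positive parts, then Cauchy--Schwarz with $M_{\mathrm{AS}},M_{\mathrm{BT}}$, then bounds on $\|g_{\mathrm{as}}\|,\|g_{\mathrm{bt}}\|$ via channel-isolated advantages. But the AS row, as sketched, does not close.

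\textbf{The AS row is missing its propagation step.} The action-invariance assumption is a \emph{one-step} statement, $\mathbb E_\omega[\Delta\Psi_t^-\mid b_t,a_t=a]=\mathbb E_\omega[\Delta\Psi_t^-\mid b_t]$, and the absorption bound is also per step. Changing $a_t$ changes $b_{t+1}$ and hence the whole continuation. So the later harmful drifts $\mathbb E_\omega[\Delta\Psi_k^-\mid b_t,a_t]$, $k>t$, do depend on $a_t$ and do not cancel. A coupling that shares BT randomness cannot fix this, because the BT contexts differ after $t$. From the three assumptions you invoke, the action-range of $\mathbb E_\omega[\Psi(b_H)\mid b_t,a]$ is only bounded by roughly $\Psi(b_t)+H\kappa_{\mathrm{bt}}C_{\mathrm{BT}}$. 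Concretely, two actions with identical one-step progress can lead to beliefs with equal $\Psi$ but very different future harmful drift, even when $C_{\mathrm{BT}}=0$.

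The missing ingredient is the conservative-propagation assumption, which you placed in the BT row. The paper uses it here, in four steps:
\begin{enumerate}
\item Invariance of the one-step negative part makes the action-range of $\mathbb E_\omega[\Psi(b_{t+1})\mid b_t,a]$ equal to that of $\mathbb E_\omega[\Delta\Psi_t^+\mid b_t,a]\in[0,\kappa_{\mathrm{bt}}C_{\mathrm{BT}}]$.
\item The $\kappa_{\mathrm{pr}}$-Lipschitz dependence of $\mathbb E_\omega[\Psi(b_H)\mid b_{t+1}]$ on $\Psi(b_{t+1})$ carries this range to terminal time.
\item The constant $L_R$ passes it to $Q^{\mathrm{as}}_t$. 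Note that the Lipschitz assumption is on $\mathbb E[R\mid b_H]=f(\Psi(b_H))$, not pathwise on $R$.
\item $|A^{\mathrm{as}}_t|$ is at most the range of $Q^{\mathrm{as}}_t$, since $V^{\mathrm{as}}_t$ is a convex combination of the $Q^{\mathrm{as}}_t$ values.
\end{enumerate}

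\textbf{The BT row does not need $\kappa_{\mathrm{pr}}$.} The paper symmetrizes, using $\mathbb E|Y-\bar Y|\le\mathbb E|Y-Y'|$ for i.i.d.\ beliefs. It then uses $|f(x)-f(y)|\le L_R(x+y)$ and writes $\Psi(b_H)\le\Psi(b_t)+\sum_{k\ge t}\Delta\Psi_k^+$. The future absorbed part is bounded by the residual oracle budget, which is at most $\kappa_{\mathrm{or}}I_{\mathrm{AS}}$. Finally $\mathbb E\Psi(b_t)\le\mathbb E\Psi(b_0)+C_{\mathrm{BT}}$, and the leftover constant $\mathbb E\Psi(b_0)$ is absorbed into the baseline; your sketch would have to confront that constant too. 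Your ``covariance bounded by $G$ times the BT advantage'' remark conflates $\nabla C_{\mathrm{BT}}$ with $g_{\mathrm{bt}}$. The correct chain is $\langle\nabla C_{\mathrm{BT}},g_{\mathrm{bt}}\rangle\le M_{\mathrm{BT}}\|g_{\mathrm{bt}}\|$ together with $\|g_{\mathrm{bt}}\|\le G_{\mathrm{bt}}\max_t\mathbb E|A^{\mathrm{bt}}_t|$.

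\textbf{The escape-time step also fails as written.} First, for a full update $\omega+\eta(g_{\mathrm{as}}+g_{\mathrm{bt}})$ with shared parameters, the cross-channel terms $\eta\langle\nabla I_{\mathrm{AS}},g_{\mathrm{bt}}\rangle$ and $\eta\langle\nabla C_{\mathrm{BT}},g_{\mathrm{as}}\rangle$ are first order in $\eta$, not $o(\eta)$. For example, the first is only bounded by $\eta M_{\mathrm{AS}}(K_{\mathrm{bt},I}I_{\mathrm{AS}}+K_{\mathrm{bt},C}C_{\mathrm{BT}})$, which fills in the zero $(1,1)$ entry. A trapping argument survives with that enlarged nonnegative matrix, but not with the displayed $M$. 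The paper sidesteps this by stating trapping only for evolutions that obey the projected one-step bounds.

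Second, knowing only $\mathbf x_0\preceq(\delta,\varepsilon)$ yields no nontrivial escape time, since a point on the boundary of $\mathcal R_{\delta,\varepsilon}$ can leave in one step. Your ratio ``threshold$/\max(\delta,\varepsilon)$'' is vacuous when the threshold is the region's own boundary. The bound must use the initial level. With a uniform remainder $\|o(\eta)\|_\infty\le c_0\eta^2$ and $m=\|M\|_\infty=\max\{\alpha,\beta_I+\beta_C\}$, the paper unrolls $y_{k+1}\le(1+\eta m)y_k+c_0\eta^2$ for $y_k=\|\mathbf x_k\|_\infty$. This gives $y_k\le e^{k\eta m}(y_0+\rho\eta)-\rho\eta$ with $\rho=c_0/m$, and hence $y_k\le\varepsilon$ for $k\le K=\lfloor\frac{1}{\eta m}\log\frac{\varepsilon+\rho\eta}{I_{\mathrm{AS}}(\omega_0)+\rho\eta}\rfloor_+$, taking $y_0\le I_{\mathrm{AS}}(\omega_0)$ and $\delta\ge\varepsilon$.

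Your spectral-radius rate would sharpen $m$, but it needs a Perron-vector comparison constant for $\mathbf x_0$. Your accumulated remainder $k\,o(\eta)$ also needs an explicit rate such as $c_0\eta^2$. Finally, bounded gradients alone give only an $O(\eta)$ Taylor remainder; the uniform $o(\eta)$ expansion is a separate assumption in the paper.
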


The formal version of Thm.~\ref{thm:sel_lock_informal} along with the proof are given in Appendix~\ref{sec:self_locking_main}.
Intuitively, Thm.~\ref{thm:sel_lock_informal} shows that, when under the \sel regime, the learning signals from outcome reward are weakened by the limited AS and BT capabilities, scaling linearly with the current levels of $I_{\mathrm{AS}}$ and $C_{\mathrm{BT}}$.
Consequently, when the model is initialized within the \sel regime, it requires significant policy update steps to escape the \sel regime.
In practice, this indicates that once training enters the \sel regime, it is unlikely to recover without explicit interventions that restore informative local credit signals for AS and BT.
\section{Breaking Self-Locking with Directions}
\label{sec:method}
The previous section shows that \infoSL arises from entangled and weakened credit assignment between AS and BT under outcome-based RL.
While dense calibrated supervision for intermediate decisions is difficult in long-horizon agentic tasks, many active-reasoning environments provide easy-to-obtain \emph{\textbf{directional}} signals at the step level; for example, an action's informativeness can often be inferred from whether the environment reveals new evidence in response.
Motivated by this observation, we propose \proj, a lightweight framework that converts such uncalibrated directional signals into policy-gradient credit reallocation, improving AS and BT without introducing calibrated intermediate rewards.

\subsection{Stepwise directional critiques}
\label{sec:method_channel}

{The AS/BT decomposition of agentic behavior allows us to exploit respective directional critiques. 
For analysis and implementation, the reasoning process of the agent can be organized as alternating between (i) an \emph{Action Round}, in which the agent outputs an environment-facing action, and (ii) an \emph{Update Round}, in which the agent receives the new observation and updates its internal belief.}

\textbf{AS directional critique.} For the AS channel, we assign a directional critique $z_t^{\mathrm{as}}\in\{-1,0,+1\}$ to each executed action, where $+1$ indicates that the action elicits informative feedback or useful evidence from the environment or user, $-1$ indicates an uninformative action, and $0$ denotes an abstention.
Intuitively, $z_t^{\mathrm{as}}$ encourages  agents to strategically take actions that induce information helpful to reasoning.

\textbf{BT directional critique.} For the BT channel, the critique should reflect whether newly acquired information is effectively
incorporated into the agent's internal belief state.
While we cannot directly access the agent's belief state, we can still acquire a scalar readout $\widehat\Psi_t\in[0,1]$ that tracks task-relevant confidence over turns from the agent, such as through prompting.
{Importantly, $\widehat\Psi_t$ is used purely as instrumentation: it is neither assumed to coincide with, nor to recover, the latent analytical belief $b_t$.}
We  then define
$
z_t^{\mathrm{bt}}
:=
\mathrm{Sign}\!\left(\widehat\Psi_{t+1}-\widehat\Psi_t\right),
$
where positive values indicate that the readout moves in a truth-aligned direction.

\subsection{Injecting directional critiques into policy-gradient}
\label{sec:critique_injection}

\textbf{Margin-aware auxiliary objective.} We inject  directional critiques via an auxiliary objective that
(i) acts locally at the critiqued steps and (ii) induces a gradient that can be combined with standard
policy gradients without modifying  task rewards.
To this end, for a trajectory $\tau$ with labels $\{z_t\}$, define the positively and negatively
critiqued index sets
$
\mathcal P_\tau:=\{t\mid z_t=+1\}$,
and
$
\mathcal N_\tau:=\{t\mid z_t=-1\}
$. 
{Note that we use $z_t$ to denote  $z_t^{\mathrm{as}}$, $z_t^{\mathrm{bt}}$, or both. The construction is channel-agnostic}.
Whenever counts $|\mathcal P_\tau|>0$ and $|\mathcal N_\tau|>0$, we define an \textbf{\emph{intra-trajectory likelihood-margin}} objective
{
\setlength{\abovedisplayskip}{3pt}
\setlength{\belowdisplayskip}{3pt}
\setlength{\abovedisplayshortskip}{3pt}
\setlength{\belowdisplayshortskip}{3pt}
\begin{equation}
\label{eq:gap_objective}
\widehat{\mathcal L}(\omega;\tau)
:=
\frac{1}{|\mathcal P_\tau|}\sum_{t\in\mathcal P_\tau}\log \pi_{\omega,t}
-
\frac{1}{|\mathcal N_\tau|}\sum_{t\in\mathcal N_\tau}\log \pi_{\omega,t}.
\end{equation}}%
Here $\log \pi_{\omega,t}$ denotes the aggregate log-probability assigned by the agent to the {decision segment}
taken at step $t$ under parameters $\omega$.
Note that the ``decision'' can correspond to an action-selection
decision or a belief-update decision, and the construction here thus applies to both AS and BT channels.

Eq.~\ref{eq:gap_objective} directly encourages the agent to increase the log-probability mass on
positively critiqued decisions \emph{relative to} negatively critiqued ones, without introducing calibrated intermediate
rewards or training a separate discriminator for $z_t$.
Crucially, Eq.~\ref{eq:gap_objective} is additive over time and therefore naturally compatible with multi-turn
credit assignment.

\textbf{Implied per-step coefficients.}
Notably, Eq.~\ref{eq:gap_objective}  has a gradient of the same form as standard
policy gradients:
{
\setlength{\abovedisplayskip}{3pt}
\setlength{\belowdisplayskip}{3pt}
\setlength{\abovedisplayshortskip}{3pt}
\setlength{\belowdisplayshortskip}{3pt}
\begin{equation*}
\nabla_\omega\,\widehat{\mathcal L}
=
\sum_{t=0}^{H-1} u_t\nabla_\omega \log \pi_{\omega,t}; \,\,\,
u_t
:=
\begin{cases}
\frac{1}{|\mathcal P_\tau|} & \text{if } z_t=+1,\\[1pt]
\frac{-1}{|\mathcal N_\tau|} & \text{if } z_t=-1,\\[1pt]
0 & \text{if } z_t=0.
\end{cases}
\end{equation*}}%
Here the sign of $u_t$ matches the critique direction, so the auxiliary gradient pushes probability
mass in the intended direction.
Furthermore, 
$\sum_{t=0}^{H-1} u_t=0$ exhibits a centering property,
which means the auxiliary term induces a \emph{pure likelihood margin} rather than a uniform likelihood shift.

\textbf{Minimal injection via advantage reweighting.} 
Let ${\mathcal J}_{\mathrm{RL}}$ denote the standard actor surrogate used by a
policy-gradient RL algorithm.
We consider the augmented surrogate
{
\setlength{\abovedisplayskip}{3pt}
\setlength{\belowdisplayskip}{3pt}
\setlength{\abovedisplayshortskip}{3pt}
\setlength{\belowdisplayshortskip}{3pt}
\begin{equation}
\label{eq:augmented_surrogate}
\widehat{\mathcal L}_{\mathrm{aug}}(\omega)
\;:=\;
{\mathcal J}_{\mathrm{RL}}(\omega)
\;+\;
\lambda\,\mathbb E_{\tau}\!\left[\widehat{\mathcal L}(\omega;\tau)\right],
\end{equation}}%
where $\lambda> 0$ controls the strength of critique injection, and the expectation is taken over on-policy trajectories.
The resulting gradient update can thus be written as
{
\setlength{\abovedisplayskip}{3pt}
\setlength{\belowdisplayskip}{3pt}
\setlength{\abovedisplayshortskip}{3pt}
\setlength{\belowdisplayshortskip}{3pt}
\begin{equation*}
\nabla_\omega\,\widehat{\mathcal L}_{\mathrm{aug}}(\omega)
\;\propto\;
\mathbb E_{\tau}\!\left[
\sum_{t=0}^{H-1}
\Big( A_t+\lambda\,u_t\Big)\,
\nabla_\omega \log \pi_{\omega,t}
\right],
\end{equation*}}%
which shows that injecting the critiques requires only a
\emph{minimal} modification to the actor update: it suffices to apply an additive shaping to the
advantage
$\widehat A_t \leftarrow  A_t+\lambda\,u_t$
while keeping the outcome reward, critic target, and the underlying RL optimization machinery unchanged.
Since the coefficients $u_t$ are derived from the likelihood-margin objective, the resulting update
can be directly interpreted as \textbf{\textit{reallocating}} policy-gradient magnitude from negatively critiqued steps to
positively critiqued ones within the same trajectory, aligned with the directional critiques.

\begin{figure*}
    \centering
    \captionsetup{aboveskip=4pt}
    \subfloat[\label{fig:m.1.1}PE-G$_{S=3}$]{\includegraphics[width=0.25\textwidth]{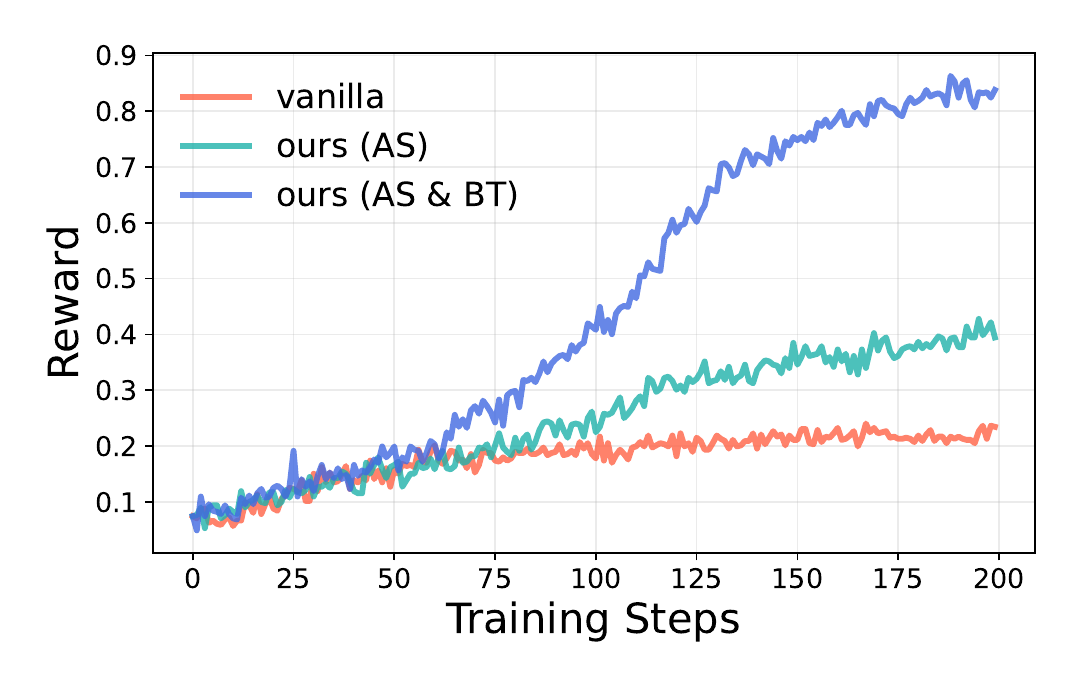}} 
    \subfloat[\label{fig:m.1.2}PE-F$_{D=8}$]{\includegraphics[width=0.248\textwidth]{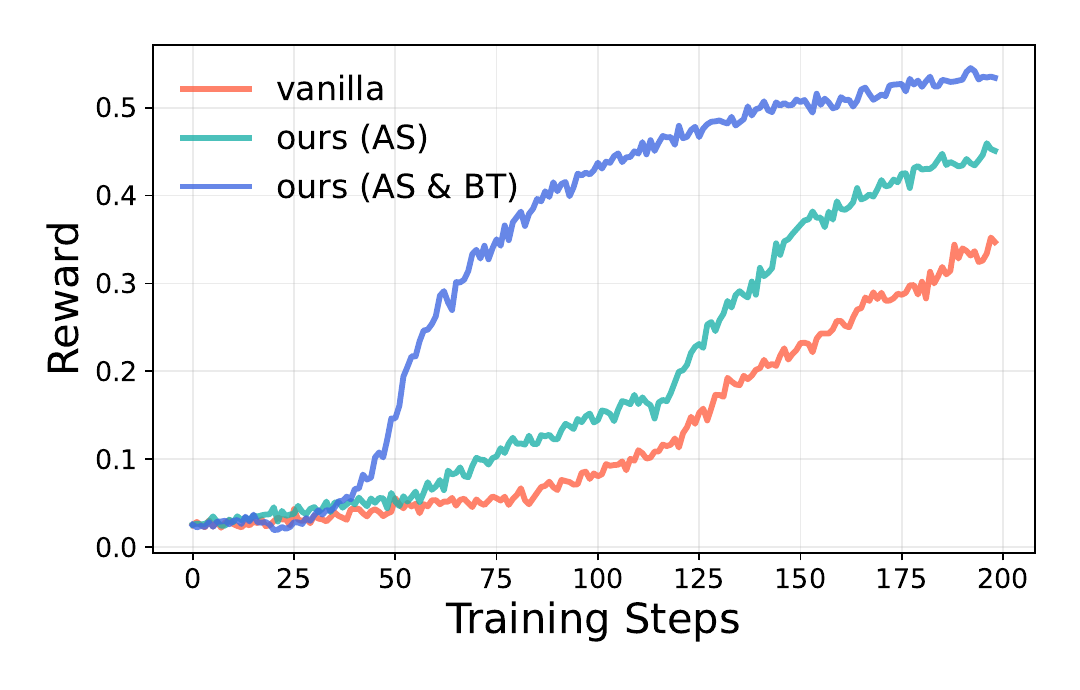}} 
    \subfloat[\label{fig:m.1.3}MediQ]{\includegraphics[width=0.248\textwidth]{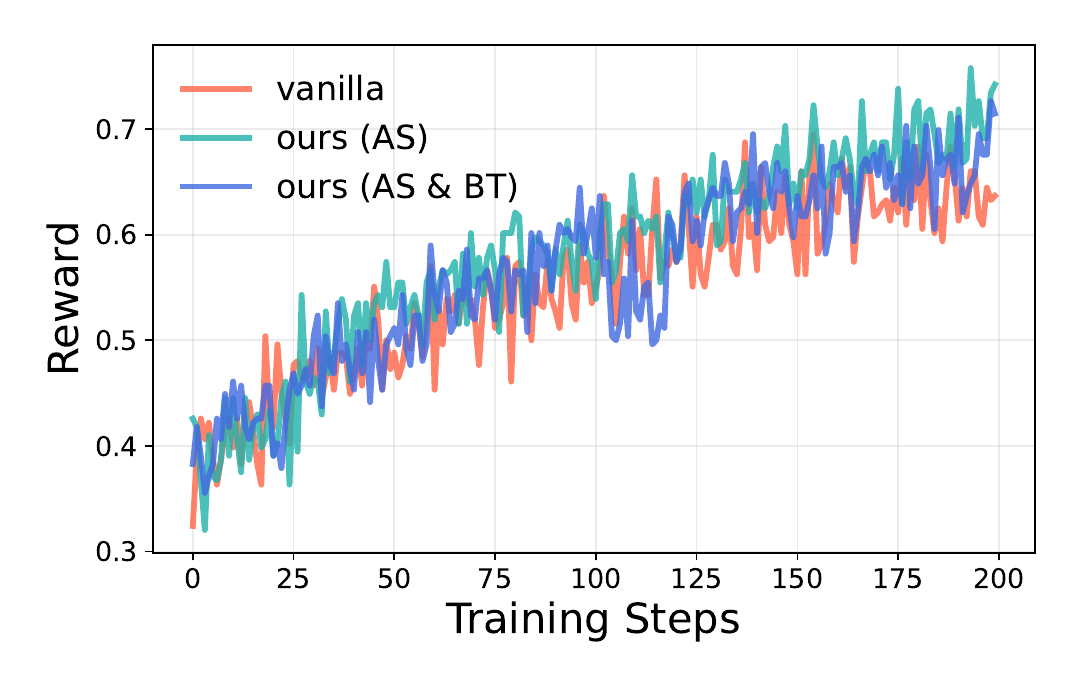}} 
    \subfloat[\label{fig:m.1.4}FloDial-Hard]{\includegraphics[width=0.248\textwidth]{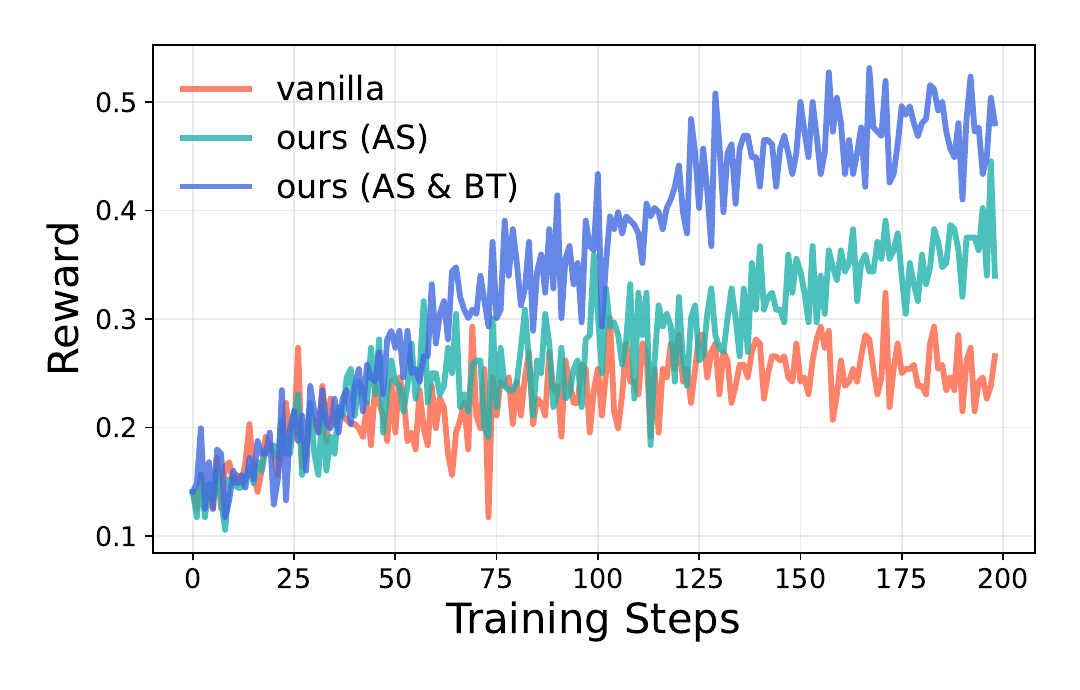}}
    \caption{Training dynamics of rewards, evaluated under the PPO algorithm with Qwen-2.5-7B-Instruct across vanilla PPO, PPO with \proj -- \textsc{as-only} and PPO with \proj -- \textsc{as+bt}.}
    \label{fig:sec3.1}
    \vspace{-.1in}
\end{figure*}
\begin{table*}[]
\caption{Main results (average outcome reward on test sets) across three domains and seven active reasoning tasks. }
\vspace{-0.05in}
\label{tab:main}
\centering
\begin{sc}
\resizebox{\textwidth}{!}{
\begin{tabular}{lccccccc}
\midrule
& \multicolumn{4}{c}{Preference Estimation} & \multicolumn{1}{l}{Medical Diagnosis} & \multicolumn{2}{c}{Troubleshooting} \\
\cmidrule(lr){2-5} \cmidrule(lr){6-6} \cmidrule(lr){7-8} \cmidrule(lr){6-6} \cmidrule(lr){7-7}
& PE-G$_{S=2}$   & PE-G$_{S=3}$     & PE-F$_{D=8}$     & PE-F$_{D=6}$     & MediQ                                 & FloDial-Easy     & FloDial-Hard      \\ \midrule
\multicolumn{8}{l}{\textbf{Direct Inference}}                                                                                                                                               \\
o4-mini                         & 17.11           & 21.15       & 8.42       & 12.47       & 74.67                                     & 35.00                & 26.33                      \\
Qwen-2.5-7B-Inst.               & 15.00          & 10.00      & 2.67    & 4.20     & 39.00                                    & 24.33            & 13.33                      \\
Llama-3.1-8B-Inst.              & 18.00          & 12.33   & 3.14    & 5.70     & 35.25                                 & 24.33            & 17.33                       \\ \midrule
\multicolumn{8}{l}{\textbf{PPO-trained (Qwen-2.5-7B-Inst.)}}                                                                                                                                \\
Vanilla                         & 24.00          & 18.33   & 30.52   & 32.03   & 50.50                                  & 37.33            & 21.33                      \\
\proj -- AS only 
& 46.00 {\oo {\scriptsize $\uparrow$ 22.0}} 
& 32.00 {\oo {\scriptsize $\uparrow$ 13.7}} 
& 39.62 {\oo {\scriptsize $\uparrow$ 9.1}} 
& 42.10 {\oo {\scriptsize $\uparrow$ 10.1}} 
& 57.25 {\oo {\scriptsize $\uparrow$ 6.8}} 
& 43.67 {\oo {\scriptsize $\uparrow$ 6.3}} 
& 36.00 {\oo {\scriptsize $\uparrow$ 14.7}}                      \\
\proj -- AS + BT 
& 49.33 {\oo {\scriptsize $\uparrow$ 25.3}} 
& 80.33 {\oo {\scriptsize $\uparrow$ 62.0}} 
& 47.89 {\oo {\scriptsize $\uparrow$ 17.4}} 
& 44.47 {\oo {\scriptsize $\uparrow$ 12.4}} 
& 61.25 {\oo {\scriptsize $\uparrow$ 10.8}} 
& 41.00 {\oo {\scriptsize $\uparrow$ 3.7}} 
& 42.33 {\oo {\scriptsize $\uparrow$ 21.0}}                      \\ \midrule
\multicolumn{8}{l}{\textbf{PPO-trained (Llama-3.1-8B-Inst.)}}                                                                                                                               \\
Vanilla                         & 27.33       & 11.00      & 55.21   & 6.00       & 63.50                                  & 24.33            & 31.00                      \\
\proj -- AS only 
& 49.00 {\oo {\scriptsize $\uparrow$ 21.7}} 
& 73.00 {\oo {\scriptsize $\uparrow$ 62.0}} 
& 55.61 {\og {\scriptsize $\uparrow$ 0.4}} 
& 56.91 {\oo {\scriptsize $\uparrow$ 50.9}} 
& 71.75 {\oo {\scriptsize $\uparrow$ 8.3}} 
& 41.67 {\oo {\scriptsize $\uparrow$ 17.3}} 
& 42.00 {\oo {\scriptsize $\uparrow$ 11.0}}                      \\
\proj -- AS + BT 
& 54.67 {\oo {\scriptsize $\uparrow$ 27.3}} 
& 77.67 {\oo {\scriptsize $\uparrow$ 66.7}} 
& 61.28 {\oo {\scriptsize $\uparrow$ 6.1}} 
& 54.65 {\oo {\scriptsize $\uparrow$ 48.7}} 
& 70.75 {\oo {\scriptsize $\uparrow$ 7.3}} 
& 44.33 {\oo {\scriptsize $\uparrow$ 20.0}} 
& 49.00 {\oo {\scriptsize $\uparrow$ 18.0}}                      \\ \midrule
\end{tabular}
}
\end{sc}
\vspace{-.19in}
\end{table*}

\subsection{Theoretical and Practical Discussion}
\label{sec:why_repairs_self_locking}

In contrast to the theoretical characterization of \infoSL in Thm.~\ref{thm:sel_lock_informal}, the directional critique breaks the negative confounding of limited AS and BT.
Let $\widehat{\mathcal T}_{\mathrm{as}}(\omega)$ and $\widehat{\mathcal T}_{\mathrm{bt}}(\omega)$ denote
the corresponding critique-shaped updates induced by the likelihood-margin objective in Eq.~\ref{eq:augmented_surrogate}, respectively.
The following proposition demonstrates that the effectiveness of \proj is characterized by the ``weighted accuracy'' of $z_t$ critiques. See the complete formulation in Appendix~\ref{app:breaks}.

\begin{proposition}
\label{prop:weighted_acc_AS_main}
Under the setting of Sec.~\ref{sec:critique_injection} and Appendix~\ref{app:breaks}, denote the stepwise weight as
$
w_t(\omega)
:=
|u_t|\,
\big|\bar A_t(\bar b_t,\bar a_t)\big|\,
\|\nabla_\omega \log \pi_\omega^{\mathrm{as}}(\bar a_t\mid \bar b_t)\|^2
$
and let
$
W(\omega)
:=
\mathbb E\!\left[\sum_{t=0}^{H-1} w_t(\omega)\right].
$
Then
the critique quality is measured by the {weighted accuracy}
\[
\setlength{\abovedisplayskip}{3pt}
\setlength{\belowdisplayskip}{3pt}
\setlength{\abovedisplayshortskip}{3pt}
\setlength{\belowdisplayshortskip}{3pt}
\mathrm{Acc}_{\mathrm{as}}(\omega)
\;:=\;
\frac{
\mathbb E\!\left[\sum_{t=0}^{H-1} w_t(\omega)\,\mathbf 1\{z_t=y_t\}\right]
}{
\mathbb E\!\left[\sum_{t=0}^{H-1} w_t(\omega)\right]
}
\in[0,1].
\]
Moreover, the first-order improvement in AS informativeness induced by \proj\ satisfies (and the BT-side analogue)
\[
\setlength{\abovedisplayskip}{3pt}
\setlength{\belowdisplayskip}{3pt}
\setlength{\abovedisplayshortskip}{3pt}
\setlength{\belowdisplayshortskip}{3pt}
I_{\mathrm{AS}}\!\big(\widehat{\mathcal T}_{\mathrm{as}}(\omega)\big)
-
I_{\mathrm{AS}}\!\big(\mathcal T_{\mathrm{as}}(\omega)\big)
=
\eta\,W(\omega)\,\Big(2\,\mathrm{Acc}_{\mathrm{as}}(\omega)-1\Big).
\]
\end{proposition}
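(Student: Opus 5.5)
The plan is a first-order Taylor expansion of $I_{\mathrm{AS}}$ around $\omega$. By construction, the critique-shaped update and the plain AS-channel update differ only by the injected term. From Eq.~\ref{eq:augmented_surrogate} and the implied coefficients $u_t$, we have $\widehat{\mathcal T}_{\mathrm{as}}(\omega)-\mathcal T_{\mathrm{as}}(\omega)=\eta\,\mathbb E\big[\sum_t u_t\nabla_\omega\log\pi^{\mathrm{as}}_\omega(\bar a_t\mid\bar b_t)\big]$. Here I absorb $\lambda$ into $\eta$, or take $\lambda=1$ as in the appendix setting. Expanding to first order in $\eta$ gives
\[
I_{\mathrm{AS}}\big(\widehat{\mathcal T}_{\mathrm{as}}(\omega)\big)-I_{\mathrm{AS}}\big(\mathcal T_{\mathrm{as}}(\omega)\big)=\eta\,\mathbb E\Big[\sum_{t=0}^{H-1}u_t\,\big\langle\nabla_\omega I_{\mathrm{AS}}(\omega),\nabla_\omega\log\pi^{\mathrm{as}}_\omega(\bar a_t\mid\bar b_t)\big\rangle\Big],
\]
with the $O(\eta^2)$ remainder dropped, since the claim concerns the first-order improvement. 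I evaluate the critiques on the oracle-belief trajectory $\bar\tau$, because $I_{\mathrm{AS}}$ is defined through $\Pi^{\mathrm{orc}}_\omega$.

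The key step is to identify the per-step inner product with $\bar A_t\,\|\nabla_\omega\log\pi^{\mathrm{as}}_\omega(\bar a_t\mid\bar b_t)\|^2$. Here $\bar A_t(\bar b_t,\bar a_t)$ is the advantage of $\bar a_t$ for the oracle objective $\Psi(\bar b_H)-\Psi(\bar b_0)$.

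\begin{itemize}
\item \emph{Main obstacle.} In general the policy-gradient theorem only gives $\nabla I_{\mathrm{AS}}=\mathbb E[\sum_s \bar A_s\nabla\log\pi(\bar a_s\mid\bar b_s)]$. Pairing this with a single step's score produces cross terms $s\neq t$.
\item \emph{Assumption used.} I would invoke the step-local alignment condition of Appendix~\ref{app:breaks}, for instance a tabular or softmax parametrization in which score directions of distinct decision segments are orthogonal in expectation. This lets the cross terms vanish and keeps only the diagonal term.
\item \emph{Fallback.} If that assumption is not literally available, I would state it explicitly as the interpretation of ``the setting of Appendix~\ref{app:breaks}''. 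This step is where all modeling content enters.
\end{itemize}

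Once that identification holds, the remainder is bookkeeping. I set the ground-truth direction to $y_t:=\mathrm{Sign}(\bar A_t)$.

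\begin{itemize}
\item \emph{Signed decomposition.} Writing $u_t=|u_t|\,z_t$ for $z_t\neq 0$ gives $u_t\bar A_t\|\nabla\log\pi\|^2=w_t\,z_t y_t$. This equals $+w_t$ when $z_t=y_t$ and $-w_t$ otherwise.
\item \emph{Degenerate steps.} Abstentions have $u_t=0$, and steps with $\bar A_t=0$ have $w_t=0$. Both contribute zero whatever value the indicator takes.
\item \emph{Combining.} Hence $z_ty_t\,w_t=w_t\big(2\,\mathbf 1\{z_t=y_t\}-1\big)$ holds for every $t$. Summing over $t$, taking expectations and dividing by $W(\omega)$ (when $W>0$; the case $W=0$ is trivial) yields $\eta W(\omega)\big(2\,\mathrm{Acc}_{\mathrm{as}}(\omega)-1\big)$.
\item \emph{Range of the accuracy.} $\mathrm{Acc}_{\mathrm{as}}\in[0,1]$ because it is a ratio of nonnegative weights.
\end{itemize}

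The BT-side analogue follows verbatim. One replaces $\pi^{\mathrm{as}}$ by $\pi^{\mathrm{bt}}$, $I_{\mathrm{AS}}$ by $C_{\mathrm{BT}}$, and $\bar A_t$ by the BT-channel advantage $A^{\mathrm{bt}}_t$.
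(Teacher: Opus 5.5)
Your proposal follows the paper's proof step for step: a first-order expansion so that only $\eta\langle\nabla_\omega I_{\mathrm{AS}}(\omega),g_{\mathrm{aux},\mathrm{as}}(\omega)\rangle$ survives, the policy-gradient form of $\nabla_\omega I_{\mathrm{AS}}$ on the oracle process, discarding the cross terms, and then the identity $z_ty_t=2\,\mathbf 1\{z_t=y_t\}-1$. Your handling of abstentions and $\bar A_t=0$ steps is a small tidy-up that the paper avoids by restricting to $z_t,y_t\in\{\pm1\}$. The paper does not derive the diagonal step either; it imposes it as a ``diagonal score-covariance approximation.'' Your fallback of stating it as an explicit assumption is therefore exactly what the paper does. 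The tabular/softmax justification would not make the identity exact, for two reasons: same-belief scores $e_a-\pi(\cdot\mid b)$ are not mutually orthogonal, and pairing a single score with the expectation $\nabla_\omega I_{\mathrm{AS}}$ introduces visitation-probability factors.
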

In particular,
\proj does not rely on perfectly accurate critiques.
The proposition shows that 
\proj\ is effective whenever
$\mathrm{Acc}_{\mathrm{as}}(\omega)>\tfrac12$; this only requires directional critiques to be better than random in weighted accuracy, not calibrated step rewards. We empirically show that \proj is robust to critique noise (see Robustness Analyses in Sec.~\ref{sec:results}).

{\textbf{Practical discussion.}
In practice, our framework naturally maps to LLM agents with \emph{alternating} action-selection (AS) and belief-tracking (BT) rounds during interaction.
In each AS round, the agent takes an environment-facing action intended to reduce the task uncertainty.
In the subsequent BT round, the agent is instructed to explicitly update its confidence over the task states using the latest feedback, where the confidence assigned to the ground-truth candidate $s^\star$ induces $\widehat{\Psi}_t$ when training labels provide $s^\star$.
In practice, 
simple critiques can assist to escape SeL and improve performance.
For AS, action-level signals $z_t^{\mathrm{as}}$ can be directly inferred from user or environment feedback.
For BT, stepwise critique labels $z_t^{\mathrm{bt}}$ can be constructed by comparing changes in candidate confidence (particularly that of $s^\star$) upon receiving feedback.
We defer concrete prompt templates and implementation details to Appendix~\ref{app:impl}.}

\begin{figure*}
    \centering
    \captionsetup{aboveskip=4pt}
    \subfloat[\label{fig:4.1.1}PE-G$_{S=3}$]{\includegraphics[width=0.25\textwidth]{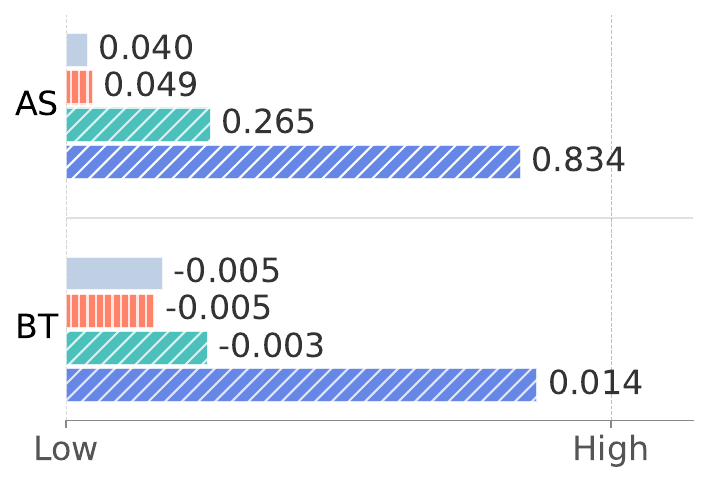}} 
    \subfloat[\label{fig:4.1.2}PE-F$_{D=8}$]{\includegraphics[width=0.248\textwidth]{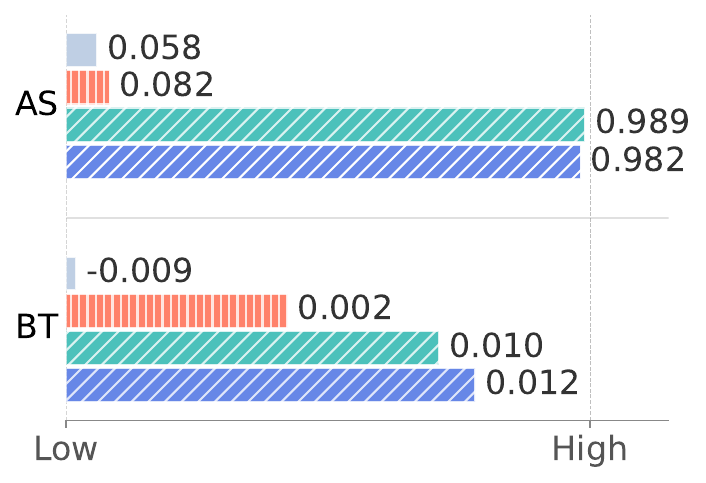}} 
    \subfloat[\label{fig:4.1.3}MediQ]{\includegraphics[width=0.248\textwidth]{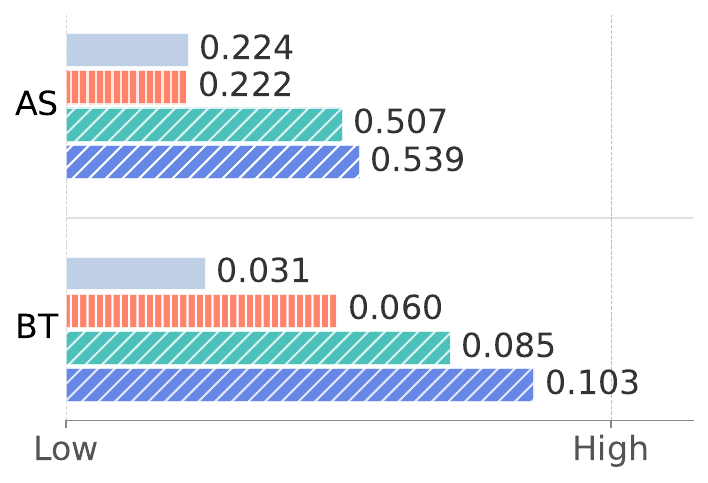}} 
    \subfloat[\label{fig:4.1.4}FloDial-Hard]{\includegraphics[width=0.248\textwidth]{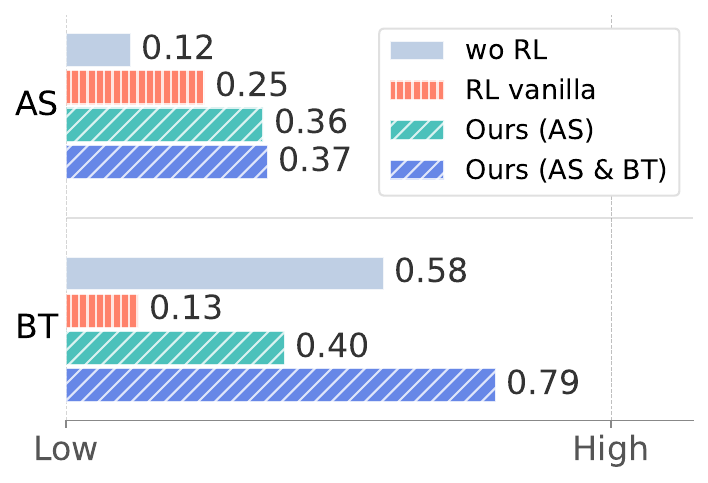}}
    \caption{Evaluations of AS and BT capabilities under PPO algorithm with Qwen-2.5-7B-Instruct across vanilla PPO, PPO with \proj.}
    \label{fig:sec4.1}
    \vspace{-0.1in}
\end{figure*}
\begin{figure*}
    \centering
    \captionsetup{aboveskip=4pt}
    \subfloat[\label{fig:a.1}Rewards]{\includegraphics[width=0.25\textwidth,trim=0 20 0 0,clip]{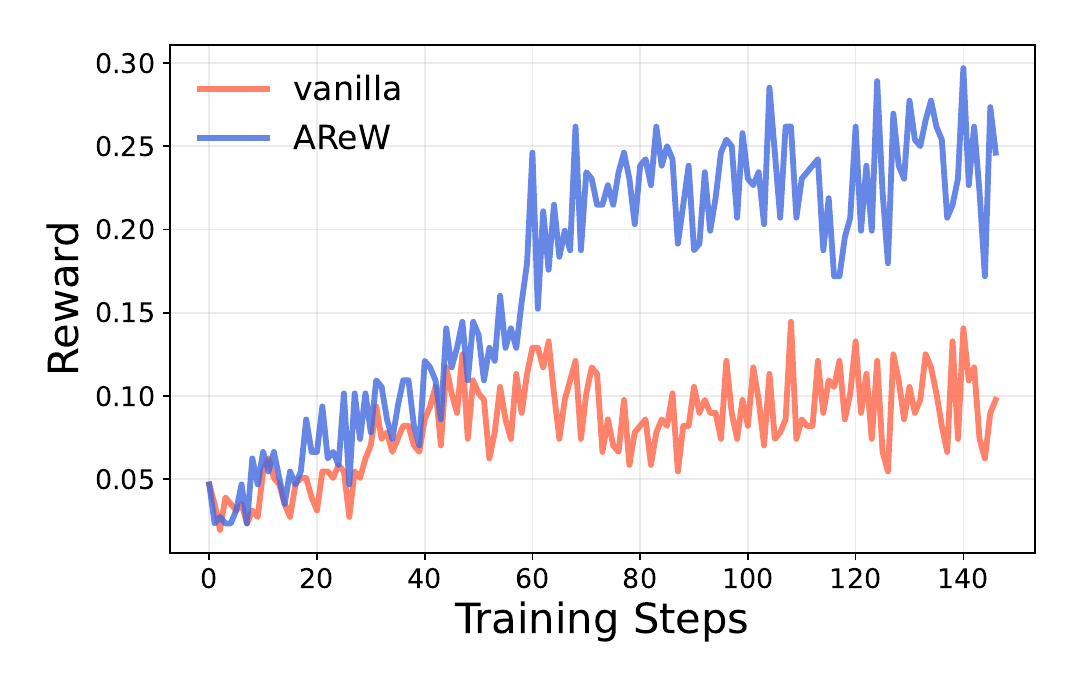}} 
    \subfloat[\label{fig:a.2}Tool Execution Errors]{\includegraphics[width=0.25\textwidth,trim=0 20 0 0,clip]{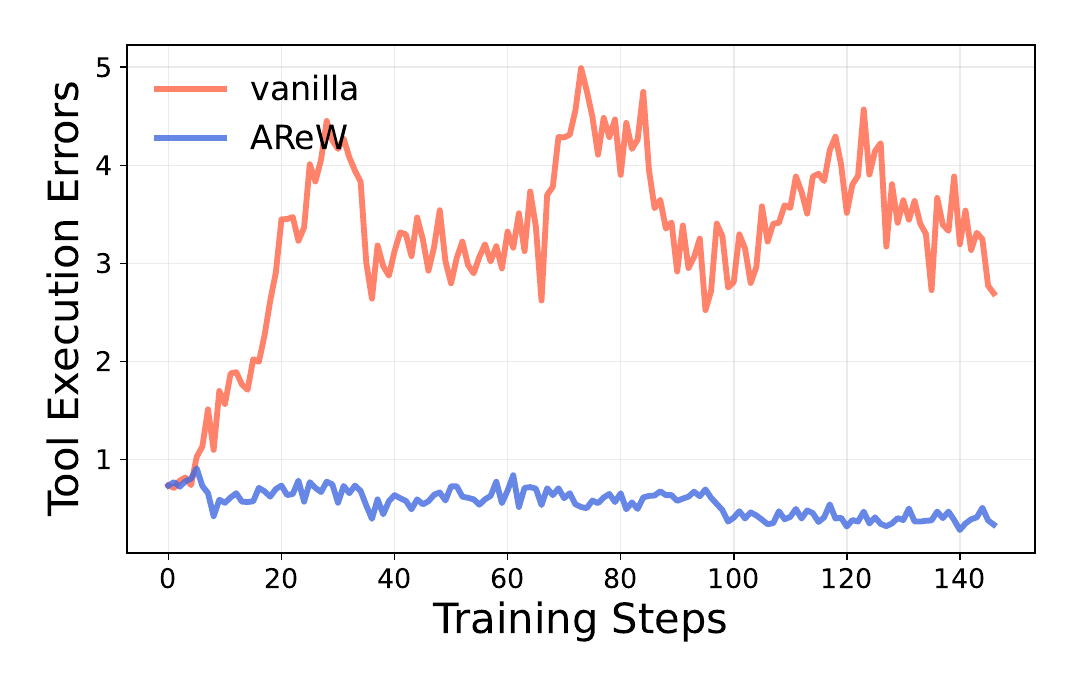}} 
    \subfloat[\label{fig:a.3}Response Tokens]{\includegraphics[width=0.25\textwidth,trim=0 20 0 0,clip]{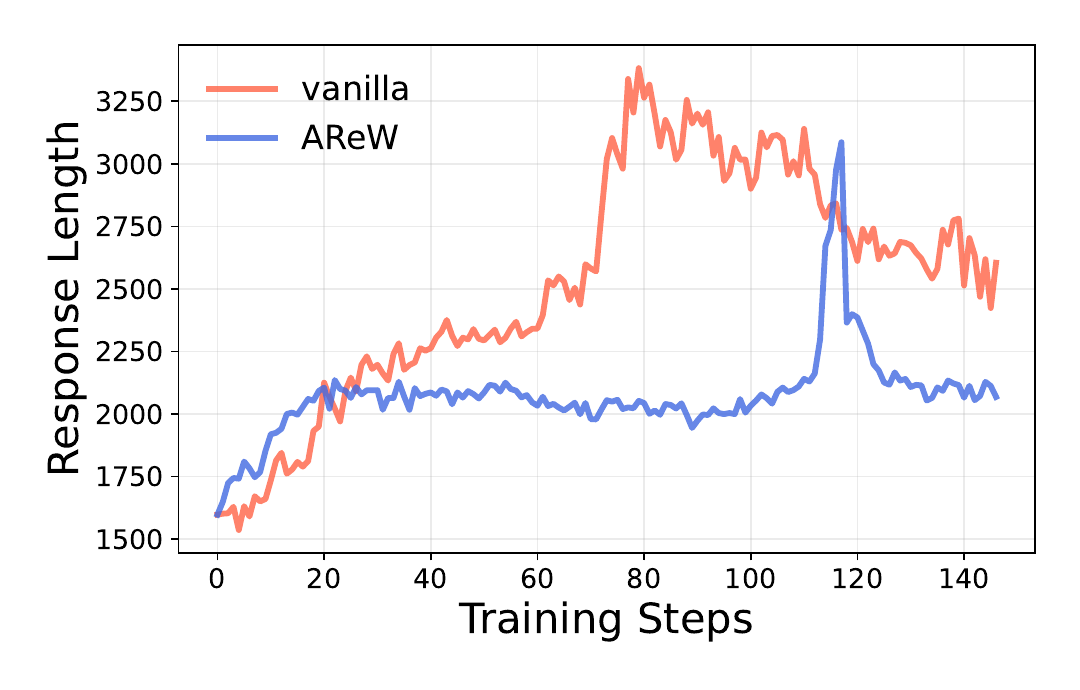}} 
    \subfloat[\label{fig:a.4}Interaction Turns]{\includegraphics[width=0.25\textwidth,trim=0 20 0 0,clip]{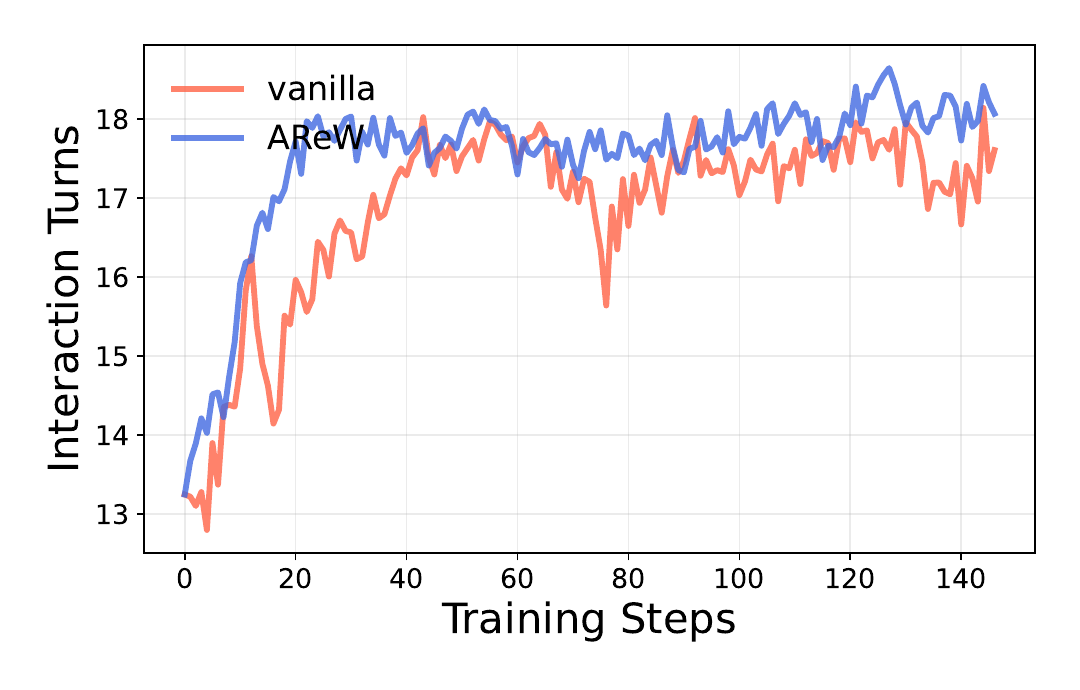}} 
    \caption{ 
     Training dynamics \textit{wrt.} training steps on (a) rewards, (b) tool execution errors, (c) consumed response tokens and (d) interaction turns  over Tau2Bench-Telecom solo setting, comparing vanilla PPO and PPO with \proj  on Qwen2.5-7B-Instruct.
    }
    \label{fig:tau2-solo}
    \vspace{-0.19in}
\end{figure*}

\section{Experiments}
\label{sec:exp}

\subsection{Experimental Setup}
\label{sec:setup}

\textbf{Datasets.}
We evaluate \proj across \textbf{\textit{4}} interactive domains: 
{preference estimation}, {medical diagnosis}, {troubleshooting}, and {customer service}.
Specifically, preference estimation includes PE-G and PE-F  (Sec.~\ref{sec:phenomenon}); medical diagnosis corresponds to MediQ (Sec.~\ref{sec:phenomenon}); and troubleshooting is instantiated by FloDial in both easy and hard modes~\citep{raghu2021end,hu2024uncertainty}, where an agent asks diagnostic questions to resolve user-reported issues.
We further evaluate \proj on the customer-service domain using $\tau^2$-bench~\citep{barres2025tau}, a practical agentic benchmark in which the agent must interact with a user and external tools to complete realistic service tasks.
Across all domains, supervision is provided only at the end of the interaction.

\textbf{Baselines.} To evaluate the effectiveness of \proj, we compare it against the following baselines: 1) Direct Inference without Training, where we evaluate representative proprietary reasoning LLMs o4-mini; 2) PPO~\citep{schulman2017proximal}, 3) GRPO~\citep{shao2024deepseekmath}, and 4) GSPO~\citep{zheng2025group}. 
See more details in Appendix~\ref{app:baselines}. 
For each RL mechanism, we consider two variants of \proj: \textsc{as-only} and \textsc{as+bt}, which work on the AS side critique $z_t^\mathrm{AS}$ and both AS and BT sides $z_t^\mathrm{AS},z_t^\mathrm{BT}$, respectively.

\textbf{Evaluation metrics.}
We report average reward as well as AS and BT capability proxies on the test datasets.
PE-F is evaluated under a continuous reward defined by normalized similarity improvement, whereas all other datasets use binary rewards.
Details of rewards and the proxies used to approximate AS and BT are provided in Appendix~\ref{app:impl}.

\textbf{Implementation details.}
The main experiments of RL training are conducted on {Qwen2.5-7B-Instruct}~\citep{yang2024qwen2} and  LLaMA-3.1-8B-Instruct~\citep{grattafiori2024llama}. For the PE-G and PE-F tasks, the interactive feedback is rule-based; for the MediQ and FloDial datasets, we leverage {Qwen2.5-14B-Instruct} to simulate the ``user'' and provides the interactive
feedback.
See more  details in Appendix~\ref{app:impl}.

\subsection{Experimental Results and Analyses}
\label{sec:results}

In this part, we present experimental results on the first three settings to evaluate the effectiveness of \proj and to analyze its impact on learning dynamics in agentic active reasoning.
We begin with overall performance comparisons, followed by examining how \proj affects reward optimization, AS and BT capabilities, and robustness analysis. See complementary experimental analyses in Appendix~\ref{app:add_exp_analyses}.

\textbf{Overall Performance.}
We first report the main results across the evaluated domains.
Table~\ref{tab:main} summarizes final-task performance.
We further visualize dynamics of  reward optimization and information-related behaviors.
Specifically, Fig.~\ref{fig:sec3.1} shows episode-level reward trajectories, while Fig.~\ref{fig:sec4.1} reports proxy measurements of  AS and BT capabilities introduced in Section~\ref{sec:phenomenon}.
As shown in Table~\ref{tab:main}, \proj consistently outperforms the vanilla PPO baseline, achieving notable improvements in 27 out of 28 evaluated settings.
Among these, \textsc{as+bt} further largely outperforms the \textsc{as-only} variants in 11 out of 14 cases.
These results indicate the effectiveness of \proj on mitigating self-locking.

\begin{figure*}
    \centering
    
    \captionsetup{aboveskip=4pt}
    \subfloat[\label{fig:4.2.1}MediQ]{\includegraphics[width=0.25\textwidth]{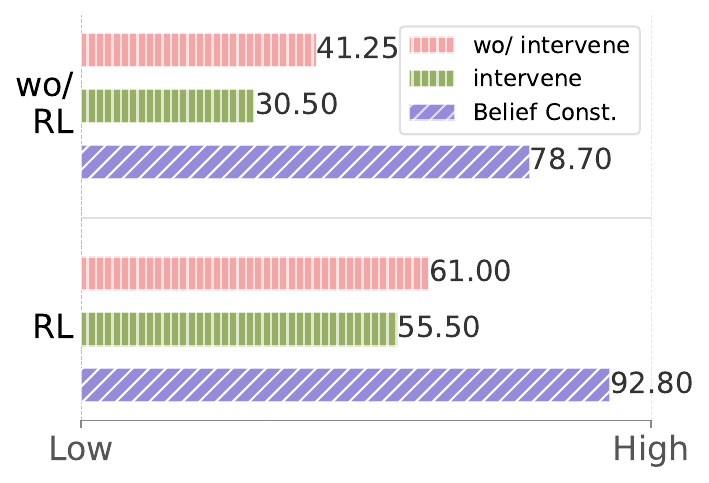}} 
    \hspace{3em}
    \subfloat[\label{fig:4.2.2}PE-F$_{D=8}$ (GRPO)]{\includegraphics[width=0.25\textwidth]{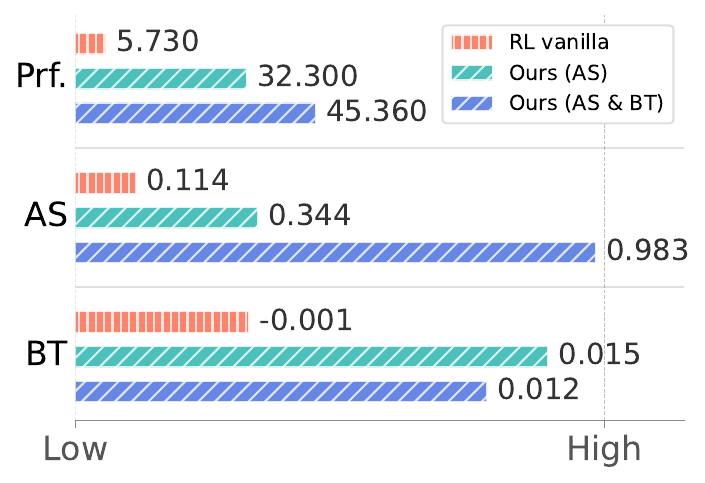}} 
    \hspace{3em}
    \subfloat[\label{fig:4.2.3}FloDial-Easy (GSPO)]{\includegraphics[width=0.25\textwidth]{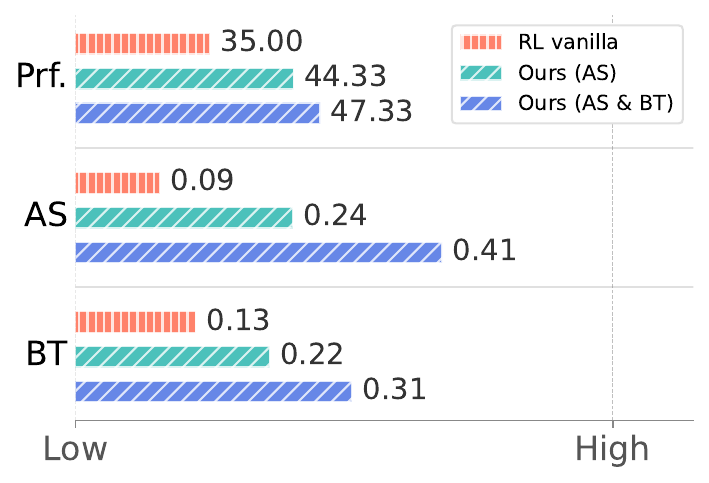}} 
    %
    %
   \caption{
    (a): outcome-RL reduces sensitivity to interactive feedback while increasing belief consistency.
    (b)/(c): Evaluations of AS and BT capabilities under GRPO and GSPO (Qwen-2.5-7B-Instruct).
    }
    \label{fig:sec4.2}
    \vspace{-0.1in}
\end{figure*}
\begin{figure*}
    \centering
    \captionsetup{aboveskip=4pt}
    \subfloat[\label{fig:tau2s-1}Rewards]{\includegraphics[width=0.25\textwidth,trim=0 20 0 0,clip]{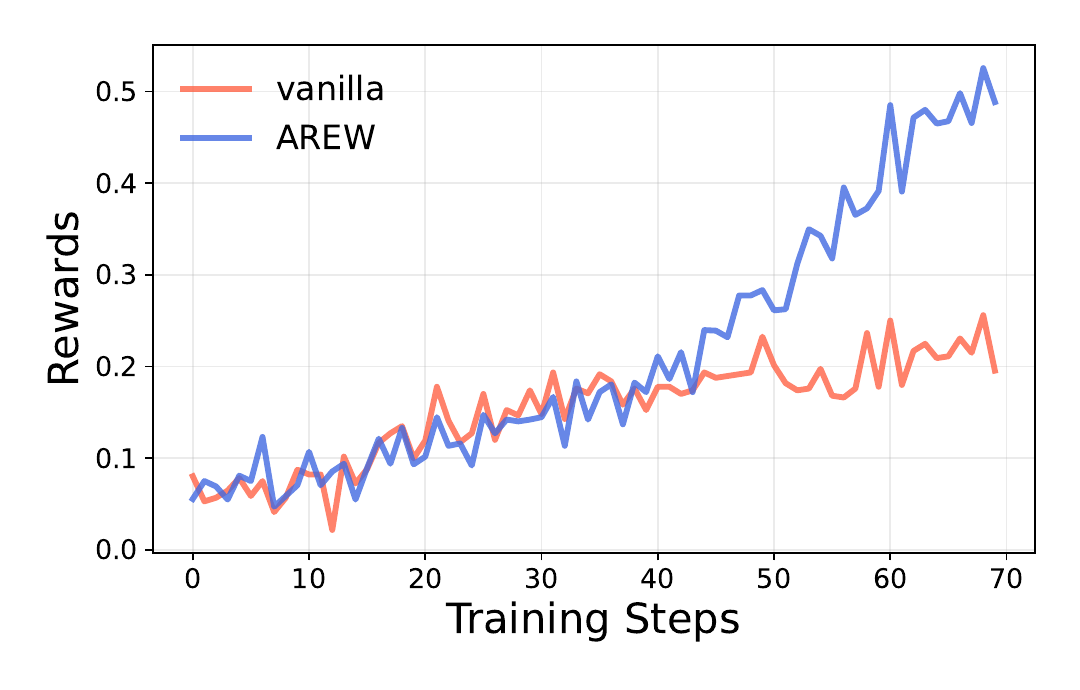}} 
    \hspace{3em}
    \subfloat[\label{fig:tau2s-2}Message Turn Ratio]{\includegraphics[width=0.25\textwidth,trim=0 20 0 0,clip]{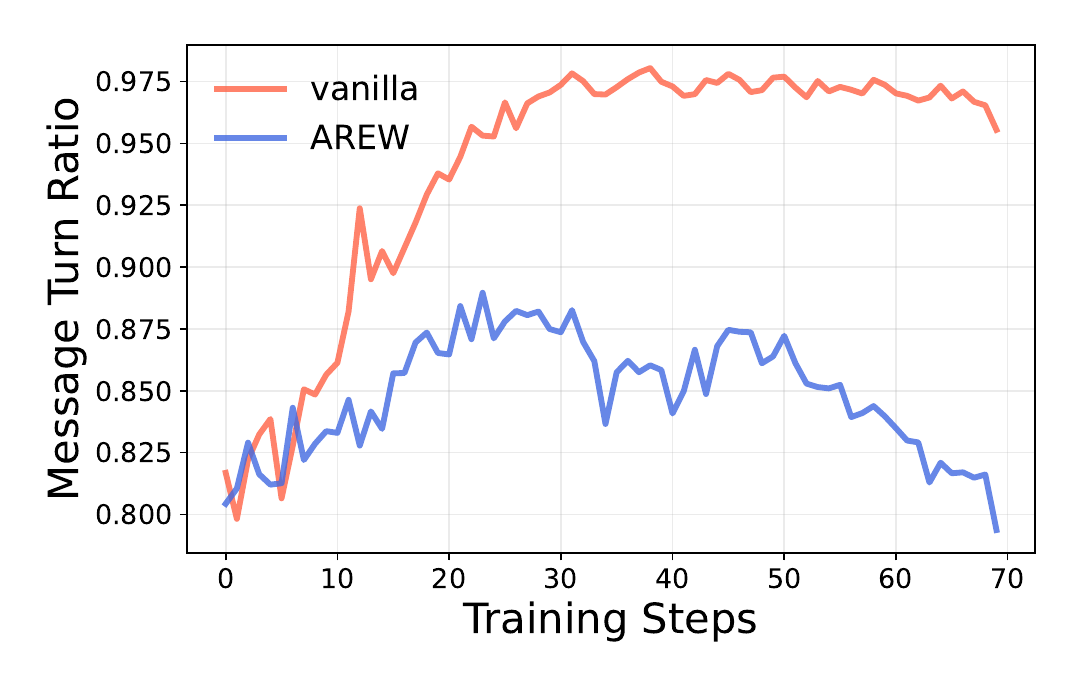}} 
    \hspace{3em}
    \subfloat[\label{fig:tau2s-3}Tool Turn Ratio]{\includegraphics[width=0.25\textwidth,trim=0 20 0 0,clip]{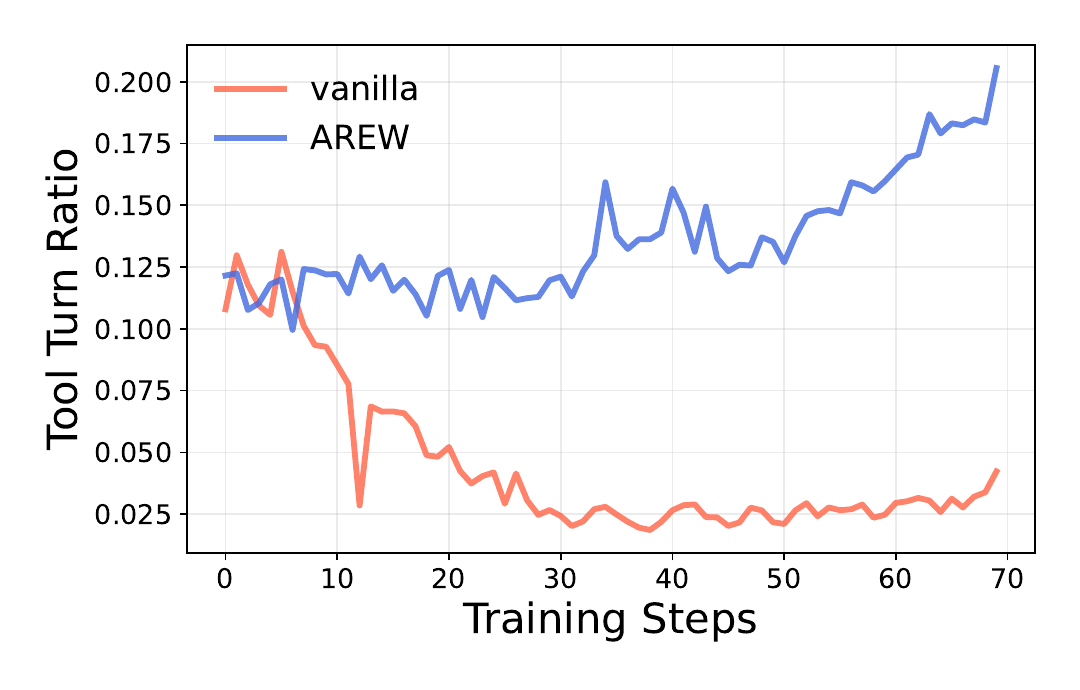}} 
    \caption{ 
     Training dynamics on Tau2Bench-Telecom under the standard dual-control setting, comparing vanilla PPO and PPO with AREW on Qwen2.5-14B-Instruct. We report (a) rewards, (b) message-turn ratio, and (c) assistant tool-turn ratio.
    }
    \label{fig:tau2s}
    \vspace{-0.19in}
\end{figure*}

\textbf{Reward dynamics.}
Fig.~\ref{fig:sec3.1} reveals three representative training behaviors.
In some tasks (Fig.~\ref{fig:m.1.1}), vanilla PPO fails to improve rewards throughout training, remaining trapped in SeL while \proj effectively breaks SeL and achieves continual improvement.
In others (Fig.~\ref{fig:m.1.2} and \ref{fig:m.1.4}), rewards can have some limited increase, but \proj\ achieves faster convergence and higher asymptotic performance.
Notably, we also observe cases where reward curves appear comparable across methods, yet \proj\ yields superior AS and BT proxy scores and better final-task performance (See Table~\ref{tab:main}), corresponding to Obs.~3 discussed in Sec.~\ref{subsec:isl_observations}.

\textbf{AS and BT behaviors.}
Fig.~\ref{fig:sec4.1} further disentangles AS and BT.
The \textsc{as-only} variant outperforms the baseline in AS proxies in all cases, and the \textsc{as+bt} variant achieves additional gains in BT over the \textsc{as-only}.
Interestingly, the pure \textsc{as-only} variant already leads to measurable improvements in BT, reflecting the intrinsic coupling between information acquisition and belief updates, illustrating that breaking SeL brings benefits for both AS and BT channels.

\textbf{Effectiveness across different RL algorithms.}
Beyond PPO, we additionally consider group-based RL variants, including GRPO and GSPO.
While these methods sample multiple trajectories per step, we empirically observe that \emph{self-locking behaviors can still arise} in interactive settings, { suggesting that increasing rollout sampling alone may be  insufficient to resolve the underlying coupling between AS and BT.}
As shown in Fig.~\ref{fig:4.2.2} and \ref{fig:4.2.3}, 
\proj consistently improves final task performance, while simultaneously strengthening AS and BT proxies, analogous to those in PPO.
These results indicate that \proj remains effective across different RL mechanisms.

\textbf{Robustness Analysis.}
We evaluate the robustness of \proj\ by randomly flipping stepwise directional critiques with probability $\alpha$.
As shown in Table~\ref{tab:2}, increasing the perturbation level leads to a gradual reduction in final performance.
However, \proj\ consistently remains competitive with, and often outperforms, the vanilla baseline across a wide range of $\alpha$.
This trend is consistent with Prop.~\ref{prop:weighted_acc_AS_main}, which suggests that performance gains can be sustained as long as the weighted accuracy of critiques is not severely degraded.
Even under strong perturbations, \proj\ does not collapse.
These results indicate that \proj\ is robust to critique noise and can tolerate imperfect directional signals in practice. A more systematic analysis is provided in Appendix~\ref{app:add_exp_analyses}.

\begin{table}[]

\centering
\caption{Final Performance under different strength of directional critique perturbation (controlled by $\alpha$).}
\label{tab:2}
\begin{sc}
\resizebox{\columnwidth}{!}{
\begin{tabular}{lccccccc}
\midrule
                                 & \multirow{2}{*}{Vanilla} & \multicolumn{6}{c}{Perturbation ratio $\alpha$}        \\ \cmidrule(lr){3-8} 
                                 &                          & 0     & 0.1   & 0.2   & 0.3   & 0.4   & 0.5   \\ \midrule
PE-G$_{S=3}$      & 18.3                    & 80.3 & 40.0 & 65.0 & 31.3 & 22.3 & 30.3 \\
FloDial-Hard & 21.3                    & 36.0 & 30.3 & 29.0 & 27.6 & 30.6 & 23.3 \\ \midrule

\end{tabular}}
\end{sc}
\vspace{-.22in}
\end{table}

\subsection{Extension to Practical Customer-Service Agents}
\label{sec:tau2}

We further evaluate \proj on $\tau^2$-bench-Telecom~\citep{barres2025tau}, a practical customer-service benchmark that requires agents to solve telecom troubleshooting tasks through multi-turn dialogue and tool use.
Unlike the controlled domains above, $\tau^2$-bench exposes a dual-control environment: the assistant can use backend tools, while the simulated user can also execute user-side diagnostic or repair actions.
This creates an additional credit-assignment challenge: the agent may obtain partial progress either by using its own tools correctly or by guiding the user to perform actions.

\textbf{Solo setting.}
We consider the no-user {solo} setting, where the Qwen2.5-7B agent has direct control over the task-solving process.
AREW is instantiated using only benchmark-native online signals, without additional annotation, preference data, or learned reward models.
Negative critiques are derived from runtime failures such as malformed tool calls, invalid executions, and repeated actions, while positive critiques are derived from progress indicators exposed by the task evaluator, e.g., whether the trajectory newly matches an expected action.
Fig.~\ref{fig:tau2-solo} shows that AREW substantially improves reward while sharply reducing tool execution errors.
The improvement is not achieved by simply increasing interaction length or response verbosity: AREW uses fewer response tokens and maintains a comparable number of interaction turns.
This suggests that lightweight directional critiques can provide useful optimization signals even in realistic long-horizon tool-use environments.

\textbf{Standard dual-control setting.}
We study the standard interactive setting, where the Qwen2.5-14B agent must coordinate with a GPT-4o-simulated user while deciding when to communicate and when to use its own tools.
This setting reveals a practical form of self-locking.
Under vanilla PPO, early reward gains are largely associated with user-side operation: the policy increasingly guides the user to execute repair actions (Fig.~\ref{fig:tau2s-2}), while assistant-side tool use decreases (Fig.~\ref{fig:tau2s-3}).
Although this behavior can solve a subset of tasks, it creates a path dependence toward the easiest available progress channel and weakens learning of the intended assistant-side tool-use behavior.
Trajectory inspection around step~60 further supports this interpretation: successful validation cases are dominated by user-side repair, but tasks requiring assistant-side repair remain hard.

AREW changes this learning dynamic.
As shown in Fig.~\ref{fig:tau2s},
over the matched 70-step training, AREW assigns more credit to useful assistant-side tool decisions and reduces over-reliance on user-side repair.
AREW maintains substantially more assistant read/write tool turns than vanilla PPO, while requiring fewer user tool hops, raising the average reward from $0.20$ to $0.50$,
a $2.5\times$ relative increase over vanilla PPO.
Thus, in $\tau^2$-bench, AREW not only improves early reward but also counteracts an interaction-induced shortcut, reallocating optimization pressure toward the agent-side behavior that the benchmark is designed to evaluate.

\section{Conclusion}
We study \emph{information self-locking} (SeL) in long-horizon multi-turn agentic active reasoning and show that it arises from a structural failure of credit assignment with  bidirectional coupling between action selection (AS) and belief tracking (BT).
We provide both theoretical and empirical evidence that standard outcome-based RL can be trapped in SeL.
We propose \proj, a critique-driven reweighting approach that selectively reallocates optimization signal along trajectories.
Experiments demonstrate consistent gains, robustness to noisy critiques, effectiveness on multiple RL mechanisms, and improved training dynamics, AS and BT capabilities across multiple benchmarks.
We believe this perspective opens up new directions for designing robust learning mechanisms for interactive reasoning agents.

\clearpage
\section*{Acknowledgements}
We thank the reviewers for their constructive comments and suggestions. Deyu Zou, Yongqiang Chen, and James Cheng were supported by a CRF (No. C2005-24Y) from the RGC of Hong Kong. 
Mufei Li and Pan Li are in part supported by NSF grants III-2239565 and III-2428777.
This work was a collaboration between Husky Data+AI Lab at CUHK and ByteDance, supported by a ByteDance University Collaboration Project.

\section*{Impact Statement}

This paper presents work whose goal is to advance the field of Machine
Learning. There are many potential societal consequences of our work, none
which we feel must be specifically highlighted here.

\nocite{langley00}

\bibliography{example_paper}
\bibliographystyle{icml2026}

\newpage
\appendix
\onecolumn

\tableofcontents
\section{Additional Discussions}
\subsection{Connection to On-Policy Local Supervision}
\label{app:connection_on_policy_local_supervision}

Recent attention to on-policy distillation (OPD) methods for post-training revisit distillation from an on-policy perspective: the student first samples its own trajectories, and a teacher model then provides local feedback on the student-visited states \citep{gu2024minillm,li2026rethinking,song2026survey}. 
This perspective is closely related to a broader design principle for long-horizon LLM post-training: terminal outcome supervision is often too sparse to identify which local decisions should be reinforced, and useful learning signals should therefore be injected at the states and decisions actually visited by the current policy. \proj shares this broad principle with OPD, but targets a different failure mode and uses a different source of local feedback.

\textbf{A unified form of \proj and OPD.} This connection can be expressed at the policy-gradient interface. Let $h_t$ denote the on-policy history at step $t$, and let $d_t$ denote the sampled local decision segment. In standard token-level OPD, $d_t$ can be a sampled token; in our AS-BT decomposition, $d_t$ can correspond to an action-selection decision or a belief-update decision. A reverse-KL form of OPD minimizes
\[
D_{\mathrm{KL}}\!\left(
\pi_\omega(\cdot\mid h_t)\,\Vert\,\nu(\cdot\mid h_t)
\right)
=
\mathbb E_{d_t\sim\pi_\omega(\cdot\mid h_t)}
\left[
\log \pi_\omega(d_t\mid h_t)
-
\log \nu(d_t\mid h_t)
\right],
\]
where $\nu$ denotes the teacher policy. Equivalently, up to constants and score-function baselines, maximizing the negative reverse KL yields a local policy-gradient coefficient
\[
c^{\mathrm{OPD}}_t
=
\beta\log \frac{\nu(d_t\mid h_t)}{\pi_\omega(d_t\mid h_t)},
\]
which reinforces sampled decisions that are relatively more likely under the teacher than under the student, and suppresses those over-assigned by the student relative to the teacher.

\proj admits an analogous actor-side interface, but with a different local signal sources. Specifically, when injected into the actor update, \proj yields (\textit{cf.}, Sec.~\ref{sec:critique_injection})
\[
\nabla_\omega \mathcal L_{\mathrm{aug}}(\omega)
\propto
\mathbb E_{\tau\sim\pi_\omega}
\left[
\sum_t
\left(A_t+\lambda u_t\right)
\nabla_\omega \log \pi_\omega(d_t\mid h_t)
\right],
\]
where $A_t$ is the original outcome-based advantage and $\lambda u_t$ is the critique-induced local correction. Thus, both OPD and \proj can be written under a \textbf{common local-supervision form}
\[
g(c)
=
\mathbb E_{\tau\sim\pi_\omega}
\left[
\sum_t
\left(A_t+c_t\right)
\nabla_\omega \log \pi_\omega(d_t\mid h_t)
\right],
\]
with
\[
c_t =
\begin{cases}
\beta\log \frac{\nu(d_t\mid h_t)}{\pi_\omega(d_t\mid h_t)},
& \text{teacher-density feedback in OPD},\\[3pt]
\lambda u_t,
& \text{directional-critique feedback in \proj}.
\end{cases}
\]
The similarity is therefore at the level of the local policy-gradient interface: both methods modify the learning signal on on-policy decisions through a local coefficient $c_t$.

\textbf{Essential Differences.} However, this formal resemblance should not obscure the difference in their objectives and semantics. OPD is fundamentally teacher-driven distillation: its local signal measures whether the teacher assigns higher relative likelihood to the sampled token or decision than the student. It is therefore primarily designed to transfer behavior from a stronger teacher to a student while reducing the distribution mismatch of off-policy distillation. 

OPD can in principle be applied to agentic active reasoning. However, long-horizon partially observable tasks may expose an efficiency challenge for such on-policy teacher supervision. 
If the student policy is weak, under SeL, it may repeatedly sample low-information interaction patterns. 
In this case, purely on-policy teacher supervision may spend much of its budget on suboptimal histories.
Unless the teacher feedback at these histories strongly induces corrective information-seeking actions, the positive signal for escaping the low-information regime can remain sparse, potentially reducing sample efficiency.
Moreover, early interaction errors can compound over subsequent turns~\citep{zou2026reducing}: in tool-integrated reasoning, recent studies~\cite{zhong2026sod} observe that erroneous tool calls can cascade across later reasoning steps, increasing student-teacher divergence and making token-level teacher supervision less reliable.

By contrast, \proj is instead directly motivated by information self-locking in agentic active reasoning. 
Its local signal asks whether the decision contributes to the information pipeline required for solving the task (\textit{cf.}, Sec.~\ref{sec:method_channel}). 
Specifically,
on the AS side, the critique indicates whether an environment-facing action elicits useful evidence. On the BT side, the critique indicates whether newly acquired evidence is internalized in a truth-aligned direction. 
\proj is designed for the structural failure mode of active reasoning: when outcome rewards cannot disentangle AS and BT, \textbf{lightweight directional turn-level} critiques can restore non-degenerate local learning signals, which are coarser than teacher logits, but are directly aligned with the AS-BT mechanism underlying SeL, without requiring calibrated dense rewards, learned process reward models, or teacher logits.

\subsection{Differences from Direct Intermediate Reward Shaping}
\label{app:reward-shaping}

A natural alternative to \proj is to convert the same stepwise critiques into intermediate rewards. In this part, we conceptually and empirically compare two ways of leveraging the same directional critiques introduced in Sec.~\ref{sec:method_channel}: i) \proj and ii) direct intermediate reward shaping.

\textbf{Conceptual Comparison.} Let $c_t\in[-1,1]$ denote a local critique signal at decision step $t$, where positive values indicate locally desirable behavior and negative values indicate undesirable behavior. A direct intermediate-reward baseline optimizes the shaped return
\[
\setlength{\abovedisplayskip}{3pt}
\setlength{\belowdisplayskip}{3pt}
\setlength{\abovedisplayshortskip}{3pt}
\setlength{\belowdisplayshortskip}{3pt}
R_{\mathrm{int}}(\tau)
=
R(\tau)+\alpha\sum_{t=0}^{H-1} c_t ,
\]
with shaping strength $\alpha>0$. This changes the optimized objective from the original sparse task return to
\[
\setlength{\abovedisplayskip}{3pt}
\setlength{\belowdisplayskip}{3pt}
\setlength{\abovedisplayshortskip}{3pt}
\setlength{\belowdisplayshortskip}{3pt}
J_{\mathrm{int}}(\omega)
=
\mathbb E_{\tau\sim\Pi_\omega}
\left[
R(\tau)+\alpha\sum_{t=0}^{H-1}c_t
\right].
\]
Accordingly, the value function is no longer trained to predict final task success alone. It must predict the shaped
future return
\[
\setlength{\abovedisplayskip}{3pt}
\setlength{\belowdisplayskip}{3pt}
\setlength{\abovedisplayshortskip}{3pt}
\setlength{\belowdisplayshortskip}{3pt}
V_{\mathrm{int}}^\pi(s_t)
=
\mathbb E_\pi
\left[
R(\tau)+\alpha\sum_{k\ge t}c_k
\,\middle|\, s_t
\right],
\]
which requires modeling when future positive or negative critiques will occur and how they accumulate.

This is a stronger requirement than what our setting supports. The critiques used by \proj are deliberately lightweight,
binary, and potentially noisy. 
They are intended to provide weak directional information about local AS or BT behavior,
They are not
calibrated estimates of each step's additive contribution to final task utility. 
Once injected into the reward, however,  noisy or weak critiques can affect both \textbf{critic regression} and \textbf{actor optimization}:
the critic must fit a noisier shaped return, and the resulting advantages can propagate corrupted local labels backward
through the trajectory.

By contrast, \proj uses the same local signals in a reward-preserving way. It keeps the original sparse return and critic target unchanged,
and injects critiques only into the actor-side credit allocation:
\begin{equation*}
\nabla_\omega\,\widehat{\mathcal L}_{\mathrm{aug}}(\omega)
\;\propto\;
\mathbb E_{\tau}\!\left[
\sum_{t=0}^{H-1}
\Big( A_t+\lambda\,u_t\Big)\,
\nabla_\omega \log \pi_{\omega,t}
\right].
\end{equation*}
The theoretical requirement is therefore weaker. 
Direct intermediate rewards require the critique sequence to behave like a
reliable process reward.
By contrast, under the settings of Proposition~\ref{prop:weighted_acc_AS_main}, \proj only requires the signed critique to be better than random in the weighted accuracy to improve the capability beyond the baseline update  in expectation.

\textbf{Empirical Findings.} This distinction also explains the empirical gap in Appendix~\ref{app:add_exp_analyses}. Under random sign corruption,
direct intermediate reward injection forces the critic to fit a corrupted shaped return. Under structured false-positive corruption,
it can further encourage critique chasing, because the policy is directly rewarded for producing more positively labeled
steps even when those labels are weakly related to final task success. 
By contrast, \proj is less susceptible to this failure mode because
the auxiliary signal is normalized and contrastive within each trajectory: increasing the number of positive labels does not
linearly increase the total reward mass. Hence, in our regime, \textbf{weak directional step-level critiques are better used as
reward-preserving credit reallocation signals than as direct intermediate utilities}. Note that we do not claim that all reward shaping is unsuitable; potential-based shaping or well-calibrated process rewards can be appropriate. \textbf{Our comparison concerns the direct additive use of the same weak directional critiques as intermediate utilities.}

\section{Related Work}

\textbf{Active Reasoning} requires LLMs to interact with external
sources and actively acquire missing information to solve complex tasks.
Prior work has improved LLMs' ability to handle ambiguity and incompleteness through making clarification and information-seeking actions.   For example,  Proactive CoT~\citep{deng2023prompting} prompts LLMs to identify ambiguous problems
and generate clarification questions, while UoT~\citep{hu2024uncertainty} quantifies the contribution of each question in reducing uncertainty.  
However, challenges remain when transitioning from LLMs' single-turn success to multi-turn active reasoning~\citep{kwan2024mt,liang2024mathchat,badola2025multi}, even with several advanced strategies such as tree-based searching
or post-training approaches, as highlighted in existing works~\citep{zhou2025passive}. There have been existing works which target RL on active reasoning~\citep{zou2026reducing}. In our work, we identify a unique mechanism named information self-locking, which is sourced from a structural
failure of credit assignment with bidirectional coupling between action selection (AS) and belief tracking (BT). This is consistent with empirical observations from real-world agentic use. For example, \citet{wang2025beyond} uncovers the phenomenon specific to search agents: systematic deficiencies in search behaviors arise under outcome-only training and ultimately degrade final answer quality.

\textbf{Credit Assignment and Multi-turn RL.} 
Credit assignment is crucial to long-horizon or multi-turn RL~\citep{zou2025mathbf,chen2026histanumcaestimatestate,wang2026implicit}.
Existing methods have extensively explored
rule-based approaches~\citep{yu2024steptool,dou2024stepcoder,zhang2025rlvmr} to shape intermediate rewards.
Several recent works also proposed to measure the progress of stepwise actions toward overall task completion as intermediate rewards. 
Specifically, CURIO~\citep{wan2025enhancing} constructs a potential function over an ideal belief state to assign intermediate rewards, assuming that the latent state space is finite and enumerable. 
Sotopia-RL~\citep{yu2025sotopia} relies on reward labeling with proprietary LLMs.  SPA-RL~\citep{wang2025spa} trains reward models for intermediate rewards by enforcing a summation constraint with respect to the final outcome reward. 
In our work, instead of working on complicated reward shaping or resorting to external models, we leverage easy-to-use binary directional critiques to make a minimal injection to policy gradient which is mathematically derived from a margin-aware auxiliary objective, in order to provide non-degenerate and stable learning signals to help agents escape from self-locking.

\section{More Details on the Theory}
\label{sec:isl_theory_u_gated}

\subsection{Notations \& Problem Setup}
\label{sec:u_setup}

We consider \emph{active reasoning}, where an LLM agent interacts with an external environment to acquire missing
information and infer the underlying solution through multi-turn interaction. This can be modeled as a POMDP
$(\mathcal S,\mathcal A,\mathcal O,T,O,R,\gamma)$,
where $\mathcal S$ is the space of unobservable latent states, $\mathcal A$ the action space, $\mathcal O$ the observation space, $T(s' \mid s,a)$ the transition dynamics, $O(o \mid s,a)$ the observation model, $R$ the reward function, and $\gamma$ the discount factor.
In our setting, the latent state is fixed within each episode of horizon $H$, denoted by $s^\star\in\mathcal S$.
The action $a_t$ denotes a generic environment-facing action, such as issuing a query, retrieving information, invoking a tool, or executing an operation.
We assume deterministic feedback: for all $(s,a)\in\mathcal S\times\mathcal A$,
the observation kernel $O(\cdot\mid s,a)$ is a point mass at $o=\mathsf O(s,a)$; in particular,
$o_t=\mathsf O(s^\star,a_t)$.

\textbf{Model Belief.}
We work with a belief-state abstraction where the agent is associated with an explicit \emph{model belief}
$b_t\in\Delta(\mathcal S)$, which represents the agent's internal understanding of the latent state and what information remains missing at each turn $t\in\{0,\ldots,H\}$.
Note that this abstraction does \emph{not} assume the agent internally stores
$b_t$ as an explicit probability vector. $b_t$ is a well-defined analytical object induced
by the model parameters $\omega$ and the interaction history.
We quantify truth-aligned confidence by the potential $\Psi(b):=b(s^\star)\in[0,1]$, which measures the belief mass assigned to the true latent
state and serves as our working notion of belief quality.

\textbf{Model Trajectory.}
Agentic behavior in active reasoning can be decomposed into two coupled processes:
\textbf{Action Selection (AS)} and \textbf{Belief Tracking (BT)}.
An agent with parameters $\omega$ induces:
(i) an {action-selection} kernel $\pi_\omega^{\mathrm{as}}$ that selects environment-facing actions conditioned on the current belief, and
(ii) a {belief-update} kernel $\pi_\omega^{\mathrm{bt}}$ that produces the next belief given the latest interaction.
We denote the resulting interleaved agent behavior by
$\Pi_\omega := \bigl(\pi_\omega^{\mathrm{as}},\pi_\omega^{\mathrm{bt}}\bigr)$.
Concretely, define the initial BT-conditioning context $c_0$ as a fixed initial context
(e.g., task instructions), and
$c_{t+1}:=(b_t,a_t,o_t)$
for $t\in\{0,\ldots,H-1\}$.
For each turn $t$, the induced belief process evolves through
$a_t \sim \pi_\omega^{\mathrm{as}}(\cdot\mid b_t)$,
$o_t \sim O(\cdot\mid s^\star,a_t)$,
and
$b_{t+1}\sim \pi_\omega^{\mathrm{bt}}(\cdot\mid c_{t+1})$.
We write the full trajectory induced by $\Pi_\omega$ as
$\tau=(b_0,a_0,o_0,\;b_1,a_1,o_1,\;\ldots,\;b_{H-1},a_{H-1},o_{H-1},\;b_H)$.
Conditioned on $s^\star$ and omitting the fixed task context from notation, its likelihood factorizes as
\begin{align}
p_\omega(\tau\mid s^\star)
=
\pi_\omega^{\mathrm{bt}}(b_0\mid c_0)
\prod_{t=0}^{H-1}
\pi_\omega^{\mathrm{as}}(a_t\mid b_t)\,
O(o_t\mid s^\star,a_t)\,
\pi_\omega^{\mathrm{bt}}(b_{t+1}\mid c_{t+1}),
\quad
c_{t+1}=(b_t,a_t,o_t).
\label{eq:traj}
\end{align}
The environment factor $O(o_t\mid s^\star,a_t)$ is independent of $\omega$ and can be omitted when deriving policy-gradient updates.

\textbf{Outcome reward.}
The environment returns an outcome reward $R(\tau)\in[0,1]$ (e.g., correctness of the inferred solution).
We assume the expected reward is a non-decreasing function of the terminal belief quality:
{
\begin{assumption}
\label{ass:reward_outcome_u}
There exists a non-decreasing $L_R$-Lipschitz function $f:[0,1]\to[0,1]$ such that
$\mathbb E\!\left[R(\tau)\ \middle|\ b_H\right]
=
f\!\left(\Psi(b_H)\right)$.
\end{assumption}}

\subsection{Two Capability Indices}
\label{sec:indices}

\textbf{Oracle Bayesian belief.}
To decouple AS informativeness from the model's BT mechanism,
we introduce an oracle Bayesian belief-update process.
Given deterministic feedback $o=\mathsf O(s^\star,a)$, define the Bayesian update operator
$\mathsf{BayesUpd}(\cdot,a,o)$:
\[
\big(\mathsf{BayesUpd}(b,a,o)\big)(s)
\;:=\;
\frac{b(s)\,\mathbf 1\{\mathsf O(s,a)=o\}}
{\sum_{s'\in\mathcal S} b(s')\,\mathbf 1\{\mathsf O(s',a)=o\}}.
\]

\textbf{Oracle-belief process.}
For analysis purposes only, we consider an oracle-belief process that evolves under
Bayesian belief updates while sharing the same action-selection kernel as the agent.
Fix a prior $\bar b_0\in\Delta(\mathcal S)$ with $\bar b_0(s^\star)>0$.
For each turn $t\in\{0,\ldots,H-1\}$, the oracle trajectory evolves as
$
\bar a_t \sim \pi_\omega^{\mathrm{as}}(\cdot\mid \bar b_t)$,
$\bar o_t=\mathsf O(s^\star,\bar a_t),
$
and
$
\bar b_{t+1}
=
\mathsf{BayesUpd}(\bar b_t,\bar a_t,\bar o_t).
$
This yields the oracle trajectory
$
\bar\tau
:=
(\bar b_0,\bar a_0,\bar o_0,\bar b_1,\bar a_1,\bar o_1,\ldots,\bar b_H)
\sim \Pi_\omega^{\mathrm{orc}},
$
where $\Pi_\omega^{\mathrm{orc}}:=(\pi_\omega^{\mathrm{as}},\mathsf{BayesUpd})$.
Here, barred variables are used exclusively for this oracle process.
The oracle-belief process is introduced solely for analysis, aiming to isolate AS information supply,
and does not require the actual agent to access the oracle belief $\bar b_t$.

We now quantify the informativeness of the action-selection kernel by the belief improvement
it induces under the oracle-belief dynamics.

\begin{definition}
\label{def:as_information_supply}
Consider an oracle-belief trajectory
$\bar\tau:=(\bar b_0,\bar a_0,\bar o_0,\ldots,\bar b_H)\sim \Pi_\omega^{\mathrm{orc}}$
induced by the AS kernel $\pi_\omega^{\mathrm{as}}$
and Bayesian belief dynamics $\mathsf{BayesUpd}(\cdot)$.
Define the one-step oracle belief progress as
$
\Delta\bar\Psi_t
:=
\Psi(\bar b_{t+1})-\Psi(\bar b_t).
$
The {AS informativeness} of $\pi_\omega^{\mathrm{as}}$ is defined as the expected total improvement in oracle belief quality,
\[
I_{\mathrm{AS}}(\omega)
\;:=\;
\mathbb E_{\bar\tau\sim\Pi_\omega^{\mathrm{orc}}}
\left[
\sum_{t=0}^{H-1}\Delta\bar\Psi_t
\right]
=
\mathbb E_{\bar\tau\sim\Pi_\omega^{\mathrm{orc}}}
\left[
\Psi(\bar b_H)-\Psi(\bar b_0)
\right].
\]
\end{definition}

We next characterize how much of the theoretically supplied information is actually absorbed
by the agent's belief-tracking dynamics.

\begin{definition}
\label{def:bt_progress_drift}
For an on-policy trajectory
$\tau=(b_0,a_0,o_0,\ldots,b_H)\sim\Pi_\omega$,
define the one-step model belief progress at turn $t$ as
$
\Delta\Psi_t
:=
\Psi(b_{t+1})-\Psi(b_t)
$.
We decompose $\Delta\Psi_t$ into its one-sided components:
the {absorbed} belief progress
$\Delta\Psi_t^+:=(\Delta\Psi_t)_+$,
and the {self-destructive} belief drift
$\Delta\Psi_t^-:=(-\Delta\Psi_t)_+$.
Only the absorbed component contributes positively to belief improvement.
The training-level belief-tracking (BT) index is defined as
\[
C_{\mathrm{BT}}(\omega)
:=
\mathbb E_{\tau\sim\Pi_\omega}
\left[
\sum_{t=0}^{H-1}\Delta\Psi_t^+
\right].
\]
\end{definition}

To relate AS informativeness to BT progress under learning dynamics,
we impose the following mild regularity assumptions.
Each assumption is accompanied by a brief intuition clarifying its role:
\textbf{Asmp.~\ref{ass:action_indist_harm_onestep}} states that harmful belief drift is independent of the specific
action chosen, and is instead driven by deficiencies in the agent's BT mechanism.
\textbf{Asmp.~\ref{ass:uniform_bt_masking}} upper-bounds the amount of belief progress that can be absorbed
in a single step by the current BT capability.
\textbf{Asmp.~\ref{ass:residual_as_budget}} states that the residual oracle AS information budget from any reachable belief remains controlled by the global AS informativeness, reflecting that information supply is governed by the action-selection kernel itself, and no reachable belief state should unlock a much larger information budget than the global oracle-reference AS $I_\mathrm{AS}$.
\textbf{Asmp.~\ref{ass:conservative_modulus}} enforces a Lipschitz-type stability of future belief quality
with respect to the current potential, reflecting a conservative propagation
of belief improvements under weak belief tracking.

{
\begin{assumption}
\label{ass:action_indist_harm_onestep}
For any turn $t$ and action $a\in\mathcal A$, the conditional expectation of the self-destructive belief drift given the current belief
is invariant to the action choice, i.e.,
$\mathbb E_\omega\!\left[\Delta\Psi_t^-\mid b_t,a_t=a\right]
=
\mathbb E_\omega\!\left[\Delta\Psi_t^-\mid b_t\right]$
almost surely for all $a\in\mathcal A$.
\end{assumption}

\begin{assumption}
\label{ass:uniform_bt_masking}
For any turn $t$, there exists a constant $\kappa_\mathrm{bt}\ge 1$ such that
along on-policy rollouts, the conditional expectation satisfies
$
\mathbb E_\omega\!\left[\Delta\Psi_t^+\mid b_t,a_t\right]
\le
\kappa_\mathrm{bt}\,C_{\mathrm{BT}}(\omega)
$
almost surely over the realized action $a_t$.
\end{assumption}

\begin{assumption}
\label{ass:residual_as_budget}
For any turn $t$ and any reachable belief $b\in\Delta(\mathcal S)$ considered in the SeL analysis, let
$V^{\mathrm{orc}}_{\mathrm{AS},t}(b):=\mathbb E_{\bar\tau_{t:H}\sim\Pi_\omega^{\mathrm{orc}}(\cdot\mid \bar b_t=b)}
\!\left[\Psi(\bar b_H)-\Psi(b)\right]$
denote the residual oracle AS information budget from $b$ under the same action-selection kernel.
There exists a constant $\kappa_\mathrm{or}>0$ such that
$V^{\mathrm{orc}}_{\mathrm{AS},t}(b)\le \kappa_\mathrm{or}\,I_{\mathrm{AS}}(\omega)$
almost surely over such reachable beliefs.
\end{assumption}

\begin{assumption}
\label{ass:conservative_modulus}
For any $0<t_0<t_1\le H$, there exists a constant $\kappa_\mathrm{pr}$ such that for any realizable model beliefs
$b,b'\in\Delta(\mathcal S)$ at turn $t_0$,
$\left|
\mathbb E_\omega\!\left[\Psi(b_{t_1})\ \middle|\ b_{t_0}=b\right]
-
\mathbb E_\omega\!\left[\Psi(b_{t_1})\ \middle|\ b_{t_0}=b'\right]
\right|
\le
\kappa_\mathrm{pr}\,\left|\Psi(b)-\Psi(b')\right|$.
\end{assumption}

}

\textbf{Locking regime.}
We formalize self-locking as a two-dimensional region with simultaneously low AS informativeness and low BT capability:
\begin{definition}[Locking regime]
\label{def_app:locking_regime}
For given thresholds $\delta,\varepsilon>0$, we define the \emph{locking regime} as the subset of parameter space
characterized by simultaneously low action-selection informativeness and low belief-tracking capability:
\[
\mathcal R_{\delta,\varepsilon}
\;:=\;
\big\{\omega\in\Omega:\ I_{\mathrm{AS}}(\omega)\le \delta,\ \ C_{\mathrm{BT}}(\omega)\le \varepsilon\big\}.
\]
This region captures a low-information and low-BT regime in which
neither the action-selection kernel nor the belief-tracking dynamics can induce substantial positive progress.
\end{definition}

To analyze local training dynamics within the locking regime,
we require a mild first-order regularity condition on the two capability indices.

{\begin{assumption}[Local first-order expandability of capability indices]
\label{ass:first_order_expand_indices}
Fix $\delta,\varepsilon>0$ and consider $\omega\in\mathcal R_{\delta,\varepsilon}$.
There exist constants $M_{\mathrm{AS}},M_{\mathrm{BT}}<\infty$ such that
$\|\nabla_\omega I_{\mathrm{AS}}(\omega)\|\le M_{\mathrm{AS}}$
and $\|\nabla_\omega C_{\mathrm{BT}}(\omega)\|\le M_{\mathrm{BT}}$.
Moreover, for any $\omega'\in\mathcal R_{\delta,\varepsilon}$ satisfying $\|\omega'-\omega\|=O(\eta)$,
the following first-order expansions hold uniformly over $\mathcal R_{\delta,\varepsilon}$:
$I_{\mathrm{AS}}(\omega')
=
I_{\mathrm{AS}}(\omega)
+
\langle\nabla_\omega I_{\mathrm{AS}}(\omega),\ \omega'-\omega\rangle
+
o(\eta)$,
and
$C_{\mathrm{BT}}(\omega')
=
C_{\mathrm{BT}}(\omega)
+
\langle\nabla_\omega C_{\mathrm{BT}}(\omega),\ \omega'-\omega\rangle
+
o(\eta)$,
where $o(\eta)/\eta\to 0$ as $\eta\to 0$ uniformly for $\omega\in\mathcal R_{\delta,\varepsilon}$.
\end{assumption}}

\subsection{Projected Drifts and the 2D Locking Regime}
\label{sec:projected_drift_framework}

\textbf{Policy-gradient decomposition of the outcome objective.}
We begin by expressing the outcome-based policy gradient in a form that
admits a decomposition across the AS and BT channels.
With the outcome objective $J(\omega):=\mathbb E_{\tau\sim\Pi_\omega}[R(\tau)]$,
\[
\nabla_\omega J(\omega)
=
\mathbb E_{\tau\sim\Pi_\omega}\!\Big[ R(\tau)\,\nabla_\omega \log p_\omega(\tau)\Big],
\]
and hence (via Eq.~\ref{eq:traj}, omitting the environment factors independent of $\omega$)
\[
\nabla_\omega \log p_\omega(\tau)
=
\sum_{t=0}^{H}\nabla_\omega\log \pi_\omega^{\mathrm{bt}}(b_t\mid c_t)
+
\sum_{t=0}^{H-1}\nabla_\omega\log \pi_\omega^{\mathrm{as}}(a_t\mid b_t).
\]

We next impose a mild regularity assumption to control the score-function magnitudes.

\begin{assumption}
\label{ass:bounded_scores_as_bt}
There exist finite constants $G_{\mathrm{as}}<\infty$ and $G_{\mathrm{bt}}<\infty$ such that for all $\omega\in\mathcal R_{\delta,\varepsilon}$,
we have
$\|\nabla_\omega \log \pi_\omega^{\mathrm{as}}(a_t\mid b_t)\|\le G_{\mathrm{as}}$ for all $t\in\{0,\ldots,H-1\}$ and
$\|\nabla_\omega \log \pi_\omega^{\mathrm{bt}}(b_t\mid c_t)\|\le G_{\mathrm{bt}}$ for all $t\in\{0,\ldots,H\}$
almost surely under $\tau\sim\Pi_\omega$.
\end{assumption}

\textbf{Channel-isolated stage-wise advantages.}
As mentioned before, we have decomposed the agentic behavior into two coupled but conceptually
distinct channels: AS and BT.
To characterize how outcome-based training signals propagate through each channel, we introduce
channel-isolated stage-wise value functions with respect to the outcome objective $J$, where the other channel is treated as \textit{part of the induced environment dynamics}.
All expectations below use the continuation dynamics induced by $\Pi_\omega$ after conditioning on the specified channel decision.

For the BT channel, the belief state $b_t$ itself is regarded as the decision variable.
For $t\in\{0,\ldots,H\}$, define
\[
Q^{\mathrm{bt}}_t(c,b)
:=\mathbb E\!\left[R(\tau)\mid c_t=c,\ b_t=b\right],
\qquad
V^{\mathrm{bt}}_t(c)
:=\mathbb E_{b\sim\pi_\omega^{\mathrm{bt}}(\cdot\mid c)}\!\left[Q^{\mathrm{bt}}_t(c,b)\right],
\]
and the corresponding advantage
\[
A^{\mathrm{bt}}_t(c,b):=Q^{\mathrm{bt}}_t(c,b)-V^{\mathrm{bt}}_t(c).
\]

Similarly, for the AS channel, the action $a_t$ is treated as the decision variable.
For $t\in\{0,\ldots,H-1\}$, define
\[
Q^{\mathrm{as}}_t(b,a)
:=\mathbb E\!\left[R(\tau)\mid b_t=b,\ a_t=a\right],
\qquad
V^{\mathrm{as}}_t(b)
:=\mathbb E_{a\sim\pi_\omega^{\mathrm{as}}(\cdot\mid b)}\!\left[Q^{\mathrm{as}}_t(b,a)\right],
\]
with advantage
\[
A^{\mathrm{as}}_t(b,a):=Q^{\mathrm{as}}_t(b,a)-V^{\mathrm{as}}_t(b).
\]

\textbf{Channel-isolated outcome update directions.}
Using the above definitions, the raw channel-wise policy-gradient components are sums over the corresponding decisions.
For scale stability across horizons, we analyze their normalized versions:

\[
g_{\mathrm{as}}(\omega)
:=
\mathbb E_{\tau\sim\Pi_\omega}\!\left[
\frac{1}{H}\sum_{t=0}^{H-1}
\nabla_\omega\log \pi_\omega^{\mathrm{as}}(a_t\mid b_t)\,
A^{\mathrm{as}}_t(b_t,a_t)
\right],
\]
\[
g_{\mathrm{bt}}(\omega)
:=
\mathbb E_{\tau\sim\Pi_\omega}\!\left[
\frac{1}{H+1}\sum_{t=0}^{H}
\nabla_\omega\log \pi_\omega^{\mathrm{bt}}(b_t\mid c_t)\,
A^{\mathrm{bt}}_t(c_t,b_t)
\right].
\]

Accordingly, define the AS-channel projected update
$
\mathcal T_{\mathrm{as}}(\omega)\ :=\ \omega + \eta\, g_{\mathrm{as}}(\omega),
$
and the BT-channel projected update
$
\mathcal T_{\mathrm{bt}}(\omega)\ :=\ \omega + \eta\, g_{\mathrm{bt}}(\omega).
$
These are \emph{virtual} updates used only for mechanism analysis.

\subsection{2D One-Sided Self-Locking via Projected Drifts}
\label{sec:self_locking_main}

We begin by bounding the channel-isolated outcome advantages,
which quantify the strength of the outcome-based learning signal
available to each channel.

\begin{proposition}[AS-channel outcome advantages]
\label{prop:bt_masked_as_adv_new}
Under Assumptions~\ref{ass:reward_outcome_u}, \ref{ass:action_indist_harm_onestep}, \ref{ass:uniform_bt_masking}, and
\ref{ass:conservative_modulus},
for any on-policy executed action $a_t$ at turn $t$,
$
\mathbb E_{\omega}\!\left[\big|A^{\mathrm{as}}_t(b_t,a_t)\big|\right]
\ \le\
L_R\,\kappa_\mathrm{pr}\,\kappa_\mathrm{bt}\,C_{\mathrm{BT}}(\omega).
$
Moreover, under Assumption~\ref{ass:bounded_scores_as_bt},
$
\|g_{\mathrm{as}}(\omega)\|
\ \le\ K_{\mathrm{as}}
C_{\mathrm{BT}}(\omega)
$, where $K_{\mathrm{as}}:=G_{\mathrm{as}}\,L_R\,\kappa_\mathrm{pr}\,\kappa_\mathrm{bt}$.
\end{proposition}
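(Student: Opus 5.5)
We bound $A^{\mathrm{as}}_t$ by passing from the outcome reward to terminal belief quality, and then from terminal belief quality back to the one-step belief change at turn $t$. By the tower rule and Assumption~\ref{ass:reward_outcome_u}, $Q^{\mathrm{as}}_t(b,a)=\mathbb E[f(\Psi(b_H))\mid b_t=b,a_t=a]$. Hence
\[
A^{\mathrm{as}}_t(b,a)=\mathbb E_{a'\sim\pi^{\mathrm{as}}_\omega(\cdot\mid b)}\big[Q^{\mathrm{as}}_t(b,a)-Q^{\mathrm{as}}_t(b,a')\big].
\]
The dependence on $a$ enters only through the transition to $b_{t+1}$. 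Conditioning on $b_{t+1}$ (Markov structure of Eq.~\ref{eq:traj}) gives $Q^{\mathrm{as}}_t(b,a)=\mathbb E[h(b_{t+1})\mid b_t=b,a_t=a]$, where $h(b'):=\mathbb E[f(\Psi(b_H))\mid b_{t+1}=b']$.

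Next we use Lipschitzness to reduce everything to $\Psi(b_{t+1})$. Fix a reference belief $b^\circ$ at turn $t+1$. Writing $f\circ\Psi$ via the non-decreasing $L_R$-Lipschitz $f$, Assumption~\ref{ass:conservative_modulus} (applied with $t_0=t+1$, $t_1=H$) should give
\[
|h(b')-h(b'')|\le L_R\kappa_{\mathrm{pr}}|\Psi(b')-\Psi(b'')|.
\]
There is a technical point here: Assumption~\ref{ass:conservative_modulus} controls $\mathbb E[\Psi(b_H)]$ rather than $\mathbb E[f(\Psi(b_H))]$, so I would either apply $f$ before the expectation via a monotone-coupling argument, or simply read the assumption as applying to $f\circ\Psi$. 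It is cleanest to treat this step as the intended reading and state that it holds up to the Lipschitz constant $L_R$.

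The comparison across actions then becomes
\[
|Q^{\mathrm{as}}_t(b,a)-Q^{\mathrm{as}}_t(b,a')|\le L_R\kappa_{\mathrm{pr}}\,\big|\mathbb E[\Psi(b_{t+1})\mid b,a]-\mathbb E[\Psi(b_{t+1})\mid b,a']\big|.
\]
This is the key, and main-obstacle, step. I would write
\[
\mathbb E[\Psi(b_{t+1})\mid b,a]=\Psi(b)+\mathbb E[\Delta\Psi_t^+\mid b,a]-\mathbb E[\Delta\Psi_t^-\mid b,a].
\]
By Assumption~\ref{ass:action_indist_harm_onestep}, the $\Delta\Psi_t^-$ term does not depend on $a$ and cancels in the difference. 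What remains is the difference of two nonnegative absorbed terms. Each is at most $\kappa_{\mathrm{bt}}C_{\mathrm{BT}}(\omega)$ by Assumption~\ref{ass:uniform_bt_masking}, so the difference is at most $\kappa_{\mathrm{bt}}C_{\mathrm{BT}}(\omega)$ in absolute value, since both terms lie in $[0,\kappa_{\mathrm{bt}}C_{\mathrm{BT}}]$. The obstacle is making the passage from $|h(b')-h(b'')|$ pointwise to a difference of conditional expectations rigorous, because the conditional laws of $b_{t+1}$ under $a$ and $a'$ differ. I would handle this by linearizing: $h$ is an (assumed) function of $\Psi$ alone, $h(b')=\phi(\Psi(b'))$, and I would use the Lipschitz modulus together with the monotone structure. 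Failing that, I would adopt the conservative-propagation assumption directly as a bound on the difference of conditional means of $\Psi(b_H)$ in terms of the difference of conditional means of $\Psi(b_{t+1})$, which is how the paper appears to intend Assumption~\ref{ass:conservative_modulus}.

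Averaging over $a'\sim\pi^{\mathrm{as}}_\omega(\cdot\mid b)$ gives $|A^{\mathrm{as}}_t(b,a)|\le L_R\kappa_{\mathrm{pr}}\kappa_{\mathrm{bt}}C_{\mathrm{BT}}(\omega)$ almost surely, and taking expectations yields the first claim. For the gradient bound, apply Jensen's inequality and Assumption~\ref{ass:bounded_scores_as_bt}:
\[
\|g_{\mathrm{as}}\|\le\frac1H\sum_t\mathbb E\big[\|\nabla\log\pi^{\mathrm{as}}_\omega\|\,|A^{\mathrm{as}}_t|\big]\le G_{\mathrm{as}}\frac1H\sum_t\mathbb E|A^{\mathrm{as}}_t|\le K_{\mathrm{as}}C_{\mathrm{BT}}(\omega).
\]
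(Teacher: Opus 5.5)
Your outline is the paper's own. You cancel $\mathbb E[\Delta\Psi_t^-\mid b,a]$ across actions via Assumption~\ref{ass:action_indist_harm_onestep}, confine the absorbed terms to $[0,\kappa_{\mathrm{bt}}C_{\mathrm{BT}}(\omega)]$ via Assumption~\ref{ass:uniform_bt_masking}, propagate to turn $H$ at cost $\kappa_{\mathrm{pr}}$, pay $L_R$ for $f$, bound $|A^{\mathrm{as}}_t|$ by the action-range of $Q^{\mathrm{as}}_t$, and finish with Jensen and $G_{\mathrm{as}}$. All of this is fine except the step you single out. That step is a genuine gap, and the monotone/Lipschitz linearization you suggest cannot close it. If $h=\phi\circ\Psi$ with $\phi$ Lipschitz (monotone or not), then $|\mathbb E[h(b_{t+1})\mid b,a]-\mathbb E[h(b_{t+1})\mid b,a']|$ is controlled by the Wasserstein-$1$ distance between the two action-conditioned laws of $\Psi(b_{t+1})$. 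Assumptions~\ref{ass:action_indist_harm_onestep} and~\ref{ass:uniform_bt_masking}, however, only control the difference of their \emph{means}, and the two quantities coincide only when the laws are stochastically ordered.

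In fact the first claim fails under the listed assumptions. Take $H=2$, $t=0$, $f(x)=x$, $\Psi(b_0)=\tfrac12$, and a uniform AS policy on $\{a,a'\}$. Let the BT update give $\Psi(b_1)\in\{0.1,0.5\}$ with probability $\tfrac12$ each after $a$, and $\Psi(b_1)=0.3$ after $a'$. At turn $1$, set $\Psi(b_2)=\max(\Psi(b_1)-0.3,0)$ whatever the action. Check the assumptions:
\begin{itemize}
\item The expected harm at turn $0$ is $0.2$ under both actions, and the harm at turn $1$ depends only on $b_1$.
\item No step ever makes positive progress, so $C_{\mathrm{BT}}(\omega)=0$ and Assumption~\ref{ass:uniform_bt_masking} holds trivially.
\item The map $\psi\mapsto\max(\psi-0.3,0)$ is monotone and $1$-Lipschitz, so Assumption~\ref{ass:conservative_modulus} holds with $\kappa_{\mathrm{pr}}=1$.
\end{itemize}
Yet $Q^{\mathrm{as}}_0(b_0,a)=0.1$ and $Q^{\mathrm{as}}_0(b_0,a')=0$, so $\mathbb E|A^{\mathrm{as}}_0|=0.05>0=L_R\kappa_{\mathrm{pr}}\kappa_{\mathrm{bt}}C_{\mathrm{BT}}(\omega)$.

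The paper's proof does not contain the missing idea either. It makes the same leap at Eq.~\ref{eq:range_mean_terminal_conf} (``applying the Lipschitz bound to the extremal conditional means''). It also writes $Q^{\mathrm{as}}_t(b,a)=f(m_H(a))$, which conflates $\mathbb E[f(\Psi(b_H))]$ with $f(\mathbb E[\Psi(b_H)])$; your $h$ handles that point correctly. So your second fallback is what is actually required, but you should present it as an additional hypothesis rather than as a reading of Assumption~\ref{ass:conservative_modulus}. There are two clean options:
\begin{itemize}
\item Postulate directly the mixture-level bound $|Q^{\mathrm{as}}_t(b,a)-Q^{\mathrm{as}}_t(b,a')|\le L_R\kappa_{\mathrm{pr}}\,|\mathbb E[\Psi(b_{t+1})\mid b,a]-\mathbb E[\Psi(b_{t+1})\mid b,a']|$.
\item Assume that $h=\phi\circ\Psi$ with $\phi$ being $L_R\kappa_{\mathrm{pr}}$-Lipschitz, and that the action-conditioned laws of $\Psi(b_{t+1})$ are pairwise first-order stochastically ordered. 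A monotone coupling $X\ge Y$ then yields $|\mathbb E\phi(X)-\mathbb E\phi(Y)|\le L_R\kappa_{\mathrm{pr}}(\mathbb E X-\mathbb E Y)$.
\end{itemize}
Under either addition, the rest of your argument, including the bound on $\|g_{\mathrm{as}}(\omega)\|$, goes through as written.
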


\begin{proposition}[BT-channel outcome advantages]
\label{prop:bt_adv_nondual_Ias}
Assume Assumptions~\ref{ass:reward_outcome_u}, \ref{ass:action_indist_harm_onestep}, \ref{ass:uniform_bt_masking}, and
\ref{ass:conservative_modulus} hold,
then for any on-policy generated belief $b_t$ at turn $t$,
$
\mathbb E_\omega\!\left[\big|A_t^{\mathrm{bt}}(c_t,b_t)\big|\right]
\ \le\
2L_R\Big(C_{\mathrm{BT}}(\omega)\ +\ \kappa_\mathrm{or}\,I_{\mathrm{AS}}(\omega)\Big).
$
Moreover, under Assumption~\ref{ass:bounded_scores_as_bt},
$
\|g_{\mathrm{bt}}(\omega)\|\ \le\ K_{\mathrm{bt},I}\,I_{\mathrm{AS}}(\omega)\ +\ K_{\mathrm{bt},C}\,C_{\mathrm{BT}}(\omega)
$, where
$K_{\mathrm{bt},I}:=2G_{\mathrm{bt}}\,L_R\kappa_\mathrm{or}$ and
$K_{\mathrm{bt},C}:=2G_{\mathrm{bt}}\,L_R$.
\end{proposition}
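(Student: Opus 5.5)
The plan is to bound $|A^{\mathrm{bt}}_t(c_t,b_t)|$ by comparing $Q^{\mathrm{bt}}_t(c_t,b_t)$ with $V^{\mathrm{bt}}_t(c_t)$ through terminal belief quality. The norm bound on $g_{\mathrm{bt}}$ then follows from Assumption~\ref{ass:bounded_scores_as_bt}.

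\textbf{Step 1: reduce to terminal belief quality.} By Assumption~\ref{ass:reward_outcome_u} and the tower property,
\[
Q^{\mathrm{bt}}_t(c,b)=\mathbb E\!\left[f(\Psi(b_H))\mid c_t=c,\ b_t=b\right].
\]
Since $V^{\mathrm{bt}}_t(c)$ averages $Q^{\mathrm{bt}}_t(c,b')$ over $b'\sim\pi^{\mathrm{bt}}_\omega(\cdot\mid c)$, we get
\[
|A^{\mathrm{bt}}_t(c,b)|\le \mathbb E_{b'}\!\left|Q^{\mathrm{bt}}_t(c,b)-Q^{\mathrm{bt}}_t(c,b')\right|.
\]
The natural reference point is $\Psi(b)$. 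We write
\[
Q^{\mathrm{bt}}_t(c,b)-f(\Psi(b))=\mathbb E\!\left[f(\Psi(b_H))-f(\Psi(b))\mid b_t=b\right],
\]
and similarly for $b'$. The triangle inequality gives $|A^{\mathrm{bt}}_t|\le D(b)+\mathbb E_{b'}D(b')+|f(\Psi(b))-\mathbb E_{b'}f(\Psi(b'))|$, where $D(b)$ is the expected future change of $f\circ\Psi$ from $b$.

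\textbf{Step 2: bound each piece by $C_{\mathrm{BT}}$ and $I_{\mathrm{AS}}$.} By Lipschitzness, $D(b)\le L_R\,\mathbb E[|\Psi(b_H)-\Psi(b_t)|\mid b_t=b]$.

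\emph{Upward part.} The increase $\Psi(b_H)-\Psi(b_t)$ is at most $\sum_{k\ge t}\Delta\Psi_k^+$. In expectation this is at most $C_{\mathrm{BT}}(\omega)$, since it is a tail of the sum defining $C_{\mathrm{BT}}$.

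\emph{Downward part.} Negative drift is the harder direction. Here I would use the fact that model belief quality cannot exceed what the information supply allows: the attainable terminal quality from $b$ is at most $\Psi(b)$ plus the residual oracle budget $V^{\mathrm{orc}}_{\mathrm{AS},t}(b)\le\kappa_{\mathrm{or}}I_{\mathrm{AS}}(\omega)$ from Assumption~\ref{ass:residual_as_budget}. The plan is to treat $\mathbb E|\Psi(b_H)-\Psi(b)|$ as a sum of absorbed progress (controlled by $C_{\mathrm{BT}}$) and a deviation controlled by the residual AS budget.

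\emph{Spread term.} For the spread $|f(\Psi(b))-\mathbb E f(\Psi(b'))|$ among beliefs sampled from the same context, I would use the same decomposition. Expanding $\Psi(b)-\Psi(b_{t-1})$ into its $\Delta\Psi^+$ and $\Delta\Psi^-$ parts, the positive part gives $C_{\mathrm{BT}}$. The negative part is shared across samples by Assumption~\ref{ass:action_indist_harm_onestep}, so it cancels in the centered difference.

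After taking the outer expectation, the two one-sided contributions combine into
\[
\mathbb E_\omega|A^{\mathrm{bt}}_t|\le 2L_R\bigl(C_{\mathrm{BT}}(\omega)+\kappa_{\mathrm{or}}I_{\mathrm{AS}}(\omega)\bigr).
\]
The factor $2$ arises from the two sides of the centered difference.

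\textbf{Step 3: bound the gradient.} We have $\|g_{\mathrm{bt}}\|\le \frac{1}{H+1}\sum_t G_{\mathrm{bt}}\,\mathbb E|A^{\mathrm{bt}}_t|$, which yields the stated constants $K_{\mathrm{bt},I}$ and $K_{\mathrm{bt},C}$.

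\textbf{Main obstacle.} The hardest step is controlling the downward and self-destructive component. $C_{\mathrm{BT}}$ only measures positive increments, so a rigorous argument must show that negative drift either cancels in the centered advantage (via action-invariance) or is capped by the residual oracle budget. If the cancellation argument fails, I would instead invoke Assumption~\ref{ass:conservative_modulus} to transfer the difference at turn $t$ into a difference of $\Psi$ values, and bound that by the one-step absorbed progress.
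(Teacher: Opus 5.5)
The gap is the one you flag yourself: the realized downward drift $\sum_k\Delta\Psi_k^-$ is never bounded, and none of the tools you invoke can bound it. This drift enters $D(b)$, $D(b')$ and your spread term through absolute values.

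The residual oracle budget $V^{\mathrm{orc}}_{\mathrm{AS},t}(b)\le\kappa_{\mathrm{or}}I_{\mathrm{AS}}(\omega)$ limits what the \emph{oracle} can gain from $b$. Even granting your unlisted claim that the model's terminal quality is at most $\Psi(b)+V^{\mathrm{orc}}_{\mathrm{AS},t}(b)$, that is an upper bound on $\Psi(b_H)$. It says nothing about how far the model's own BT can push $\Psi$ down. Note also that Assumption~\ref{ass:residual_as_budget} is not among the proposition's hypotheses.

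The cancellation you claim in the spread term does not happen either. Assumption~\ref{ass:action_indist_harm_onestep} says the conditional \emph{mean} of $\Delta\Psi_t^-$ given $b_t$ does not depend on the action $a_t$. Your $b,b'\sim\pi^{\mathrm{bt}}_\omega(\cdot\mid c_t)$, however, share the same $a_{t-1}$ and differ precisely by realized BT noise, harmful part included. The fallback through Assumption~\ref{ass:conservative_modulus} only reduces everything to $|\Psi(b)-\Psi(b')|$. That is the same uncontrolled quantity, and one-step absorbed progress bounds only its positive part.

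This cannot be repaired, because the inequality is false as stated. Consider the following instance:
\begin{itemize}
\item a single action with uninformative feedback, so $\mathsf{BayesUpd}$ is the identity and $I_{\mathrm{AS}}=V^{\mathrm{orc}}_{\mathrm{AS},t}=0$;
\item $f(x)=x$ and $\Psi(b_0)=\tfrac12$;
\item a sigmoid-parametrized BT kernel whose first update keeps $\Psi=\tfrac12$ or drops it to $0$, each with probability $\tfrac12$;
\item frozen beliefs afterwards.
\end{itemize}
All assumptions hold and $\Delta\Psi_k^+\equiv0$, so $C_{\mathrm{BT}}=0$. Yet $A^{\mathrm{bt}}_1=\pm\tfrac14$ and $g_{\mathrm{bt}}\neq0$.

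The paper's proof follows a different route from yours:
\begin{itemize}
\item it symmetrizes with an i.i.d.\ copy $B'$;
\item it bounds $|Q^{\mathrm{bt}}_t(c_t,B)-Q^{\mathrm{bt}}_t(c_t,B')|$ by $L_R$ times the \emph{sum} of the two conditional expected terminal potentials, using $|x-y|\le x+y$;
\item it charges future absorbed progress to $\kappa_{\mathrm{or}}I_{\mathrm{AS}}$ through an ``evidence-limited'' inequality that is not among the assumptions;
\item it charges the current level via $\mathbb E\Psi(b_t)\le\mathbb E\Psi(b_0)+C_{\mathrm{BT}}$.
\end{itemize}
So $C_{\mathrm{BT}}$ and $I_{\mathrm{AS}}$ play roughly the reverse roles from your plan.

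The paper reaches the stated form only by ``absorbing'' $\mathbb E\Psi(b_0)$ into the baseline, and the example shows this is invalid. That dropped term is your obstacle in another guise. Since $\Psi\ge0$, harmful drift can only destroy mass that was initially present or previously absorbed.

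What is true, and easy, is that bounding levels instead of changes sidesteps downward drift entirely. Combine the paper's first two steps with the pathwise bound $\Psi(b_H)\le\Psi(b_0)+\sum_k\Delta\Psi_k^+$. This gives $\mathbb E|A^{\mathrm{bt}}_t|\le 2L_R\bigl(C_{\mathrm{BT}}(\omega)+\mathbb E\Psi(b_0)\bigr)$, with no appeal to $I_{\mathrm{AS}}$ or any evidence-limited claim. Recovering the stated inequality requires an extra hypothesis, e.g.\ $\Psi(b_0)=0$ or a direct bound on the expected harmful drift.
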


We now translate the channel-wise gradient bounds into one-step projected drifts
of the two capability indices within the locking regime.
For convenience, we denote by $\Delta_{\mathrm{as}}^+$ and $\Delta_{\mathrm{bt}}^+$ the positive parts of
the one-step changes induced by the projected updates $\mathcal T_{\mathrm{as}}$ and $\mathcal T_{\mathrm{bt}}$.

\begin{proposition}[AS projected drift is controlled by BT level]
\label{prop:AS_projected_drift}
Under Assumption~\ref{ass:first_order_expand_indices} and the conclusion of Proposition~\ref{prop:bt_masked_as_adv_new},
for all $\omega\in\mathcal R_{\delta,\varepsilon}$,
\[
\Delta_{\mathrm{as}}^+ I_{\mathrm{AS}}(\omega)
\ :=\
\big(I_{\mathrm{AS}}(\mathcal T_{\mathrm{as}}(\omega))-I_{\mathrm{AS}}(\omega)\big)_+
\ \le\
\eta\,\alpha\, C_{\mathrm{BT}}(\omega)\ +\ o(\eta),
\qquad
\alpha:=M_{\mathrm{AS}} K_{\mathrm{as}}.
\]
\end{proposition}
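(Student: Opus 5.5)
The argument is a first-order Taylor expansion of $I_{\mathrm{AS}}$ along the virtual AS-channel step, followed by Cauchy--Schwarz and the gradient bound of Proposition~\ref{prop:bt_masked_as_adv_new}. Fix $\omega\in\mathcal R_{\delta,\varepsilon}$ and set $\omega':=\mathcal T_{\mathrm{as}}(\omega)=\omega+\eta\,g_{\mathrm{as}}(\omega)$.

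\textbf{Step 1: the step is $O(\eta)$.} By Proposition~\ref{prop:bt_masked_as_adv_new}, $\|g_{\mathrm{as}}(\omega)\|\le K_{\mathrm{as}}C_{\mathrm{BT}}(\omega)\le K_{\mathrm{as}}\varepsilon$. Hence $\|\omega'-\omega\|\le \eta K_{\mathrm{as}}\varepsilon=O(\eta)$, uniformly over the regime.

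\textbf{Step 2: expand $I_{\mathrm{AS}}$.} Assumption~\ref{ass:first_order_expand_indices} then gives
\[
I_{\mathrm{AS}}(\omega')-I_{\mathrm{AS}}(\omega)=\eta\,\langle\nabla_\omega I_{\mathrm{AS}}(\omega),g_{\mathrm{as}}(\omega)\rangle+o(\eta).
\]

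\textbf{Step 3: bound the linear term.} By Cauchy--Schwarz, $\|\nabla_\omega I_{\mathrm{AS}}\|\le M_{\mathrm{AS}}$, and the bound on $\|g_{\mathrm{as}}(\omega)\|$,
\[
\eta\,\langle\nabla_\omega I_{\mathrm{AS}}(\omega),g_{\mathrm{as}}(\omega)\rangle\le \eta\,M_{\mathrm{AS}}K_{\mathrm{as}}\,C_{\mathrm{BT}}(\omega)=\eta\,\alpha\,C_{\mathrm{BT}}(\omega).
\]

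\textbf{Step 4: take positive parts.} The right-hand side $\eta\alpha C_{\mathrm{BT}}(\omega)+o(\eta)$ is nonnegative up to the $o(\eta)$ term. Applying $x\mapsto x_+$, which is monotone with $(x+y)_+\le x_++|y|$, gives
\[
\Delta^+_{\mathrm{as}}I_{\mathrm{AS}}(\omega)\le \eta\alpha C_{\mathrm{BT}}(\omega)+|o(\eta)|.
\]
The last term is still $o(\eta)$.

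\textbf{Main obstacle: validity of the expansion.} Assumption~\ref{ass:first_order_expand_indices} is stated only for $\omega'\in\mathcal R_{\delta,\varepsilon}$, and $\omega'$ may leave the regime. There are two cases.
\begin{itemize}
\item If $\omega'$ exits only because $I_{\mathrm{AS}}$ or $C_{\mathrm{BT}}$ increases, I would argue via continuity along the segment: the first-order bound still controls the increment up to the boundary.
\item More simply, I would read the assumption's uniform expansion as holding in an $O(\eta)$-neighborhood of the regime. This is the intended reading, since the step has size at most $\eta K_{\mathrm{as}}\varepsilon$.
\end{itemize}
I would state this interpretation explicitly. I would also note that the $o(\eta)$ term is uniform in $\omega$, because both the step bound and the expansion remainder are uniform over $\mathcal R_{\delta,\varepsilon}$.
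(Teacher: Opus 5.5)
Your proposal is correct and follows the paper's proof essentially step for step: a first-order expansion via Assumption~\ref{ass:first_order_expand_indices}, Cauchy--Schwarz with $\|\nabla_\omega I_{\mathrm{AS}}\|\le M_{\mathrm{AS}}$, the norm bound $\|g_{\mathrm{as}}(\omega)\|\le K_{\mathrm{as}}C_{\mathrm{BT}}(\omega)$, and $(x+y)_+\le x_++|y|$ to absorb the remainder into $o(\eta)$. The paper takes the positive part before bounding the inner product, whereas you bound it first, which makes no difference; your checks that the step is $O(\eta)$ and that $\mathcal T_{\mathrm{as}}(\omega)$ lies where the expansion applies cover points the paper's proof takes for granted.
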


\begin{proposition}[BT projected drift is controlled by AS and BT levels]
\label{prop:BT_projected_drift_nondual}
Under Assumption~\ref{ass:first_order_expand_indices} and the conclusion of Proposition~\ref{prop:bt_adv_nondual_Ias},
for all $\omega\in\mathcal R_{\delta,\varepsilon}$,
\[
\Delta_{\mathrm{bt}}^+ C_{\mathrm{BT}}(\omega)
\ :=\
\big(C_{\mathrm{BT}}(\mathcal T_{\mathrm{bt}}(\omega))-C_{\mathrm{BT}}(\omega)\big)_+
\ \le\
\eta\Big(\beta_I\,I_{\mathrm{AS}}(\omega)\ +\ \beta_C\,C_{\mathrm{BT}}(\omega)\Big)\ +\ o(\eta),
\]
where
$\beta_I:=M_{\mathrm{BT}}\,K_{\mathrm{bt},I},
\qquad
\beta_C:=M_{\mathrm{BT}}\,K_{\mathrm{bt},C}.$
\end{proposition}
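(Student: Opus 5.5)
The argument is a first-order Taylor step followed by Cauchy--Schwarz and the gradient-norm bound of Proposition~\ref{prop:bt_adv_nondual_Ias}. It mirrors the proof of Proposition~\ref{prop:AS_projected_drift}, with $C_{\mathrm{BT}}$ and $g_{\mathrm{bt}}$ in place of $I_{\mathrm{AS}}$ and $g_{\mathrm{as}}$.

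Fix $\omega\in\mathcal R_{\delta,\varepsilon}$ and set $\omega':=\mathcal T_{\mathrm{bt}}(\omega)=\omega+\eta g_{\mathrm{bt}}(\omega)$. First we check that $\|\omega'-\omega\|=O(\eta)$. By Proposition~\ref{prop:bt_adv_nondual_Ias},
\[
\|g_{\mathrm{bt}}(\omega)\|\le K_{\mathrm{bt},I}I_{\mathrm{AS}}(\omega)+K_{\mathrm{bt},C}C_{\mathrm{BT}}(\omega)\le K_{\mathrm{bt},I}\delta+K_{\mathrm{bt},C}\varepsilon ,
\]
and this bound is uniform over the regime. Assumption~\ref{ass:first_order_expand_indices} then gives
\[
C_{\mathrm{BT}}(\omega')-C_{\mathrm{BT}}(\omega)=\eta\langle\nabla_\omega C_{\mathrm{BT}}(\omega),g_{\mathrm{bt}}(\omega)\rangle+o(\eta).
\]

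Next we bound the inner product by Cauchy--Schwarz, using $\|\nabla_\omega C_{\mathrm{BT}}\|\le M_{\mathrm{BT}}$ and the gradient-norm bound above:
\[
\eta\langle\nabla_\omega C_{\mathrm{BT}},g_{\mathrm{bt}}\rangle\le \eta M_{\mathrm{BT}}\big(K_{\mathrm{bt},I}I_{\mathrm{AS}}+K_{\mathrm{bt},C}C_{\mathrm{BT}}\big)=\eta(\beta_I I_{\mathrm{AS}}+\beta_C C_{\mathrm{BT}}).
\]
This right-hand side is nonnegative. Since $x\mapsto x_+$ is monotone and $(x+y)_+\le x_++|y|$, taking positive parts yields the claimed bound. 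The $o(\eta)$ term passes through unchanged, keeping its uniformity over $\mathcal R_{\delta,\varepsilon}$.

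The main subtlety is that the expansion in Assumption~\ref{ass:first_order_expand_indices} is stated only for $\omega'\in\mathcal R_{\delta,\varepsilon}$. We therefore handle the case where $\mathcal T_{\mathrm{bt}}(\omega)$ leaves the regime. If $\omega'\in\mathcal R_{\delta,\varepsilon}$, the argument above applies directly. Otherwise, we would either read the assumption as holding on an $O(\eta)$-neighborhood of the regime, which is the intended reading because the bounded gradient $M_{\mathrm{BT}}$ makes $C_{\mathrm{BT}}$ locally Lipschitz there. Alternatively, we can note that the bound is only claimed up to $o(\eta)$, so the boundary case can be absorbed into that term. Apart from this point, the proof is routine.
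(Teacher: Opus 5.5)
Your proposal is correct and is essentially the paper's proof: the first-order expansion from Assumption~\ref{ass:first_order_expand_indices}, the inequality $(x+y)_+\le x_+ + |y|$, Cauchy--Schwarz with $\|\nabla_\omega C_{\mathrm{BT}}(\omega)\|\le M_{\mathrm{BT}}$, the norm control on $g_{\mathrm{bt}}$ from Proposition~\ref{prop:bt_adv_nondual_Ias}, and absorption of $|o(\eta)|$ into $o(\eta)$. The only differences are that you bound the inner product before taking positive parts rather than after, and that you discuss whether $\mathcal T_{\mathrm{bt}}(\omega)$ stays in $\mathcal R_{\delta,\varepsilon}$, which the paper never checks; of your two fixes for that point, reading the assumption on an $O(\eta)$-neighborhood of the regime is the sound one, whereas ``absorbing the boundary case into $o(\eta)$'' does not justify anything, because without the expansion there is no controlled remainder to absorb into.
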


We now combine the channel-wise projected drift bounds into a unified
two-dimensional description of the local training dynamics
within the locking regime.

\begin{theorem}[2D one-sided self-locking]
\label{thm:2D_self_locking_projected_matrix}
Fix $\delta,\varepsilon>0$ and consider $\omega\in\mathcal R_{\delta,\varepsilon}$.
Under the conclusions of Propositions~\ref{prop:AS_projected_drift} and \ref{prop:BT_projected_drift_nondual},
the one-sided projected drifts satisfy the following componentwise inequality:
\[
\begin{pmatrix}
\Delta_{\mathrm{as}}^+ I_{\mathrm{AS}}(\omega)\\[2pt]
\Delta_{\mathrm{bt}}^+ C_{\mathrm{BT}}(\omega)
\end{pmatrix}
\ \preceq\
\eta
\begin{pmatrix}
0 & \alpha\\
\beta_I & \beta_C
\end{pmatrix}
\begin{pmatrix}
I_{\mathrm{AS}}(\omega)\\[2pt]
C_{\mathrm{BT}}(\omega)
\end{pmatrix}
\ +\ o(\eta),
\]
where $\preceq$ denotes elementwise inequality.
\end{theorem}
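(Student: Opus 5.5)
The theorem is a repackaging of Propositions~\ref{prop:AS_projected_drift} and~\ref{prop:BT_projected_drift_nondual}. The plan is to read off the two rows of the matrix inequality from these results. The only checks needed are that the matrix product reproduces exactly their right-hand sides, and that the $o(\eta)$ terms combine into a single vector remainder.

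\textbf{First row.} Expanding the matrix--vector product gives
\[
\eta\bigl(0\cdot I_{\mathrm{AS}}(\omega)+\alpha\,C_{\mathrm{BT}}(\omega)\bigr)=\eta\,\alpha\,C_{\mathrm{BT}}(\omega).
\]
Proposition~\ref{prop:AS_projected_drift} states that for every $\omega\in\mathcal R_{\delta,\varepsilon}$,
\[
\Delta_{\mathrm{as}}^+ I_{\mathrm{AS}}(\omega)\le \eta\,\alpha\,C_{\mathrm{BT}}(\omega)+o(\eta),
\]
with $\alpha=M_{\mathrm{AS}}K_{\mathrm{as}}$. This is exactly the first row.

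The zero entry is worth a remark. It reflects Proposition~\ref{prop:bt_masked_as_adv_new}: under Assumptions~\ref{ass:action_indist_harm_onestep}, \ref{ass:uniform_bt_masking} and~\ref{ass:conservative_modulus}, the AS-channel advantage is masked by BT alone. As a result, $\|g_{\mathrm{as}}\|\le K_{\mathrm{as}}C_{\mathrm{BT}}$ carries no $I_{\mathrm{AS}}$ term. No additional argument is needed here.

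\textbf{Second row.} The matrix product gives
\[
\eta\bigl(\beta_I I_{\mathrm{AS}}(\omega)+\beta_C C_{\mathrm{BT}}(\omega)\bigr).
\]
This coincides with the bound in Proposition~\ref{prop:BT_projected_drift_nondual}, where $\beta_I=M_{\mathrm{BT}}K_{\mathrm{bt},I}$ and $\beta_C=M_{\mathrm{BT}}K_{\mathrm{bt},C}$. For completeness I would recall how that proposition itself follows. One applies the first-order expansion from Assumption~\ref{ass:first_order_expand_indices} at $\omega'=\mathcal T_{\mathrm{bt}}(\omega)$, which is legitimate because $\|\omega'-\omega\|=\eta\|g_{\mathrm{bt}}\|=O(\eta)$ by Assumption~\ref{ass:bounded_scores_as_bt}. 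This gives
\[
C_{\mathrm{BT}}(\omega')-C_{\mathrm{BT}}(\omega)=\eta\langle\nabla C_{\mathrm{BT}},g_{\mathrm{bt}}\rangle+o(\eta).
\]
Cauchy--Schwarz bounds the inner product by $\eta M_{\mathrm{BT}}\|g_{\mathrm{bt}}\|$, and Proposition~\ref{prop:bt_adv_nondual_Ias} then controls $\|g_{\mathrm{bt}}\|$. Finally, taking the positive part preserves the bound, because the right-hand side is nonnegative up to $o(\eta)$.

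\textbf{Combining the rows.} The two scalar remainders are each $o(\eta)$ uniformly over $\mathcal R_{\delta,\varepsilon}$, by the uniformity in Assumption~\ref{ass:first_order_expand_indices}. Stacking them therefore yields a vector remainder that is $o(\eta)$ componentwise. This is the meaning of the $+\,o(\eta)$ in the elementwise inequality $\preceq$.

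The only delicate point is the positive-part step: $(x)_+\le y+o(\eta)$ follows from $x\le y+o(\eta)$ because $y\ge0$, so $(x)_+\le\max(y+o(\eta),0)\le y+|o(\eta)|$. Beyond this, I expect no real obstacle; the substantive work lives in Propositions~\ref{prop:bt_masked_as_adv_new} and~\ref{prop:bt_adv_nondual_Ias}, which are taken as given.
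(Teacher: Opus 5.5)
Your proposal is correct and matches the paper, which gives no separate argument for this theorem beyond stacking the first-row bound of Proposition~\ref{prop:AS_projected_drift} and the second-row bound of Proposition~\ref{prop:BT_projected_drift_nondual} into matrix form, with a componentwise uniform $o(\eta)$ remainder. Your recap of how those propositions are obtained (first-order expansion, Cauchy--Schwarz, the channel-gradient norm bounds, and the positive-part step) also mirrors the paper's proofs, though it is not needed here, since the theorem takes the propositions' conclusions as hypotheses.
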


\begin{remark}
\label{rem:one_sided_semantics}
Theorem~\ref{thm:2D_self_locking_projected_matrix} controls $\Delta_{\mathrm{as}}^+$ and $\Delta_{\mathrm{bt}}^+$, i.e.,
the \emph{positive parts} of the projected drifts.
Note that our theoretical result does \emph{not} assert that $I_{\mathrm{AS}}$ or $C_{\mathrm{BT}}$ cannot decrease, oscillate,
or be affected by algorithmic stabilizers (entropy bonuses, clipping, weight decay, etc.).
Rather, it formalizes self-locking as the \textbf{absence of a strong upward training signal} inside the low-information and low-BT regime:
even when improvements happen, the theorem shows they can only be of small magnitude,
scaling linearly with the current levels $I_{\mathrm{AS}}$ and $C_{\mathrm{BT}}$.
\end{remark}

\begin{remark}
\label{rem:nondual_echo_observation}
The AS-side drift bound is purely BT-limited:
$
\Delta_{\mathrm{as}}^+ I_{\mathrm{AS}}(\omega)\ \lesssim\ \eta\,C_{\mathrm{BT}}(\omega).
$
Thus, when BT is weak, the action-selection kernel receives only a weak positive signal to increase informativeness
(consistent with the intuition that AS learning is masked by BT failures in Sec.~\ref{subsec:isl_observations}).
In contrast, the BT-side drift bound is \textbf{non-dual}:
$
\Delta_{\mathrm{bt}}^+ C_{\mathrm{BT}}(\omega)\ \lesssim\ \eta\big(I_{\mathrm{AS}}(\omega)+C_{\mathrm{BT}}(\omega)\big),
$
where the additional self-term $C_{\mathrm{BT}}(\omega)$ captures a ``self-improvement'' channel:
even under limited information supply, the BT mechanism can still realize some gains by better utilizing the same evidence.
At the same time, the presence of $\beta_I\,I_{\mathrm{AS}}(\omega)$ formalizes the bottleneck:
when AS information supply stays low, BT improvements cannot scale beyond the evidence-limited envelope,
yielding the empirically observed pattern that BT may improve early but tends to plateau once AS remains uninformative, as mentioned in Sec.~\ref{subsec:isl_observations}.
\end{remark}

\begin{proposition}
\label{cor:finite_horizon_trapping_refined2}
Fix $\delta,\varepsilon>0$ and let $\omega_0\in\mathcal R_{\delta,\varepsilon}$.
Let
$m:=\max\{\alpha,\ \beta_I+\beta_C\}$.
Define
\[
K
\ :=\
\Bigg\lfloor
\frac{1}{\eta m}\,
\log\!\Bigg(
\frac{\varepsilon + \rho\eta}{I_{\mathrm{AS}}(\omega_0) + \rho\eta}
\Bigg)
\Bigg\rfloor_+.
\]
Then any projected evolution consistent with the one-step drift bounds of
Theorem~\ref{thm:2D_self_locking_projected_matrix}
cannot leave $\mathcal R_{\delta,\varepsilon}$ within the first $K$ steps.
\end{proposition}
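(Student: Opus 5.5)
The plan is to turn the one-step componentwise bounds of Theorem~\ref{thm:2D_self_locking_projected_matrix} into a scalar linear recursion for a single potential, and then unroll it geometrically. Write $I_k:=I_{\mathrm{AS}}(\omega_k)$ and $C_k:=C_{\mathrm{BT}}(\omega_k)$ along any projected evolution $(\omega_k)$ that obeys the drift bounds. The statement leaves the remainder and the constant $\rho$ implicit, so I would first make them explicit. I would interpret the $o(\eta)$ term as a uniform quadratic remainder: on $\mathcal R_{\delta,\varepsilon}$ it is at most $m\rho\eta^2$ in each coordinate. This is the form for which the $\rho\eta$ shifts in $K$ come out cleanly; if the remainder is instead only $\rho\eta^2$, the same argument works with $\rho$ replaced by $\rho/m$, or with $\rho$ unchanged when $m\ge 1$. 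While $\omega_k\in\mathcal R_{\delta,\varepsilon}$, the positive-part bounds give
\[
I_{k+1}\le I_k+\eta\alpha C_k+m\rho\eta^2,\qquad C_{k+1}\le C_k+\eta(\beta_I I_k+\beta_C C_k)+m\rho\eta^2 .
\]
These hold because a real increment is always at most its positive part.

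Next I would introduce the potential $u_k:=\max(I_k,C_k)$. Since $\alpha\le m$ and $\beta_I+\beta_C\le m$, both right-hand sides are at most $u_k+\eta m u_k+m\rho\eta^2$. Hence
\[
u_{k+1}+\rho\eta\le(1+\eta m)(u_k+\rho\eta).
\]
This is exactly why the shift $\rho\eta$ appears: it absorbs the additive remainder into the multiplicative factor. Unrolling and using $1+x\le e^{x}$ gives
\[
u_k+\rho\eta\le(u_0+\rho\eta)\,e^{\eta m k}.
\]
This holds as long as $\omega_0,\dots,\omega_{k-1}$ all remained in the regime, because the drift bounds are only certified inside $\mathcal R_{\delta,\varepsilon}$.

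Finally I would conclude by induction on $k\le K$. The formula for $K$ compares $\varepsilon$ with $I_{\mathrm{AS}}(\omega_0)$, so I would read the statement under the normalization implicit in it: the binding threshold is $\min(\delta,\varepsilon)=\varepsilon$, and $u_0=I_{\mathrm{AS}}(\omega_0)$, i.e.\ $C_0\le I_0$. Otherwise one replaces $I_{\mathrm{AS}}(\omega_0)$ by $u_0$ and $\varepsilon$ by $\min(\delta,\varepsilon)$, which gives the general version. For $k\le K$, the definition of $K$ gives $e^{\eta m k}\le(\varepsilon+\rho\eta)/(I_0+\rho\eta)$. Therefore $u_k\le\varepsilon$, so both $I_k\le\delta$ and $C_k\le\varepsilon$, and $\omega_k$ stays in $\mathcal R_{\delta,\varepsilon}$, which closes the induction. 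If $I_0\ge\varepsilon$, the logarithm is nonpositive and the floor-plus makes $K=0$, so the claim is vacuous.

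The main obstacle is bookkeeping rather than analysis. One part is making precise how the $o(\eta)$ term becomes $\rho\eta^2$: this needs the uniformity in Assumption~\ref{ass:first_order_expand_indices} together with the bounded gradients of Propositions~\ref{prop:AS_projected_drift} and \ref{prop:BT_projected_drift_nondual}. The other part is the circularity that the drift bounds hold only inside the regime, which is exactly why the argument proceeds by induction on $k$.
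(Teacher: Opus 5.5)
Your proposal is correct and is essentially the paper's proof. Your potential $\max(I_k,C_k)$ is the paper's $\|\mathbf x_k\|_\infty$ with $\|\mathrm I+\eta M\|_\infty=1+\eta m$, your remainder $m\rho\eta^2$ is the paper's $c_0\eta^2$ with $\rho=c_0/m$, and the geometric unrolling with the $\rho\eta$ shift and the implicit normalizations $C_{\mathrm{BT}}(\omega_0)\le I_{\mathrm{AS}}(\omega_0)$ and $\varepsilon\le\delta$ match the paper exactly. Your explicit induction on $k$ (invoking the drift bounds only while still inside $\mathcal R_{\delta,\varepsilon}$) makes rigorous a step the paper leaves implicit. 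Note only that upgrading the uniform $o(\eta)$ to an $O(\eta^2)$ remainder needs an extra smoothness assumption, such as Lipschitz gradients of the indices; uniformity plus bounded gradients is not enough, and the paper makes the same leap without comment.
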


\subsection{Why \proj breaks Information Self-locking}
\label{app:breaks}

In this part, we characterize how and when \proj proposed in Sec.~\ref{sec:method} breaks information self-locking effectively.
For simplicity,
we specialize to the AS side and consider only binary critiques
$z_t\in\{+1,-1\}$ on action-selection steps.
The analysis for the BT side is entirely analogous and omitted for brevity.

\textbf{Critique quality and oracle-good actions.}
Recall that AS informativeness is measured by the index
$I_{\mathrm{AS}}(\omega)$, defined as the expected total improvement in oracle belief quality
under the oracle-belief process (Definition~\ref{def:as_information_supply}).
As discussed before, an action can be assessed independently of the agent's belief-tracking
mechanism $\pi_\omega^{\mathrm{bt}}$ by its effect on the terminal oracle confidence $\Psi(\bar b_H)$ under the oracle-belief dynamics.
Specifically, we give the following definition:
\begin{definition}[Oracle-good action]
\label{def:good}
Under the setting of Appendix~\ref{sec:u_setup}, consider an oracle-belief trajectory
$\bar\tau=(\bar b_0,\bar a_0,\bar o_0,\ldots,\bar b_H)$
induced by the action-selection kernel
$\pi_\omega^{\mathrm{as}}$
and
Bayesian belief dynamics $\mathsf{BayesUpd}(\cdot)$.
Define the expected terminal oracle confidence following an action $a$ at oracle belief $b$ as
$m_t(b,a)
\;:=\;
\mathbb E\!\left[\Psi(\bar b_H)\ \middle|\ \bar b_t=b,\ \bar a_t=a\right].$
An action is \emph{oracle-good} at step $t$ if it yields a higher terminal oracle confidence than the policy
average at the same oracle belief, i.e.,
\[
m_t(\bar b_t,\bar a_t)
\;>\;
\mathbb E_{a'\sim\pi_\omega^{\mathrm{as}}(\cdot\mid \bar b_t)}\!\left[m_t(\bar b_t,a')\right].
\]
\end{definition}

By telescoping of oracle belief progress,
$\sum_{k=t}^{H-1}\Delta\bar\Psi_k=\Psi(\bar b_H)-\Psi(\bar b_t)$,
the stepwise \emph{AS advantage with respect to $I_{\mathrm{AS}}$} is given by
\[
\bar A_t(\bar b_t,\bar a_t)
=
m_t(\bar b_t,\bar a_t)
-
\mathbb E_{a'\sim\pi_\omega^{\mathrm{as}}(\cdot\mid \bar b_t)}\!\left[m_t(\bar b_t,a')\right],
\]
which measures how much the chosen action improves the expected terminal oracle confidence
relative to the policy's average choice at the same oracle belief.
We therefore define the \emph{oracle direction label}
\[
y_t
:=
\mathrm{sign}\!\big(\bar A_t(\bar b_t,\bar a_t)\big)
\in\{+1,-1\},
\]
indicating whether the step-$t$ action contributes positively or negatively to
$I_{\mathrm{AS}}$ under the oracle-belief dynamics.

\paragraph{Efficacy of \proj\ and weighted accuracy.}
The critique label $z_t\in\{+1,-1\}$ can be viewed as an approximation of the oracle direction
label $y_t$.
For critique injection to improve AS informativeness beyond outcome-only learning,
it is necessary that the injected signal aligns, on average, with the true stepwise
contributions to $I_{\mathrm{AS}}$.
In particular, positive critique should be assigned more frequently to oracle-good
actions than to oracle-bad ones.
Moreover, different steps along the trajectory do not contribute equally to
AS informativeness.
This leads naturally to a notion of \textbf{weighted accuracy} of the critique labels
with respect to the oracle direction labels.
The following proposition shows that, under a mild regularity condition,
the effectiveness of \proj\ in improving AS informativeness is determined
by this weighted accuracy.

\begin{proposition}[Weighted accuracy characterizes \proj\ improvement]
\label{prop:weighted_acc_AS}
Under the setting of Sec.~\ref{sec:critique_injection} and Def.~\ref{def:good}, denote the step weight as
$
w_t(\omega)
:=
|u_t|\,
\big|\bar A_t(\bar b_t,\bar a_t)\big|\,
\|\nabla_\omega \log \pi_\omega^{\mathrm{as}}(\bar a_t\mid \bar b_t)\|^2
$
and let
$
W(\omega)
:=
\mathbb E\!\left[\sum_{t=0}^{H-1} w_t(\omega)\right].
$
Then
the critique quality is measured by the \emph{weighted accuracy}
\begin{equation}
\label{eq:Acc_def_prop}
\mathrm{Acc}_{\mathrm{as}}(\omega)
\;:=\;
\frac{
\mathbb E\!\left[\sum_{t=0}^{H-1} w_t(\omega)\,\mathbf 1\{z_t=y_t\}\right]
}{
\mathbb E\!\left[\sum_{t=0}^{H-1} w_t(\omega)\right]
}
\in[0,1].
\end{equation}
Moreover, the first-order improvement in AS informativeness induced by \proj\ satisfies
\begin{equation}
I_{\mathrm{AS}}\!\big(\widehat{\mathcal T}_{\mathrm{as}}(\omega)\big)
-
I_{\mathrm{AS}}\!\big(\mathcal T_{\mathrm{as}}(\omega)\big)
\;=\;
\eta\,W(\omega)\,\Big(2\,\mathrm{Acc}_{\mathrm{as}}(\omega)-1\Big)
\;+\;
o(\eta).
\label{eq:acc}
\end{equation}
\end{proposition}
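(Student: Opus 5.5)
The argument is a first-order expansion combined with a policy-gradient identity for $\nabla_\omega I_{\mathrm{AS}}$. First, I note that $\widehat{\mathcal T}_{\mathrm{as}}(\omega)$ and $\mathcal T_{\mathrm{as}}(\omega)$ differ only by the critique term:
\[
\widehat{\mathcal T}_{\mathrm{as}}(\omega)-\mathcal T_{\mathrm{as}}(\omega)=\eta\lambda\,\hat g(\omega),\qquad
\hat g(\omega):=\mathbb E\Big[\sum_{t=0}^{H-1}u_t\nabla_\omega\log\pi_\omega^{\mathrm{as}}(\bar a_t\mid\bar b_t)\Big].
\]
Here $\lambda$ is absorbed into the step size, or set to one as in the statement's normalization, and the expectation is taken along oracle-belief trajectories, which is the setting the weights $w_t$ refer to. Both points lie within $O(\eta)$ of $\omega$, so Assumption~\ref{ass:first_order_expand_indices} applies at both. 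Subtracting the two expansions gives
\[
I_{\mathrm{AS}}(\widehat{\mathcal T}_{\mathrm{as}}(\omega))-I_{\mathrm{AS}}(\mathcal T_{\mathrm{as}}(\omega))=\eta\,\langle\nabla_\omega I_{\mathrm{AS}}(\omega),\hat g(\omega)\rangle+o(\eta).
\]

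Second, I compute $\nabla_\omega I_{\mathrm{AS}}$ by the score-function trick applied to the oracle process $\Pi^{\mathrm{orc}}_\omega$. In that process only the AS kernel depends on $\omega$, because $\mathsf{BayesUpd}$ and $\mathsf O$ are fixed. The usual baseline subtraction, together with the telescoping identity noted before the proposition, then yields
\[
\nabla_\omega I_{\mathrm{AS}}(\omega)=\mathbb E\Big[\sum_{t}\nabla_\omega\log\pi_\omega^{\mathrm{as}}(\bar a_t\mid\bar b_t)\,\bar A_t(\bar b_t,\bar a_t)\Big].
\]

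Third, I pair this gradient with $\hat g$. This is where I need the regularity condition the proposition alludes to, which I will state explicitly: a \emph{diagonal/decorrelation} condition saying that cross terms $\langle \nabla\log\pi^{\mathrm{as}}_s\,\bar A_s,\ u_t\nabla\log\pi^{\mathrm{as}}_t\rangle$ with $s\neq t$, and cross-trajectory terms, vanish. One route is near-orthogonality of scores across steps; another is to read the inner product as an expectation of the per-step product. Under this condition the pairing reduces to
\[
\mathbb E\Big[\sum_t u_t\,\bar A_t\,\|\nabla_\omega\log\pi_\omega^{\mathrm{as}}(\bar a_t\mid\bar b_t)\|^2\Big].
\]
Justifying this condition, or choosing a clean version of it, is the main obstacle. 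Everything else is algebra.

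Finally, $\mathrm{sign}(u_t)=z_t$ and $\mathrm{sign}(\bar A_t)=y_t$, so
\[
u_t\bar A_t\|\nabla\log\pi\|^2=z_t y_t\,w_t=w_t\big(2\cdot\mathbf 1\{z_t=y_t\}-1\big).
\]
Summing over $t$ and taking expectations gives $2\,\mathbb E[\sum_t w_t\mathbf 1\{z_t=y_t\}]-W$. By the definition of $\mathrm{Acc}_{\mathrm{as}}$ this equals $W(\omega)\big(2\,\mathrm{Acc}_{\mathrm{as}}(\omega)-1\big)$, which proves \eqref{eq:acc}. Steps with $\bar A_t=0$ have zero weight, so the convention chosen for $y_t$ there does not matter. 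The bound $\mathrm{Acc}_{\mathrm{as}}\in[0,1]$ holds because $w_t\ge0$.
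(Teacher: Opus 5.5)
Your proposal matches the paper's proof essentially step for step. Both subtract the two first-order expansions so that only $\eta\langle\nabla_\omega I_{\mathrm{AS}}(\omega),\,g_{\mathrm{aux}}(\omega)\rangle$ survives, express $\nabla_\omega I_{\mathrm{AS}}$ via the policy-gradient theorem on the oracle process with advantage $\bar A_t$, keep only same-time terms under a diagonal score-covariance condition, and finish with $z_t y_t = 2\,\mathbf 1\{z_t=y_t\}-1$. The paper also adopts that diagonal condition as an approximation rather than proving it, so your explicit statement of it is the right reading of its ``mild regularity condition''; you additionally name the cross-trajectory terms, which the paper passes over in its ``$\propto$'' step.
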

In particular, when $W(\omega)>0$, \proj\ improves AS informativeness beyond the baseline
update if and only if
$\mathrm{Acc}_{\mathrm{as}}(\omega)>\tfrac12$.

\subsection{Proofs}

\subsubsection{Proof of Proposition~\ref{prop:bt_masked_as_adv_new}}

\begin{proof}
Fix a turn $t\in\{0,\ldots,H-1\}$ and condition on the current on-policy model belief
$b_t=b$.
For any action $a\in\mathcal A$, define the one-step next belief
$b_{t+1}^{(a)}$ as the belief obtained by executing $a_t=a$, observing
$o_t=\mathsf O(s^\star,a)$ (deterministic), and sampling the BT update
$b_{t+1}^{(a)}\sim \pi_\omega^{\mathrm{bt}}(\cdot\mid c_{t+1})$ with
$c_{t+1}=(b,a,o_t)$.
Define the corresponding one-step potential change
$\Delta\Psi_t^{(a)} := \Psi(b_{t+1}^{(a)})-\Psi(b)$, and its one-sided parts
$(\Delta\Psi_t^{(a)})_+$ and $(-\Delta\Psi_t^{(a)})_+$.
By the scalar identity $x=(x)_+-(-x)_+$ applied to $x=\Delta\Psi_t^{(a)}$,
we have the decomposition
\[
\Delta\Psi_t^{(a)} \;=\; (\Delta\Psi_t^{(a)})_+ \;-\; (-\Delta\Psi_t^{(a)})_+,
\]
equivalently,
\[
\Psi(b_{t+1}^{(a)}) \;=\; \Psi(b) \;+\; (\Delta\Psi_t^{(a)})_+ \;-\; (-\Delta\Psi_t^{(a)})_+.
\]
Taking conditional expectations given $b_t=b$ and $a_t=a$ yields
\begin{equation}
\label{eq:exp_next_conf_decomp}
\mathbb E_\omega\!\left[\Psi(b_{t+1}^{(a)})\mid b_t=b,\ a_t=a\right]
=
\Psi(b)
+
\mathbb E_\omega\!\left[(\Delta\Psi_t^{(a)})_+\mid b_t=b,\ a_t=a\right]
-
\mathbb E_\omega\!\left[(-\Delta\Psi_t^{(a)})_+\mid b_t=b,\ a_t=a\right].
\end{equation}

Assumption~\ref{ass:action_indist_harm_onestep} states that, for any $a$,
\[
\mathbb E_\omega\!\left[(-\Delta\Psi_t^{(a)})_+\mid b_t=b,\ a_t=a\right]
=
\mathbb E_\omega\!\left[\Delta\Psi_t^-\mid b_t=b\right],
\]
which is independent of $a$.
Therefore, subtracting \eqref{eq:exp_next_conf_decomp} for two actions $a,a'$ gives
\begin{equation}
\label{eq:range_next_conf_equals_range_Iplus}
\mathbb E_\omega\!\left[\Psi(b_{t+1}^{(a)})\mid b,a\right]
-
\mathbb E_\omega\!\left[\Psi(b_{t+1}^{(a')})\mid b,a'\right]
=
\mathbb E_\omega\!\left[(\Delta\Psi_t^{(a)})_+\mid b,a\right]
-
\mathbb E_\omega\!\left[(\Delta\Psi_t^{(a')})_+\mid b,a'\right].
\end{equation}

By Assumption~\ref{ass:uniform_bt_masking}, along on-policy rollouts,
\[
\mathbb E_\omega\!\left[\Delta\Psi_t^+\mid b_t,\ a_t\right]
\le \kappa_\mathrm{bt}\,C_{\mathrm{BT}}(\omega).
\]
Thus, for any realized $b_t=b$ and any $a$ in the support of $\pi_\omega^{\mathrm{as}}(\cdot\mid b)$,
\begin{equation}
\label{eq:sup_next_conf_range}
\mathbb E_\omega\!\left[(\Delta\Psi_t^{(a)})_+\mid b_t=b,\ a_t=a\right]
\le \kappa_\mathrm{bt}\,C_{\mathrm{BT}}(\omega).
\end{equation}
Combining Eq.~\ref{eq:range_next_conf_equals_range_Iplus} and \ref{eq:sup_next_conf_range} yields the bound on the
range of the conditional mean next-step confidence:
\begin{equation}
\label{eq:range_mean_next_conf}
\sup_{a}\ \mathbb E_\omega\!\left[\Psi(b_{t+1}^{(a)})\mid b_t=b,\ a_t=a\right]
-
\inf_{a}\ \mathbb E_\omega\!\left[\Psi(b_{t+1}^{(a)})\mid b_t=b,\ a_t=a\right]
\ \le\
\kappa_\mathrm{bt}\,C_{\mathrm{BT}}(\omega),
\end{equation}
where $a\in\mathrm{supp}(\pi_\omega^{\mathrm{as}}(\cdot\mid b))$.

Now define the conditional mean terminal confidence given the current belief at time $t+1$:
\[
G_{t+1,H}(b')
:=
\mathbb E_\omega\!\left[\Psi(b_H)\mid b_{t+1}=b'\right].
\]
Assumption~\ref{ass:conservative_modulus} implies that $G_{t+1,H}$ is
$\kappa_\mathrm{pr}$-Lipschitz with respect to $\Psi(\cdot)$:
for any realizable $b',b''$,
\begin{equation}
\label{eq:G_Lipschitz}
\big|G_{t+1,H}(b')-G_{t+1,H}(b'')\big|
\ \le\
\kappa_\mathrm{pr}\,\big|\Psi(b')-\Psi(b'')\big|.
\end{equation}
For the final-step case $t+1=H$, the same bound holds after enlarging $\kappa_\mathrm{pr}$ if necessary.
Now define the action-conditioned mean terminal confidence at time $t$:
\[
m_H(a)
:=
\mathbb E_\omega\!\left[\Psi(b_H)\mid b_t=b,\ a_t=a\right].
\]
By the tower property,
\[
m_H(a)
=
\mathbb E_\omega\!\left[
G_{t+1,H}(b_{t+1}^{(a)})
\ \middle|\
b_t=b,\ a_t=a
\right].
\]
At this point, we use the fact that $G_{t+1,H}$ is $\kappa_\mathrm{pr}$-Lipschitz in $\Psi$ and combine it with
the cancellation structure already captured in \eqref{eq:range_mean_next_conf}:
the induced range of $m_H(a)$ over $a$ is bounded by applying
\eqref{eq:G_Lipschitz} to the extremal conditional means, giving
\begin{equation}
\label{eq:range_mean_terminal_conf}
\sup_a m_H(a) - \inf_a m_H(a)
\ \le\
\kappa_\mathrm{pr}\,
\Big(
\sup_a \mathbb E_\omega[\Psi(b_{t+1}^{(a)})\mid b_t=b,\ a_t=a]
-
\inf_a \mathbb E_\omega[\Psi(b_{t+1}^{(a)})\mid b_t=b,\ a_t=a]
\Big).
\end{equation}
Combining \eqref{eq:range_mean_terminal_conf} with \eqref{eq:range_mean_next_conf} yields
\begin{equation}
\label{eq:range_mean_terminal_conf_final}
\sup_a m_H(a) - \inf_a m_H(a)
\ \le\
\kappa_\mathrm{pr}\,\kappa_\mathrm{bt}\,C_{\mathrm{BT}}(\omega).
\end{equation}

By Assumption~\ref{ass:reward_outcome_u}, the outcome value is an $L_R$-Lipschitz function of terminal belief quality, so we write
\[
Q^{\mathrm{as}}_t(b,a)= f(m_H(a)).
\]
Therefore,
\[
\sup_a Q^{\mathrm{as}}_t(b,a) - \inf_a Q^{\mathrm{as}}_t(b,a)
\le
L_R\,\big(\sup_a m_H(a) - \inf_a m_H(a)\big).
\]
Plugging \eqref{eq:range_mean_terminal_conf_final} gives
\begin{equation}
\label{eq:Q_range_bound}
\sup_a Q^{\mathrm{as}}_t(b,a) - \inf_a Q^{\mathrm{as}}_t(b,a)
\ \le\
L_R\,\kappa_\mathrm{pr}\,\kappa_\mathrm{bt}\,C_{\mathrm{BT}}(\omega).
\end{equation}

By definition,
$V^{\mathrm{as}}_t(b)=\mathbb E_{a\sim\pi_\omega^{\mathrm{as}}(\cdot\mid b)}[Q^{\mathrm{as}}_t(b,a)]$ is a convex combination of
$\{Q^{\mathrm{as}}_t(b,a)\}_a$.
Hence for any executed $a_t$,
\[
|A^{\mathrm{as}}_t(b,a_t)|
=
\big|Q^{\mathrm{as}}_t(b,a_t)-V^{\mathrm{as}}_t(b)\big|
\le
\sup_a Q^{\mathrm{as}}_t(b,a)-\inf_a Q^{\mathrm{as}}_t(b,a).
\]
Combining with \eqref{eq:Q_range_bound} yields
\[
|A^{\mathrm{as}}_t(b_t,a_t)|
\le
L_R\,\kappa_\mathrm{pr}\,\kappa_\mathrm{bt}\,C_{\mathrm{BT}}(\omega),
\]
and hence the claimed expectation bound.

By definition of the normalized AS-channel update direction,
\[
g_{\mathrm{as}}(\omega)
=
\mathbb E_{\tau\sim\Pi_\omega}\!\left[
\frac{1}{H}\sum_{t=0}^{H-1}
\nabla_\omega\log \pi_\omega^{\mathrm{as}}(a_t\mid b_t)\,
A^{\mathrm{as}}_t(b_t,a_t)
\right].
\]
Taking norms and applying Jensen and the triangle inequality,
\[
\|g_{\mathrm{as}}(\omega)\|
\le
\frac{1}{H}\sum_{t=0}^{H-1}
\mathbb E\!\left[
\big\|\nabla_\omega\log \pi_\omega^{\mathrm{as}}(a_t\mid b_t)\big\|\cdot
\big|A^{\mathrm{as}}_t(b_t,a_t)\big|
\right].
\]
Using Assumption~\ref{ass:bounded_scores_as_bt} gives
$\|\nabla_\omega\log \pi_\omega^{\mathrm{as}}(\cdot)\|\le G_{\mathrm{as}}$ a.s., hence
\[
\|g_{\mathrm{as}}(\omega)\|
\le
G_{\mathrm{as}}\,L_R\,\kappa_\mathrm{pr}\,\kappa_\mathrm{bt}\,C_{\mathrm{BT}}(\omega).
\]
This completes the proof with
$K_{\mathrm{as}}:=G_{\mathrm{as}}\,L_R\,\kappa_\mathrm{pr}\,\kappa_\mathrm{bt}$.
\end{proof}

\subsubsection{Proof of Proposition~\ref{prop:bt_adv_nondual_Ias}}

\begin{proof}
Fix a turn $t$.
Condition on the BT context $c_t$ and draw two independent samples
$B,B'\overset{\mathrm{i.i.d.}}{\sim}\pi_\omega^{\mathrm{bt}}(\cdot\mid c_t)$.
Let
\[
Y:=Q_t^{\mathrm{bt}}(c_t,B),\qquad
Y':=Q_t^{\mathrm{bt}}(c_t,B'),\qquad
\bar Y:=\mathbb E[Y\mid c_t]=V_t^{\mathrm{bt}}(c_t).
\]
Then $A_t^{\mathrm{bt}}(c_t,B)=Y-\bar Y$, and for any fixed realization $Y=y$,
Jensen implies $\mathbb E[|y-Y'|\mid y,c_t]\ge |y-\mathbb E[Y'\mid c_t]|=|y-\bar Y|$.
Taking expectation over $Y$ yields
\[
\mathbb E\big[|Y-\bar Y|\mid c_t\big]\ \le\ \mathbb E\big[|Y-Y'|\mid c_t\big].
\]
Therefore,
\begin{equation}
\label{eq:symmetrize_adv_bt_Ias}
\mathbb E_\omega\!\left[\big|A_t^{\mathrm{bt}}(c_t,b_t)\big|\right]
\ \le\
\mathbb E\!\left[\big|Q_t^{\mathrm{bt}}(c_t,B)-Q_t^{\mathrm{bt}}(c_t,B')\big|\right].
\end{equation}

By Assumption~\ref{ass:reward_outcome_u},
$\mathbb E[R(\tau)\mid b_H]=f(\Psi(b_H))$ for a non-decreasing $L_R$-Lipschitz $f$.
Hence, for any $(c_t,b)$,
\[
Q_t^{\mathrm{bt}}(c_t,b)
=
\mathbb E\!\left[f(\Psi(b_H))\ \middle|\ c_t,\ b_t=b\right].
\]
For any $b,b'$, using $|\,\mathbb E[X]-\mathbb E[Y]\,|\le \mathbb E[|X-Y|]$,
Lipschitzness of $f$, and the elementary inequality $|x-y|\le x+y$ for $x,y\in[0,1]$, we obtain
\begin{align}
\big|Q_t^{\mathrm{bt}}(c_t,b)-Q_t^{\mathrm{bt}}(c_t,b')\big|
&\le
L_R\Big(
\mathbb E[\Psi(b_H)\mid c_t,b_t=b]
+
\mathbb E[\Psi(b_H)\mid c_t,b_t=b']
\Big).
\label{eq:Qdiff_to_terminalPsi_sum_bt_Ias}
\end{align}

Along any continuation after time $t$,
\[
\Psi(b_H)
=
\Psi(b_t)+\sum_{k=t}^{H-1}\big(\Psi(b_{k+1})-\Psi(b_k)\big)
\ \le\
\Psi(b_t)+\sum_{k=t}^{H-1}\Delta\Psi_k^+.
\]
Taking conditional expectation given $(c_t,b_t=b)$ yields
\begin{equation}
\label{eq:terminalPsi_by_current_plus_redeem_bt_Ias}
\mathbb E[\Psi(b_H)\mid c_t,b_t=b]
\ \le\
\Psi(b)\ +\ \mathbb E\!\left[\sum_{k=t}^{H-1}\Delta\Psi_k^+\ \middle|\ c_t,b_t=b\right].
\end{equation}

By the evidence-limited interpretation of absorbed belief progress, the future positive model-belief progress from $b$
is bounded by the residual oracle AS information budget from the same belief:
\begin{equation}
\label{eq:redeem_by_residual_budget}
\mathbb E\!\left[\sum_{k=t}^{H-1}\Delta\Psi_k^+\ \middle|\ c_t,b_t=b\right]
\ \le\
V^{\mathrm{orc}}_{\mathrm{AS},t}(b).
\end{equation}
Assumption~\ref{ass:residual_as_budget} further gives
\begin{equation}
\label{eq:residual_budget_by_Ias}
V^{\mathrm{orc}}_{\mathrm{AS},t}(b)
\ \le\
\kappa_\mathrm{or}\,I_{\mathrm{AS}}(\omega).
\end{equation}
Combining \eqref{eq:redeem_by_residual_budget} and \eqref{eq:residual_budget_by_Ias}, we obtain
\begin{equation}
\label{eq:redeem_by_Ias}
\mathbb E\!\left[\sum_{k=t}^{H-1}\Delta\Psi_k^+\ \middle|\ c_t,b_t=b\right]
\ \le\
\kappa_\mathrm{or}\,I_{\mathrm{AS}}(\omega).
\end{equation}

Combining \eqref{eq:symmetrize_adv_bt_Ias}, \eqref{eq:Qdiff_to_terminalPsi_sum_bt_Ias},
\eqref{eq:terminalPsi_by_current_plus_redeem_bt_Ias}, and \eqref{eq:redeem_by_Ias}, and averaging over $B,B'$,
we obtain
\[
\mathbb E\!\left[\big|A_t^{\mathrm{bt}}(c_t,b_t)\big|\right]
\ \le\
2L_R\Big(
\mathbb E[\Psi(b_t)]
+\kappa_\mathrm{or}\,I_{\mathrm{AS}}(\omega)
\Big).
\]
It remains to control $\mathbb E[\Psi(b_t)]$ by $C_{\mathrm{BT}}(\omega)$ up to the fixed initial potential.
Along any rollout,
\[
\Psi(b_t)
=
\Psi(b_0)+\sum_{k=0}^{t-1}\big(\Psi(b_{k+1})-\Psi(b_k)\big)
\ \le\
\Psi(b_0)+\sum_{k=0}^{t-1}\Delta\Psi_k^+
\ \le\
\Psi(b_0)+\sum_{k=0}^{H-1}\Delta\Psi_k^+.
\]
Taking expectation yields
$\mathbb E[\Psi(b_t)]\le \mathbb E[\Psi(b_0)]+C_{\mathrm{BT}}(\omega)$.
Since the initial potential is fixed by the task context and contributes only a common baseline to the BT-channel advantage, we absorb it into the baseline normalization and obtain the stated bound
\[
\mathbb E_\omega\!\left[\big|A_t^{\mathrm{bt}}(c_t,b_t)\big|\right]
\ \le\
2L_R\Big(C_{\mathrm{BT}}(\omega)+\kappa_\mathrm{or}\,I_{\mathrm{AS}}(\omega)\Big).
\]

By definition of the normalized BT-channel update direction,
\[
g_{\mathrm{bt}}(\omega)
=
\mathbb E_{\tau\sim\Pi_\omega}\!\left[
\frac{1}{H+1}\sum_{t=0}^{H}\nabla_\omega\log \pi_\omega^{\mathrm{bt}}(b_t\mid c_t)\,
A_t^{\mathrm{bt}}(c_t,b_t)
\right].
\]
Taking norms, applying Jensen, and using Assumption~\ref{ass:bounded_scores_as_bt}
($\|\nabla_\omega\log\pi_\omega^{\mathrm{bt}}\|\le G_{\mathrm{bt}}$ a.s.) gives
\[
\|g_{\mathrm{bt}}(\omega)\|
\le
\frac{1}{H+1}\sum_{t=0}^{H}\mathbb E\big[\|\nabla_\omega\log \pi_\omega^{\mathrm{bt}}(b_t\mid c_t)\|\cdot |A_t^{\mathrm{bt}}|\big]
\le
2G_{\mathrm{bt}}L_R\Big(C_{\mathrm{BT}}(\omega)+\kappa_\mathrm{or}\,I_{\mathrm{AS}}(\omega)\Big).
\]
Therefore
\[
\|g_{\mathrm{bt}}(\omega)\|
\le
K_{\mathrm{bt},I}\,I_{\mathrm{AS}}(\omega)+K_{\mathrm{bt},C}\,C_{\mathrm{BT}}(\omega),
\]
with
$K_{\mathrm{bt},I}:=2G_{\mathrm{bt}}L_R\kappa_\mathrm{or}$ and
$K_{\mathrm{bt},C}:=2G_{\mathrm{bt}}L_R$.
\end{proof}

\subsubsection{Proof of Proposition~\ref{prop:AS_projected_drift}}

\begin{proof}
Fix $\omega\in\mathcal R_{\delta,\varepsilon}$.
Recall $\mathcal T_{\mathrm{as}}(\omega)=\omega+\eta g_{\mathrm{as}}(\omega)$.
By Assumption~\ref{ass:first_order_expand_indices} (first-order expandability of $I_{\mathrm{AS}}$ on $\mathcal R_{\delta,\varepsilon}$),
\[
I_{\mathrm{AS}}(\mathcal T_{\mathrm{as}}(\omega))-I_{\mathrm{AS}}(\omega)
=
\Big\langle\nabla_\omega I_{\mathrm{AS}}(\omega),\ \mathcal T_{\mathrm{as}}(\omega)-\omega\Big\rangle
+o(\eta)
=
\eta\big\langle\nabla_\omega I_{\mathrm{AS}}(\omega),\ g_{\mathrm{as}}(\omega)\big\rangle
+o(\eta).
\]
Taking the positive part and using $(x+y)_+\le x_+ + |y|$ yields
\[
\Delta_{\mathrm{as}}^+ I_{\mathrm{AS}}(\omega)
=
\Big(\eta\big\langle\nabla_\omega I_{\mathrm{AS}}(\omega),\ g_{\mathrm{as}}(\omega)\big\rangle
+o(\eta)\Big)_+
\le
\eta\Big(\big\langle\nabla_\omega I_{\mathrm{AS}}(\omega),\ g_{\mathrm{as}}(\omega)\big\rangle\Big)_+
+|o(\eta)|.
\]
Next, apply Cauchy--Schwarz and the gradient bound $\|\nabla_\omega I_{\mathrm{AS}}(\omega)\|\le M_{\mathrm{AS}}$:
\[
\Big(\big\langle\nabla_\omega I_{\mathrm{AS}}(\omega),\ g_{\mathrm{as}}(\omega)\big\rangle\Big)_+
\le
\big|\big\langle\nabla_\omega I_{\mathrm{AS}}(\omega),\ g_{\mathrm{as}}(\omega)\big\rangle\big|
\le
\|\nabla_\omega I_{\mathrm{AS}}(\omega)\|\cdot \|g_{\mathrm{as}}(\omega)\|
\le
M_{\mathrm{AS}}\,\|g_{\mathrm{as}}(\omega)\|.
\]
Using the norm control $\|g_{\mathrm{as}}(\omega)\|\le K_{\mathrm{as}} C_{\mathrm{BT}}(\omega)$ gives
\[
\Delta_{\mathrm{as}}^+ I_{\mathrm{AS}}(\omega)
\le
\eta\,M_{\mathrm{AS}}\,K_{\mathrm{as}}\,C_{\mathrm{BT}}(\omega)+|o(\eta)|.
\]
Finally, absorb $|o(\eta)|$ into $o(\eta)$ (since $|o(\eta)|/\eta\to 0$ uniformly on $\mathcal R_{\delta,\varepsilon}$)
to obtain the claimed bound with $\alpha:=M_{\mathrm{AS}}K_{\mathrm{as}}$.
\end{proof}

\subsubsection{Proof of Proposition~\ref{prop:BT_projected_drift_nondual}}

\begin{proof}
Fix $\omega\in\mathcal R_{\delta,\varepsilon}$.
Recall $\mathcal T_{\mathrm{bt}}(\omega)=\omega+\eta g_{\mathrm{bt}}(\omega)$.
By Assumption~\ref{ass:first_order_expand_indices} (first-order expandability of $C_{\mathrm{BT}}$ on $\mathcal R_{\delta,\varepsilon}$),
\[
C_{\mathrm{BT}}(\mathcal T_{\mathrm{bt}}(\omega)) - C_{\mathrm{BT}}(\omega)
=
\Big\langle\nabla_\omega C_{\mathrm{BT}}(\omega),\ \mathcal T_{\mathrm{bt}}(\omega)-\omega\Big\rangle
+o(\eta)
=
\eta\big\langle\nabla_\omega C_{\mathrm{BT}}(\omega),\ g_{\mathrm{bt}}(\omega)\big\rangle
+o(\eta).
\]
Taking positive parts and using $(x+y)_+\le x_+ + |y|$ gives
\[
\Delta_{\mathrm{bt}}^+ C_{\mathrm{BT}}(\omega)
=
\Big(\eta\big\langle\nabla_\omega C_{\mathrm{BT}}(\omega),\ g_{\mathrm{bt}}(\omega)\big\rangle
+o(\eta)\Big)_+
\le
\eta\Big(\big\langle\nabla_\omega C_{\mathrm{BT}}(\omega),\ g_{\mathrm{bt}}(\omega)\big\rangle\Big)_+
+|o(\eta)|.
\]
Apply Cauchy--Schwarz and the gradient bound $\|\nabla_\omega C_{\mathrm{BT}}(\omega)\|\le M_{\mathrm{BT}}$:
\[
\Big(\big\langle\nabla_\omega C_{\mathrm{BT}}(\omega),\ g_{\mathrm{bt}}(\omega)\big\rangle\Big)_+
\le
\big|\big\langle\nabla_\omega C_{\mathrm{BT}}(\omega),\ g_{\mathrm{bt}}(\omega)\big\rangle\big|
\le
\|\nabla_\omega C_{\mathrm{BT}}(\omega)\|\cdot \|g_{\mathrm{bt}}(\omega)\|
\le
M_{\mathrm{BT}}\,\|g_{\mathrm{bt}}(\omega)\|.
\]
Using the norm control on $\mathcal R_{\delta,\varepsilon}$,
\[
\|g_{\mathrm{bt}}(\omega)\|\ \le\ K_{\mathrm{bt},I}\,I_{\mathrm{AS}}(\omega)\ +\ K_{\mathrm{bt},C}\,C_{\mathrm{BT}}(\omega),
\]
we obtain
\[
\Delta_{\mathrm{bt}}^+ C_{\mathrm{BT}}(\omega)
\le
\eta\,M_{\mathrm{BT}}\Big(K_{\mathrm{bt},I}\,I_{\mathrm{AS}}(\omega)+K_{\mathrm{bt},C}\,C_{\mathrm{BT}}(\omega)\Big)
+|o(\eta)|.
\]
Absorbing $|o(\eta)|$ into $o(\eta)$ yields the stated bound with
$\beta_I:=M_{\mathrm{BT}}K_{\mathrm{bt},I}$ and $\beta_C:=M_{\mathrm{BT}}K_{\mathrm{bt},C}$.
\end{proof}

\subsubsection{Proof of Proposition~\ref{cor:finite_horizon_trapping_refined2}}

\begin{proof}
Since the $o(\eta)$ term in Theorem~\ref{thm:2D_self_locking_projected_matrix} is uniform over
$\mathcal R_{\delta,\varepsilon}$, there exist $\eta_0>0$ and $c_0<\infty$ such that for all
$\eta\in(0,\eta_0]$ and all $\omega\in\mathcal R_{\delta,\varepsilon}$, the remainder satisfies the componentwise bound
$\|o(\eta)\|_\infty \le c_0\eta^2$.
Set $\rho:=c_0/m$.

Define the capability vector
\[
\mathbf x(\omega):=
\begin{pmatrix}
I_{\mathrm{AS}}(\omega)\\
C_{\mathrm{BT}}(\omega)
\end{pmatrix}
\in\mathbb R_+^2,
\quad
M:=
\begin{pmatrix}
0 & \alpha\\
\beta_I & \beta_C
\end{pmatrix},
\quad
\mathbf 1:=(1,1)^\top.
\]
Inside $\mathcal R_{\delta,\varepsilon}$, Theorem~\ref{thm:2D_self_locking_projected_matrix} yields the componentwise bound
\[
\Delta^+ \mathbf x(\omega)
\ \preceq\
\eta\,M\,\mathbf x(\omega)\ +\ c_0\eta^2\,\mathbf 1.
\]
Moreover, for any scalar $z$ we have $z\le z_+$, hence any increment in $\mathbf x$ is bounded above by its positive part.
Therefore, an upper envelope for the accumulation of positive gains is given by the deterministic recursion
\begin{equation}
\label{eq:envelope_recursion}
\mathbf x_{k+1}
\ \preceq\
(\mathrm I+\eta M)\,\mathbf x_k\ +\ c_0\eta^2\,\mathbf 1,
\qquad
\mathbf x_0:=\mathbf x(\omega_0).
\end{equation}

Let $y_k:=\|\mathbf x_k\|_\infty$. Taking $\|\cdot\|_\infty$ on \eqref{eq:envelope_recursion} and using
$\|\mathrm I+\eta M\|_\infty = 1+\eta m$ gives
\[
y_{k+1}\ \le\ (1+\eta m)\,y_k\ +\ c_0\eta^2.
\]
Unrolling this scalar recursion and using $(1+\eta m)^k\le e^{k\eta m}$ yields
\[
y_k
\ \le\
e^{k\eta m}\Big(y_0 + \tfrac{c_0}{m}\eta\Big)\ -\ \tfrac{c_0}{m}\eta
\ =\
e^{k\eta m}\Big(y_0 + \rho\eta\Big)\ -\ \rho\eta.
\]
Under the stated initialization envelope used in the trapping bound, $y_0\le I_{\mathrm{AS}}(\omega_0)$.
Hence, if
\[
e^{k\eta m}\Big(I_{\mathrm{AS}}(\omega_0)+\rho\eta\Big) - \rho\eta\ \le\ \varepsilon,
\]
then $y_k\le \varepsilon$, which implies $\mathbf x_k\preceq(\varepsilon,\varepsilon)^\top\preceq(\delta,\varepsilon)^\top$ whenever $\delta\ge\varepsilon$.
Solving the inequality for $k$ gives
\[
k
\ \le\
\frac{1}{\eta m}\log\!\Bigg(\frac{\varepsilon + \rho\eta}{I_{\mathrm{AS}}(\omega_0)+\rho\eta}\Bigg).
\]
Choosing
\[
K
=
\Bigg\lfloor
\frac{1}{\eta m}\,
\log\!\Bigg(
\frac{\varepsilon + \rho\eta}{I_{\mathrm{AS}}(\omega_0) + \rho\eta}
\Bigg)
\Bigg\rfloor_+
\]
ensures the above condition holds for all integers $k\in\{0,1,\ldots,K\}$, proving the claimed finite-horizon trapping
under the projected drift envelope.
\end{proof}

\subsubsection{Proof of Proposition~\ref{prop:weighted_acc_AS}}

\begin{proof}
We absorb the fixed critique strength into the auxiliary direction and write the critique-shaped update as
\begin{align}
    &\mathcal T_{\mathrm{as}}(\omega)-\omega = \eta\,g_{\mathrm{as}}(\omega),
    \label{eq:trans0}
    \\
    &\widehat{\mathcal T}_{\mathrm{as}}(\omega)-\omega = \eta\,g_{\mathrm{as}}(\omega)+\eta\,g_{\mathrm{aux},\mathrm{as}}(\omega).
    \label{eq:trans1}
\end{align}

By Assumption~\ref{ass:first_order_expand_indices}, for $\omega'=\mathcal T_{\mathrm{as}}(\omega)\in\mathcal R_{\delta,\varepsilon}$ we have
\begin{align}
I_{\mathrm{AS}}\!\big(\mathcal T_{\mathrm{as}}(\omega)\big)
&=
I_{\mathrm{AS}}(\omega)
+
\Big\langle\nabla_\omega I_{\mathrm{AS}}(\omega),\ \mathcal T_{\mathrm{as}}(\omega)-\omega\Big\rangle
+
o(\eta),
\label{eq:taylor_Tas}
\end{align}
and for $\omega'=\widehat{\mathcal T}_{\mathrm{as}}(\omega)\in\mathcal R_{\delta,\varepsilon}$ we have
\begin{align}
I_{\mathrm{AS}}\!\big(\widehat{\mathcal T}_{\mathrm{as}}(\omega)\big)
&=
I_{\mathrm{AS}}(\omega)
+
\Big\langle\nabla_\omega I_{\mathrm{AS}}(\omega),\ \widehat{\mathcal T}_{\mathrm{as}}(\omega)-\omega\Big\rangle
+
o(\eta).
\label{eq:taylor_That_as}
\end{align}

Subtracting \eqref{eq:taylor_Tas} from \eqref{eq:taylor_That_as} yields
\begin{align*}
I_{\mathrm{AS}}\!\big(\widehat{\mathcal T}_{\mathrm{as}}(\omega)\big)
-
I_{\mathrm{AS}}\!\big(\mathcal T_{\mathrm{as}}(\omega)\big)
&=
\Big\langle\nabla_\omega I_{\mathrm{AS}}(\omega),\ \widehat{\mathcal T}_{\mathrm{as}}(\omega)-\mathcal T_{\mathrm{as}}(\omega)\Big\rangle
+
o(\eta).
\end{align*}
Using the update definitions,
$\widehat{\mathcal T}_{\mathrm{as}}(\omega)-\mathcal T_{\mathrm{as}}(\omega)=\eta\,g_{\mathrm{aux},\mathrm{as}}(\omega)$,
so
\begin{align}
I_{\mathrm{AS}}\!\big(\widehat{\mathcal T}_{\mathrm{as}}(\omega)\big)
-
I_{\mathrm{AS}}\!\big(\mathcal T_{\mathrm{as}}(\omega)\big)
&=
\eta\,\Big\langle\nabla_\omega I_{\mathrm{AS}}(\omega),\ g_{\mathrm{aux},\mathrm{as}}(\omega)\Big\rangle
+ o(\eta)
\nonumber\\
&=:
\eta\,\Gamma_{\mathrm{as}}(\omega)\ + o(\eta).
\label{eq:delta_main}
\end{align}

We expand $\Gamma_{\mathrm{as}}(\omega)$.
On the oracle-belief process,
$I_{\mathrm{AS}}(\omega)=\mathbb E_{\bar\tau}[\sum_{t=0}^{H-1}\Delta\bar\Psi_t]$
is a finite-horizon policy objective with action $\bar a_t\sim\pi_\omega^{\mathrm{as}}(\cdot\mid \bar b_t)$.
Thus, the policy-gradient theorem yields
\[
\nabla_\omega I_{\mathrm{AS}}(\omega)
=
\mathbb E_{\bar\tau}\!\left[\sum_{t=0}^{H-1}
\bar A_t(\bar b_t,\bar a_t)\,s_t(\omega)\right],
\qquad
s_t(\omega)=\nabla_\omega\log\pi_\omega^{\mathrm{as}}(\bar a_t\mid \bar b_t).
\]

The likelihood-gap auxiliary objective in Section~\ref{sec:critique_injection} induces an additive update component of the form
\[
g_{\mathrm{aux},\mathrm{as}}(\omega)
=
\mathbb E_{\bar\tau}\!\left[\sum_{t=0}^{H-1} u_t(z)\,s_t(\omega)\right],
\qquad
u_t(z)=|u_t|\,z_t,\ \ |u_t|\ge 0,\ \ z_t\in\{+1,-1\}.
\]
By bilinearity and exchanging expectation with finite sums,
\[
\Gamma_{\mathrm{as}}(\omega)
\ \propto\
\mathbb E\!\left[\Big\langle \sum_{t} \bar A_t\,s_t,\ \sum_{u} u_u(z)\,s_u\Big\rangle\right]
=
\mathbb E\!\left[\sum_{t,u} \bar A_t\,u_u(z)\,\langle s_t,s_u\rangle\right].
\]
Under the diagonal score-covariance approximation used for this first-order characterization, we keep the same-time terms, giving
\[
\Gamma_{\mathrm{as}}(\omega)
=
\mathbb E\!\left[\sum_{t=0}^{H-1} \bar A_t(\bar b_t,\bar a_t)\,u_t(z)\,\|s_t(\omega)\|^2\right].
\]

Write $\bar A_t=|\bar A_t|\,y_t$ with $y_t=\mathrm{sign}(\bar A_t)\in\{\pm 1\}$
and $u_t=|u_t|\,z_t$ with $z_t\in\{\pm 1\}$.
Then each summand becomes
\[
\bar A_t\,u_t\,\|s_t\|^2
=
|u_t|\,|\bar A_t|\,\|s_t\|^2\,(z_t y_t)
=
w_t(\omega)\,(z_t y_t),
\]
where $w_t(\omega)$ is exactly the weight defined above.
Hence
\[
\Gamma_{\mathrm{as}}(\omega)=\mathbb E\!\left[\sum_{t=0}^{H-1} w_t(\omega)\,z_t y_t\right].
\]
Since $z_t y_t=+1$ iff $z_t=y_t$ and $z_t y_t=-1$ iff $z_t\neq y_t$, we have
$z_t y_t=2\,\mathbf 1\{z_t=y_t\}-1$, and therefore
\begin{align*}
\Gamma_{\mathrm{as}}(\omega)
&=
\mathbb E\!\left[\sum_t w_t(\omega)\,\big(2\,\mathbf 1\{z_t=y_t\}-1\big)\right] \\
&=
2\,\mathbb E\!\left[\sum_t w_t(\omega)\,\mathbf 1\{z_t=y_t\}\right]
-
\mathbb E\!\left[\sum_t w_t(\omega)\right] \\
&=
W(\omega)\,\Big(2\,\mathrm{Acc}_{\mathrm{as}}(\omega)-1\Big),
\end{align*}
which is \eqref{eq:acc}. The final equivalence follows immediately when $W(\omega)>0$.
\end{proof}

\section{Complementary Experimental Analyses}
\label{app:add_exp_analyses}

In this section, we provide additional experimental analyses that further examine the robustness and interpretation of \proj.
Unless otherwise specified, experiments use PPO training with Qwen2.5-7B-Instruct under the same setup as the main experiments.

\textbf{Robustness to structured critique perturbations.}
The main text studies robustness under random critique flipping.
We further evaluate \proj under more structured forms of critique destruction on PE-G$_{S=3}$, as shown in Fig.~\ref{fig:structured_critique_perturbations}.
We consider three perturbation families.
First, \textbf{channel perturbation} removes one critique channel by keeping only AS critiques or only BT critiques, with the removed channel set to zero.
Second, \textbf{position perturbation} keeps only the first 40\% or the last 40\% of interaction-turn critiques, setting all other labels to zero.
Third, \textbf{sign perturbation} keeps only the last 40\% of labels and fills the remaining labels with constant $0$, $+1$, or $-1$.
Across all three families, \proj remains consistently above the vanilla outcome-RL baseline, suggesting that the method is robust not only to random label noise but also to systematic destruction of critique structure.
The channel perturbation also provides additional evidence for the complementary roles of AS and BT: both AS-only and BT-only variants improve over vanilla RL, while using both channels performs best, consistent with the bidirectional AS--BT coupling studied in the main paper.
The sign perturbation further suggests that false-positive encouragement is more damaging, since filling missing labels with $+1$ degrades performance more than filling them with $-1$.

\begin{figure*}[t]
\centering
\captionsetup{aboveskip=4pt}
\subfloat[\label{fig:structured_channel}Channel]{\includegraphics[width=0.32\textwidth,trim=0 20 0 0,clip]{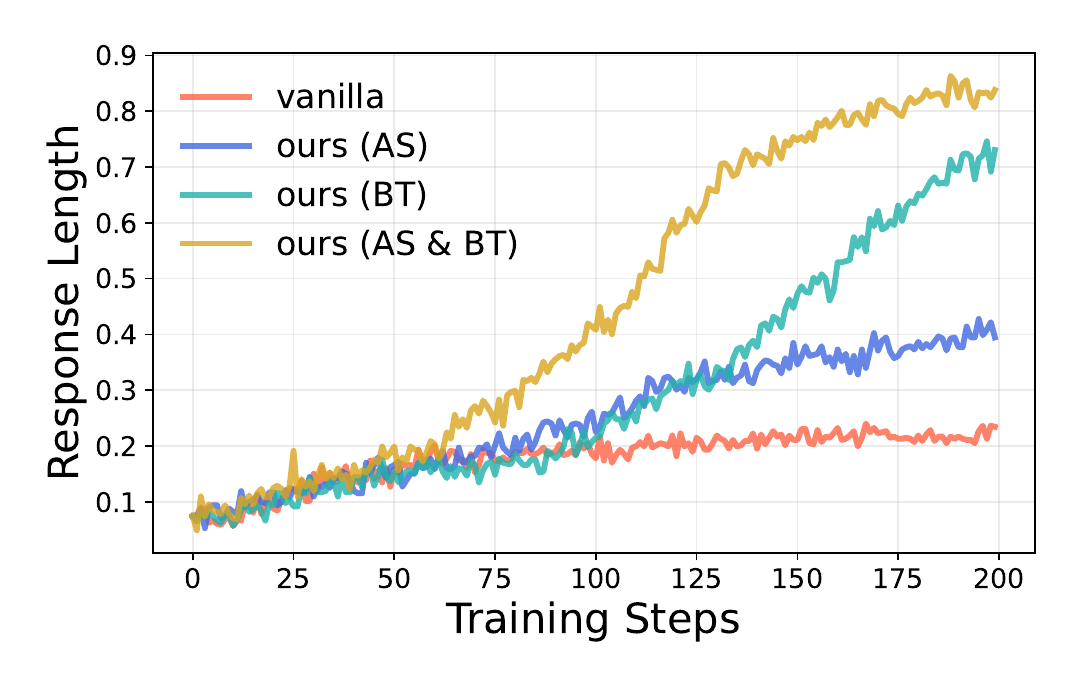}}
\subfloat[\label{fig:structured_position}Position]{\includegraphics[width=0.32\textwidth,trim=0 20 0 0,clip]{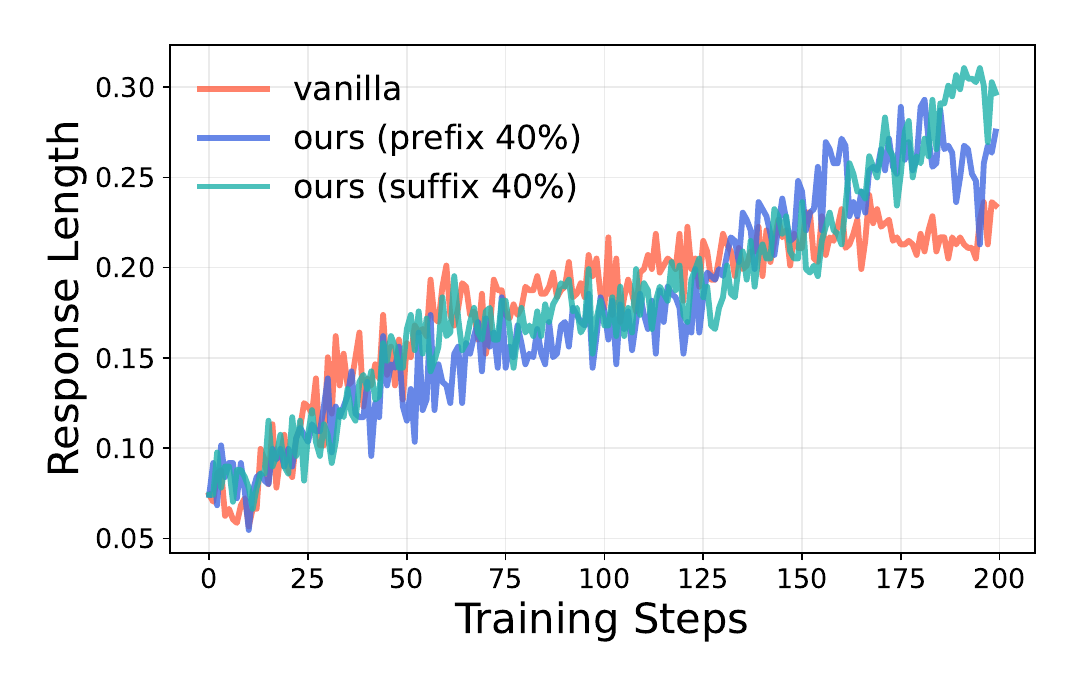}}
\subfloat[\label{fig:structured_sign}Sign]{\includegraphics[width=0.32\textwidth,trim=0 20 0 0,clip]{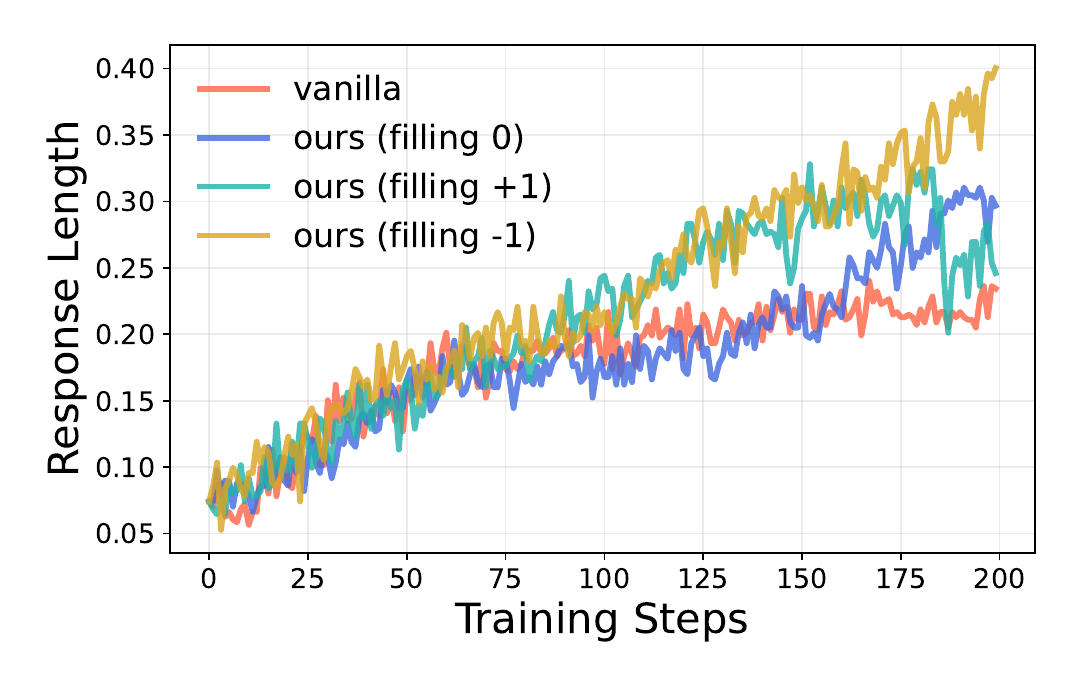}}
\caption{
Robustness of \proj under structured critique perturbations on PE-G$_{S=3}$ (PPO training with Qwen2.5-7B-Instruct).
(a) \textbf{Channel}: using only AS critiques, only BT critiques, or both channels, with the removed channel set to $0$.
(b) \textbf{Position}: keeping only the first 40\% or last 40\% of interaction-turn critiques, with the remaining labels set to $0$.
(c) \textbf{Sign}: keeping only the last 40\% of labels and filling the remaining labels with constant $0$, $+1$, or $-1$.
Across all three perturbation families, \proj remains consistently above the vanilla baseline, indicating robustness not only to random noise but also to several forms of structured critique destruction.
}
\label{fig:structured_critique_perturbations}
\end{figure*}

\textbf{Robustness to belief-elicitation prompt variation.}
Since BT critiques are constructed from a task-relevant confidence readout $\widetilde \Psi$ (\textit{cf.,} Sec.~\ref{sec:method_channel}), we also test whether \proj is sensitive to the exact prompt used to elicit this readout.
Fig.~\ref{fig:a.1} reports a variant of the PE-G$_{S=3}$ experiment where the belief-elicitation prompt is changed to encourage more aggressive confidence updates, while keeping the rest of the training pipeline unchanged.
The resulting reward dynamics remain substantially above the vanilla PPO baseline, indicating that \proj does not require a perfectly calibrated or uniquely specified belief readout.
Instead, the confidence readout only needs to provide a weakly informative directional signal on average, which aligns with the weighted-accuracy interpretation in Proposition~\ref{prop:weighted_acc_AS_main}.

\begin{figure*}
    \centering
    \captionsetup{aboveskip=4pt}
    \subfloat[\label{fig:a.1}Alternative belief-elicitation]{\includegraphics[width=0.31\textwidth]{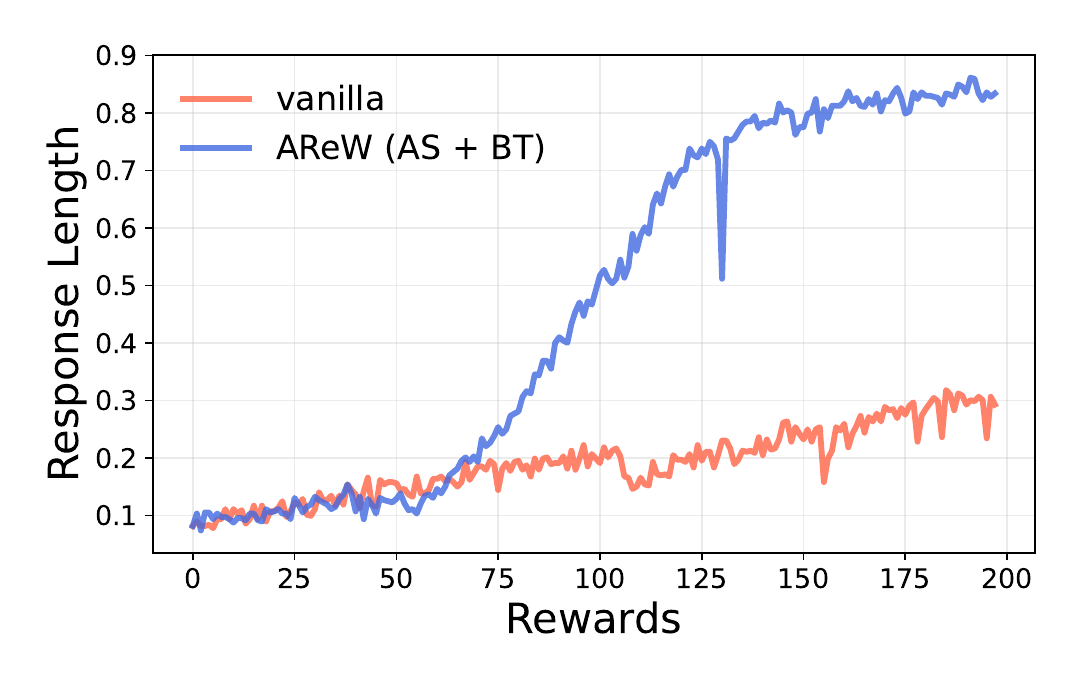}} 
    \hspace{3em}
    \subfloat[\label{fig:a.2}Various $\lambda u$ strength of \proj]{\includegraphics[width=0.3\textwidth]{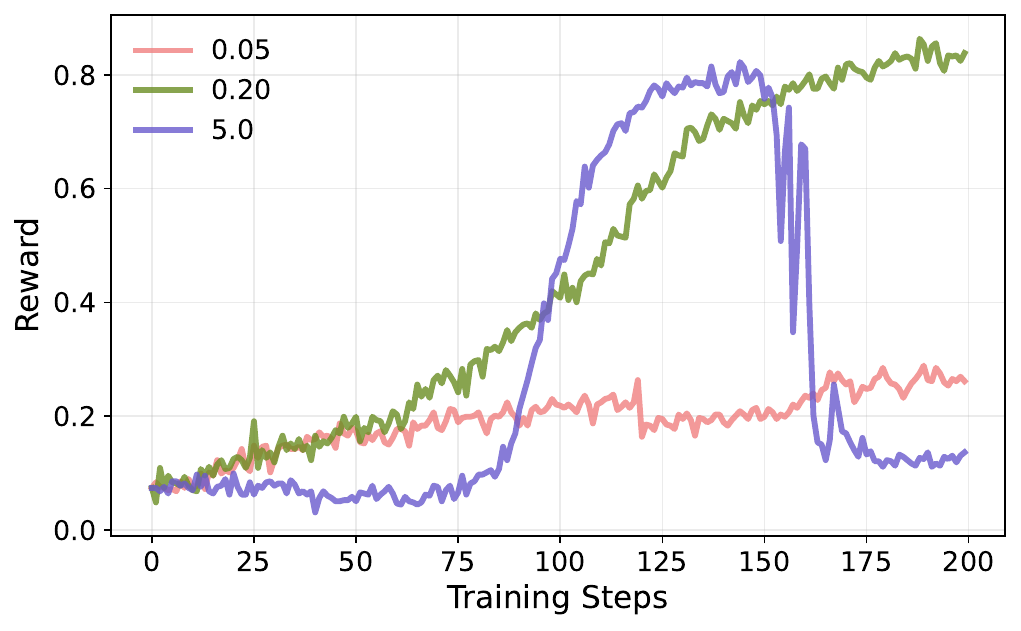}} 
   \caption{
\textbf{(a)} Training dynamics of rewards on PE-G$_{S=3}$ (PPO training with Qwen2.5-7B-Instruct).
Compared with the experiment in Fig.~\ref{fig:m.1.1}, this variant uses an alternative belief-elicitation prompt that encourages more aggressive confidence updates, with the rest of the training pipeline unchanged.
\textbf{(b)}  Training dynamics of rewards on  under different strength of \proj.
    }
    \label{fig:a_misc}
    \vspace{-0.1in}
\end{figure*}

\textbf{Comparison with directly injecting intermediate rewards.}
We next compare \proj with a direct intermediate-reward baseline that uses the same critique source, but injects the critique as an additional dense reward rather than as advantage reweighting.
This comparison isolates the effect of the injection mechanism: both methods receive the same weak directional information, but intermediate reward shaping changes the return and critic target, whereas \proj preserves the original outcome reward and only reallocates actor-side credit within a trajectory.
Table~\ref{tab:perturb_compare} shows that \proj consistently outperforms direct intermediate reward injection under random critique sign flipping.
The gap is especially large on PE-G$_{S=3}$ under noisy critiques: at $\alpha=0.2$, intermediate reward drops to 32.33, while \proj reaches 65.00; at $\alpha=0.5$, intermediate reward falls back to the vanilla level, while \proj still improves over vanilla.
These results support the view that weak and noisy step-level signals are better used for credit reallocation than for redefining the episode utility.

\begin{table}[t]
\caption{Robustness  comparing \proj with direct intermediate reward under random critique sign flipping (PPO training on PE-G$_{S=3}$ and FloDial-Hard datasets, Qwen2.5-7B-Instruct), where each  critique is independently flipped with probability $\alpha$.}
\vspace{-0.05in}
\label{tab:perturb_compare}
\centering
\resizebox{0.65\textwidth}{!}{
\begin{tabular}{llcccc}
\midrule
& & vanilla RL & $\alpha$-0 & $\alpha$-0.2 & $\alpha$-0.5 \\
\midrule
\multirow{2}{*}{PE-G$_{S=3}$}
& Intermediate reward 
& \multirow{2}{*}{18.33} 
& 73.67 
& 32.33 
& 18.33 \\
& \textsc{AReW} 
& 
& 80.33 {\oo {\scriptsize $\uparrow$ 6.66}} 
& 65.00 {\oo {\scriptsize $\uparrow$ 32.67}} 
& 30.30 {\oo {\scriptsize $\uparrow$ 11.97}} \\
\midrule
\multirow{2}{*}{\textsc{FloDial-Hard}}
& Intermediate reward 
& \multirow{2}{*}{21.33} 
& 33.00 
& 28.00 
& 21.00 \\
& \textsc{AReW} 
& 
& 36.00 {\oo {\scriptsize $\uparrow$ 3.00}} 
& 29.00 {\oo {\scriptsize $\uparrow$ 1.00}} 
& 23.30 {\oo {\scriptsize $\uparrow$ 2.30}} \\
\midrule
\end{tabular}
}
\vspace{-0.12in}
\end{table}

\textbf{Reduced critique-chasing under corrupted positive signals.}
We further test a structured reward-hacking scenario where each negative critique is independently flipped to positive with probability $p$.
This perturbation creates misleading positive supervision and directly tests whether the method chases corrupted critique labels.
Table~\ref{tab:flip_sign} reports both final task performance and the average final positive/negative token ratio, where the positive count includes flipped positives.
Direct intermediate reward produces much larger positive/negative ratios, suggesting that the policy more aggressively optimizes toward corrupted positive critique signals.
In contrast, \proj maintains lower positive/negative ratios while achieving better performance under both $p=0.3$ and $p=0.5$.
This indicates that advantage reweighting is less susceptible to critique chasing than direct reward injection, because it uses critiques to redistribute policy-gradient magnitude without changing the sparse task objective.

\begin{table}[t]
\caption{Robustness  comparing \proj with direct intermediate reward under structured critique sign flipping, where each negative critique is independently flipped to positive with probability $p$
(PPO training on PE-G$_{S=3}$ dataset, Qwen2.5-7B-Instruct). We report the average final positive/negative token ratio (including flipped positives), \textit{pos/neg}, and the resulting test performance under two corruption levels. }
\vspace{-0.05in}
\label{tab:flip_sign}
\centering
\resizebox{0.55\textwidth}{!}{
\begin{tabular}{lcccc}
\midrule
& \multicolumn{2}{c}{$p$-0.3} & \multicolumn{2}{c}{$p$-0.5} \\
\cmidrule(lr){2-3} \cmidrule(lr){4-5}
& pos/neg & performance & pos/neg & performance \\
\midrule
Intermediate reward 
& 5.94 
& 23.33 
& 7.26 
& 17.33 \\
\textsc{AReW} 
& 1.58 
& 30.67 {\oo {\scriptsize $\uparrow$ 7.34}} 
& 4.22 
& 26.00 {\oo {\scriptsize $\uparrow$ 8.67}} \\
\midrule
\end{tabular}
}
\vspace{-0.12in}
\end{table}

\textbf{Unified AS/BT evaluation with rubric-based labeling.}
The main experiments use task-grounded AS and BT proxies, which are designed to measure whether the agent acquires useful information and whether it incorporates the acquired evidence.
To check whether the conclusions depend on these task-specific proxy definitions, we additionally evaluate AS and BT capabilities using GPT-5 rubric-based labeling.
Table~\ref{tab:gpt5_proxy_corr} compares the rubric-based evaluation with the proxy metrics used in the main text on PE-G$_{S=3}$, MediQ, and FloDial-Hard.
Across all three tasks and both evaluation sources, \proj--AS+BT consistently improves over vanilla RL.
Although the absolute proxy values differ across evaluation sources, the qualitative trend remains stable: \proj improves both action-selection quality and belief-tracking behavior.
This supports the interpretation that the observed AS/BT improvements are not artifacts of a single hand-designed proxy, but reflect a broader improvement in the agent's information acquisition and evidence-integration behavior.

\begin{table}[t]
\caption{Evaluations of AS and BT capabilities over vanilla RL and \proj -- AS + BT (PPO algorithm with Qwen-2.5-7B-Instruct). We compare two evaluation sources: (1) GPT-5 rubric-based labeling and (2) the critiques used in the main text.
We annotate the improvement of \proj over vanilla RL in each cell.
Across all three tasks, AREW consistently improves over vanilla RL under both evaluation sources.}
\vspace{-0.05in}
\label{tab:gpt5_proxy_corr}
\centering
\resizebox{0.8\textwidth}{!}{
\begin{tabular}{llcccc}
\midrule
& & \multicolumn{2}{c}{AS} & \multicolumn{2}{c}{BT} \\
\cmidrule(lr){3-4} \cmidrule(lr){5-6}
& & GPT-5 labeling & Used in the main text & GPT-5 labeling & Used in the main text \\
\midrule
\multirow{2}{*}{PE-G$_{S=3}$}
& vanilla RL 
& 0.177 
& 0.049 
& 0.238 
& -0.005 \\
& \proj -- AS + BT 
& 0.669 {\oo {\scriptsize $\uparrow$ 0.49}} 
& 0.834 {\oo {\scriptsize $\uparrow$ 0.79}} 
& 0.459 {\oo {\scriptsize $\uparrow$ 0.22}} 
& 0.014 {\oo {\scriptsize $\uparrow$ 0.02}} \\
\midrule
\multirow{2}{*}{\textsc{MediQ}}
& vanilla RL 
& 0.275 
& 0.222 
& 0.355 
& 0.060 \\
& \proj -- AS + BT 
& 0.335 {\oo {\scriptsize $\uparrow$ 0.06}} 
& 0.539 {\oo {\scriptsize $\uparrow$ 0.32}} 
& 0.666 {\oo {\scriptsize $\uparrow$ 0.31}} 
& 0.103 {\oo {\scriptsize $\uparrow$ 0.04}} \\
\midrule
\multirow{2}{*}{\textsc{FloDial-Hard}}
& vanilla RL 
& 0.343 
& 0.250 
& 0.495 
& 0.130 \\
& \proj -- AS + BT 
& 0.468 {\oo {\scriptsize $\uparrow$ 0.13}} 
& 0.370 {\oo {\scriptsize $\uparrow$ 0.12}} 
& 0.636 {\oo {\scriptsize $\uparrow$ 0.14}} 
& 0.790 {\oo {\scriptsize $\uparrow$ 0.66}} \\
\midrule
\end{tabular}
}
\vspace{-0.12in}
\end{table}

\textbf{Effect of the reweighting strength $\lambda u$.}
We examine the impact of the reweighting strength $\lambda u$ on training dynamics.
As illustrated in Fig.~\ref{fig:a.2}, insufficient reweighting fails to provide enough signal to escape the self-locking regime, resulting in slow and limited improvement.
Conversely, overly aggressive reweighting accelerates early optimization but often leads to unstable training and eventual performance collapse.
We conjecture that large $\lambda u$ amplifies high-variance advantage estimates and over-emphasizes a small subset of steps, making the policy update brittle and sensitive to noise.
These observations suggest a non-trivial trade-off between exploration, stability, and convergence speed.
We leave more adaptive or structure-aware reweighting strategies as an important direction for future work.

\section{Setup Details}
\subsection{Dataset Details and Prompt Templates}\label{app:dataset}

In this section, we present more details for the datasets and tasks evaluated in this work. 

\textbf{Preference Estimation -- Gated (PE-G)}, adapted from \citet{badola2025multi} and \citet{zou2026reducing}.
Adapted from \citet{badola2025multi},
Gated-PE is an interactive preference inference task under constrained information acquisition.
The agent is given a finite set of items $\mathcal{X} = \{x_1,\dots,x_N\}$, where each item $x_i$ is represented by a known attribute vector $\mathbf{a}_i \in \mathbb{R}^D$.
The user is characterized by an unknown latent preference vector $\mathbf{w}^\star \in [0,1]^D$.
Through interaction, the agent maintains and iteratively refines an estimate $\mathbf{w}_t \in [0,1]^D$ of this latent preference.
At each decision point, the agent actively selects a low-dimensional attribute subspace $S_t \subseteq \{1,\dots,D\}$ and an item comparison $(x_i, x_j) \in \mathcal{X} \times \mathcal{X}$ designed to elicit the user's preference feedback restricted to $S_t$.
Based on the observed feedback, the agent updates its belief state.
The objective is to accurately recover $\mathbf{w}^\star$ under sparse, outcome-based supervision.

\textbf{MediQ}, adapted from \citet{li2024mediq}.
Adapted from \citet{li2024mediq},
MediQ is an interactive medical inference task that models hypothesis-level belief tracking under partial observability.
The agent is provided with a clinical vignette and an associated medical question whose answer lies in a finite hypothesis set of size $D$.
The agent maintains a belief state $\mathbf{w}_t \in [0,1]^D$, where each dimension represents the current support for a candidate hypothesis.
Through iterative interaction, the agent actively queries the LLM-simulated user for diagnostic information, receives structured feedback, and updates each hypothesis score accordingly.
The learning objective is to progressively concentrate belief mass onto the correct hypothesis.

\textbf{FloDial}, adapted from \citet{raghu2021end,hu2024uncertainty} consists of multi-turn diagnostic dialogues for resolving user-reported issues.
It provides a scenario where a
customer support technician interacts with customers to identify and resolve faults or issues within
computer systems, electronic devices, machinery, or other complex systems. The agent simulates the customer support
technician, which chat with the customer to further check the specific issues of device through multi-turn interactions.

\subsection{Baseline Details}\label{app:baselines}

Here we introduce RL algorithms used in our experiments.
Formally, given an actor model $\pi_\theta$,
the likelihood of a response $y$ to a query $x$ under the policy $\pi_\theta$ is modeled as $\pi_\theta (y | x)=\prod_{t=1}^{|y|} \pi_\theta (y_t | x, y_{<t} )$.
Given a query-response pair $(x, y)$, a verifier $r$ generates its reward $r(x,y)\in[0,1]$.

\textbf{Proximal Policy Optimization (PPO)}~\citep{schulman2017proximal}
employs the following objective for policy optimization:
\begin{align}
\mathcal{J}_\text{PPO}(\theta) = \mathbb{E}_{ x \sim \mathcal{D},\, y \sim \pi_{\theta_\text{old}}( \cdot | x) }
\left[ \frac{1}{|y|} \sum_{t=1}^{|y|} 
\min \left( w_{t}(\theta) \widehat{A}_{t},  \, \mathrm{clip} \left( w_{t}(\theta), 1 - {\varepsilon}, 1 + {\varepsilon}\right) \widehat{A}_{t} \right)
\right],
\end{align}
where the importance ratio of the token $y_t$ is defined as
$
w_{t}(\theta) = \frac{ \pi_{\theta} (y_{t} | x, y_{<t}) }{ \pi_{\theta_\text{old}} (y_{t} | x,y_{<t})}
$,
the advantage $\widehat{A}_{t}$ of $y_t$ is typically computed via Generalized Advantage Estimation (GAE)~\citep{schulman2015high} with temporal-difference errors, and $\varepsilon$ is the clipping range of importance ratios.

\textbf{Group Relative Policy Optimization (GRPO)}~\citep{shao2024deepseekmath} proposes computing the relative advantage of each response within a group of responses of the same query using the following objective (omitting the KL regularization term):
\begin{align}
\mathcal{J}_\text{GRPO}(\theta) = \mathbb{E}_{ x,\, \{y_i\}_{i=1}^G  }
\left[ \frac{1}{G} \sum_{i=1}^{G} \frac{1}{|y_i|} \sum_{t=1}^{|y_i|} 
\min \left( w_{i,t}(\theta) \widehat{A}_{i,t},  \,  \mathrm{clip} \left( w_{i,t}(\theta), 1 - {\varepsilon}, 1 + {\varepsilon}\right) \widehat{A}_{i,t} \right)
\right],
\label{equ:grpo}
\end{align}
where $\{y_i\}_{i=1}^G\sim \pi_{\theta_\text{old}}( \cdot | x)$ and $G$ is the group size. The importance ratio $w_{i,t}(\theta)$ and advantage $\widehat{A}_{i,t}$ of token $y_{i,t}$ are defined as:
\begin{align}
    w_{i,t}(\theta)=\frac{ \pi_{\theta} (y_{i,t} | x, y_{i,<t}) }{ \pi_{\theta_\text{old}} (y_{i,t} | x,y_{i,<t})},\,\,
    \widehat{A}_{i,t} = \frac{r(x, y_i) - \mathrm{mean} \left( \{ r(x, y_i) \}_{i=1}^G \right) }{ \mathrm{std} \left( \{ r(x, y_i) \}_{i=1}^G \right) },
\end{align}
respectively, where all the tokens in $y_i$ share the same advantage.

\textbf{Group Sequence Policy Optimization (GSPO)}~\citep{zheng2025group} extends GRPO by defining the importance ratio at the sequence level with length normalization, with sequence-level clipping, rewarding, and optimization. The objective is:
\begin{align}
\mathcal{J}_{\text{GSPO}}(\theta)
= \mathbb{E}_{x,\{y_i\}_{i=1}^G} \Bigg[ \frac{1}{G} \sum_{i=1}^G \min \big( s_i(\theta)\widehat A_i,\ \operatorname{clip}(s_i(\theta),1-\epsilon,1+\epsilon)\widehat A_i \big) \Bigg],
\end{align}
where
\[
s_i(\theta) = \left( \frac{\pi_\theta(y_i|x)}{\pi_{\theta_{\text{old}}}(y_i|x)} \right)^{1/|y_i|}
= \exp\!\left( \frac{1}{|y_i|}\sum_{t=1}^{|y_i|}\log \frac{\pi_\theta(y_{i,t}|x,y_{i,<t})}{\pi_{\theta_{\text{old}}}(y_{i,t}|x,y_{i,<t})} \right).
\]
\% By operating on entire sequences rather than token-level updates, GSPO achieves more stable training dynamics for LLM reinforcement learning.

\subsection{Supplementary Implementation Details}\label{app:impl}

\subsubsection{Additional Setting on Datasets}
Here we provide additional implementation details. 
The maximum number of interaction turns is set at 10 for PE-G, 12 for PE-F, 8 for MediQ, 10 for FloDial-Easy and FloDial-Hard. 
For RL training, we define task-specific rewards aligned with their evaluation metrics: 
for PE-G and PE-F, the reward is the similarity improvement compared to the initial default guess (all 0.5), where PE-G leverages binary reward ($\mathbf{1}[\mathrm{improvement}] > 0.03$) and PE-F leverages the continuous improvement value (normalized to $[0,1]$).
For MediQ and FloDial, the reward is binary, checking if the final decision made by the agents aligns with the ground-truth.
All rewards are provided only at the terminal step of each trajectory, consistent with the outcome-based RL setting.

The AS and BT proxies on PE and MediQ datasets can be seen in Sec.~\ref{sec:phenomenon}. For the FloDial dataset, the query is considered as uninformative if the user replies ``unknown'', and the BT proxy is defined by whether the agent can increase the confidence of the ground truth when receiving informative feedback. 

\textbf{Critique construction (AS vs.\ BT).}
Across datasets, we use two lightweight stepwise critiques:
$z_t^\mathrm{AS}$ (AS-channel) evaluates whether the \emph{query step} at turn $t$ yields informative feedback,
and $z_t^\mathrm{BT}$ (BT-channel) evaluates whether the subsequent \emph{belief update} is consistent with that feedback.
Below we summarize the dataset-specific instantiations.

\textbf{PE-G and PE-F.}

\emph{AS critique.}
Let the action be a comparison between two items, inducing a signed attribute-difference vector
$m_{ij}\in\mathbb R^{D}$ (restricted to the focused coordinates when applicable).
We mark the query as informative if it exhibits a non-trivial trade-off across coordinates, i.e.,
\[
z_t^\mathrm{AS}=+1
\quad\Longleftrightarrow\quad
\exists k,\ell\in[D]\ \text{s.t.}\ m_{ij,k}\,m_{ij,\ell}<0,
\]
and set $z_t^\mathrm{AS}=-1$ for repeated or invalid queries or when $m_{ij}$ is one-signed.

\emph{BT critique.}
Conditioned on an informative query ($z_t^\mathrm{AS}=+1$), we compare the updated preference estimate $v_t$
against the previous valid estimate $v_{t-1}$ using similarity to the ground-truth preference $w^\star$:
\[
z_t^\mathrm{BT}
=
\mathrm{sign}\!\Big(
\mathrm{sim}(v_t,w^\star)-\mathrm{sim}(v_{t-1},w^\star)
\Big),
\]
and set $z_t^\mathrm{BT}=0$ when the query is uninformative or invalid.

\textbf{MediQ.}

\emph{AS critique.}
Let $f_t$ denote the patient feedback to the query at turn $t$.
We set $z_t^\mathrm{AS}=+1$ iff the feedback is informative (i.e., not an ``Unknown / cannot answer'' response)
and the query is not repeated; otherwise $z_t^\mathrm{AS}=-1$.
We additionally explore counterfactual pattern (exploration on other datasets as future work).
When counterfactual queries are enabled, 
we intervene the belief from the last turn and obtain the counterfactual query.
We then discourage semantic repetition by requiring
the counterfactual query to be sufficiently different from the actual query (measured by token-level overlap),
and mark the step as uninformative when this difference is small.

\emph{BT critique.}
Let $w_t\in[0,1]^4$ be the belief over $4$ hypotheses and $s^\star$ the ground-truth hypothesis.
Define the margin
\[
\mathrm{mar}(w_t;s^\star)
:=
w_t(s^\star)-\max_{s\neq s^\star} w_t(s).
\]
If the feedback is uninformative, we enforce \emph{invariance}:
$z_t^\mathrm{BT}=+1$ if $\mathrm{mar}(w_t;s^\star)=\mathrm{mar}(w_{t-1};s^\star)$ and $z_t^\mathrm{BT}=-1$ otherwise.
If the feedback is informative, we employ a counterfactual-consistency check:
We intervene the observation to ``unknown'' and obtain
 $w_t^{\mathrm{cf}}$ to be a counterfactual update from the same previous belief $w_{t-1}$ but from the ``unknown'' response; then
\[
z_t^\mathrm{BT}=+1
\quad\Longleftrightarrow\quad
\|w_t^{\mathrm{cf}}-w_{t-1}\|_2 \le \|w_t-w_{t-1}\|_2,
\]
and $z_t^\mathrm{BT}=-1$ otherwise.

\textbf{FloDial.}
\emph{AS critique.}
User feedback takes the form \texttt{Yes/No} when the query matches a reference diagnostic item,
and \texttt{Unknown} otherwise. We set
\[
z_t^\mathrm{AS}=
\begin{cases}
+1, & \text{if feedback is \texttt{Yes}},\\
0,  & \text{if feedback is \texttt{No}},\\
-1, & \text{if feedback is \texttt{Unknown} (no match)}.
\end{cases}
\]
\emph{BT critique.}
Let $w_t\in[0,1]^S$ be the belief over candidate faults and $s^\star$ the ground-truth fault.
When the feedback is informative (\texttt{Yes/No}), we encourage increasing confidence on $s^\star$:
\[
z_t^\mathrm{BT}=+1 \ \Longleftrightarrow\ w_t(s^\star) > w_{t-1}(s^\star),
\qquad
z_t^\mathrm{BT}=-1 \ \text{otherwise}.
\]
When the feedback is uninformative (\texttt{Unknown}), we use an invariance-style rule:
we require the belief to remain unchanged
(e.g., $z_t^\mathrm{BT}=+1$ iff $w_t=w_{t-1}$).

\subsubsection{Training Configurations}
All expriments are trained on a single node with 8 B200 GPUs, based on the implementations of Verl~\citep{sheng2025hybridflow}. All training tasks on PPO are conducted for 200 steps (GRPO and GSPO 100 steps) with the actor model optimized using a learning rate of $1.0\times10^{-6}$. 
\% 
For distributed training, we adopt Fully Sharded Data
Parallelism (FSDP), using BFloat16 precision throughout both training and evaluation.
\% 
For efficient LLM rollouts, we adopt vLLM~\footnote{\url{https://docs.vllm.ai/en/latest/}} with a tensor parallel size of 1. The rollout sampling uses a temperature of 1.0 for all datasets.

For the PPO baseline, we use Generalized
Advantage Estimation (GAE) with parameters $\lambda=1$ and $\gamma=1$. The clip ratio $\varepsilon$ are set to 0.2. For GRPO training, we sample 3 responses per prompt, and the rollout parameters with the clip ratio are consistent with the PPO setting. For the GSPO algorithm,  the clip ratio $\varepsilon_{low}$ and $\varepsilon_{high}$ are set to 0.0003 and 0.0004, respectively, while others keep consistent with GRPO training.

\begin{figure*}
\begin{tcolorbox}[
  colback=gray!5!white,
  colframe=gray!75!black,
  title=Input Prompts for the MovieRec Preference Estimation Dataset
]
\scriptsize

You are an interactive preference estimation agent. The goal is to infer a user's
{\var{len\_attributes}}-dimensional hidden preference vector on movies through multi-round interaction.

\vspace{1em}
\#\# Setup:

- You are given {\var{len\_seen}} movies, each with scores on
  {\var{len\_attributes}} dimensions (indexed $1 \dots {\var{len\_attributes}}$):

{\var{seen\_movie\_sample}}

\vspace{0.8em}
- Maintain an estimate of the user's preference vector:

\quad Guess: $w_1, w_2, \dots, w_{{\var{len\_attributes}}}$

\vspace{0.5em}
- Initialization:

\quad Guess: $0.5,0.5,\dots,0.5$

\vspace{1em}
\#\# Interaction Protocol:

Interaction alternates between two types of rounds.

\vspace{0.8em}
1) \textbf{Action Round (odd-numbered rounds: 1,3,5,\dots)}

In an Action Round, you must choose which information to query.
You must output exactly:

\begin{flushleft}
\texttt{<interact>}\\
Focus: k1,k2,k3\\
Pair: p1,p2\\
\texttt{</interact>}
\end{flushleft}

Rules:

- Focus must contain exactly three distinct integers in
  $[1,{\var{len\_attributes}}]$.
  Ensure that all dimensions receive opportunities to be disambiguated over the course of interaction.

- Pair must contain exactly two distinct integers in
  $[1,{\var{len\_seen}}]$.

- Avoid uninformative pairs, such as dominance pairs where one movie is better
  on all focused dimensions $k_1,k_2,k_3$.

\vspace{0.5em}
After you output Focus and Pair, the user will provide feedback:

\vspace{0.3em}
- User Feedback: ``Yes'', ``No'', or ``Equal''.

\quad \textbf{``Yes'' means $p_1$ is preferred when considering only dimensions $k_1,k_2,k_3$.}

\vspace{0.8em}
2) \textbf{Update Round (even-numbered rounds: 2,4,6,\dots)}

In an Update Round, you must update the preference estimate based on the most recent
Focus, Pair, and User Feedback.

You must output exactly:

\begin{flushleft}
\texttt{<interact>}\\
Guess: w1,w2,...,w{\var{len\_attributes}}\\
\texttt{</interact>}
\end{flushleft}

Rules:

- Guess must be comma-separated numbers.

- Use the feedback to adjust the relative importance of the focused dimensions
  in a way consistent with the observed preference.

\vspace{1em}
\#\# Reasoning:

Before each \texttt{<interact>} block, you may briefly reason about what to do in a:

\begin{flushleft}
\texttt{<scratch>...\ </scratch>}
\end{flushleft}

\vspace{0.8em}
Round 1 is an Action Round.

\end{tcolorbox}
\vspace{-3mm}
\caption{Prompt Template for MovieRec Preference Estimation.}
\label{fig:prompt_movrec}
\end{figure*}

\begin{figure*}
\begin{tcolorbox}[
  colback=gray!5!white,
  colframe=gray!75!black,
  title=Input Prompts for the MediQ Dataset
]
\scriptsize

You are an interactive medical inference agent.
Your goal is to iteratively maintain and refine a 4-dimensional state vector that tracks the relative support for four potential hypotheses associated with a given medical question, through multi-round interaction.

\vspace{1em}
\#\# Setup:

- You are given:
  (i) a clinical vignette describing a patient scenario, and
  (ii) a medical question associated with this scenario.

- The question admits four potential hypotheses, labeled A, B, C, and D.

- Maintain a state vector $(w_A,w_B,w_C,w_D)$, where each $w\in[0,1]$,
  representing your current level of support for each hypothesis.

- Initialization:

\quad Guess: $0.5,0.5,0.5,0.5$

- In each Action round, you can issue \textbf{only one} query.
  The query must be a single, atomic question intended to reduce uncertainty among the four hypotheses;
  compound or multi-part questions are not allowed.

\vspace{1em}
\#\# Alternating Interaction Protocol:

Interaction alternates between two types of rounds:

\vspace{0.8em}
1) \textbf{Action Round (odd-numbered rounds: 1,3,5,\dots)}

- You must output only a query in the exact format:

\begin{flushleft}
\texttt{<interact>}\\
Query: \dots\\
\texttt{</interact>}
\end{flushleft}

- After you output a Query, the user will provide feedback.

\vspace{0.8em}
2) \textbf{Update Round (even-numbered rounds: 2,4,6,\dots)}

- You must update and output only the state vector (Guess) based on:
  (a) the most recent Query, and
  (b) the feedback returned for that Query.

- Output in the exact format:

\begin{flushleft}
\texttt{<interact>}\\
Guess: wA,wB,wC,wD\\
\texttt{</interact>}
\end{flushleft}

- Guess must be comma-separated numbers.

\vspace{0.8em}
\#\# State Update Rules (in Update rounds, after feedback):

- Each dimension can be adjusted independently, depending on how the feedback affects that hypothesis.

\vspace{0.8em}
\#\# Query Policy (in Action rounds):

- Ask atomic, clinically meaningful queries that help differentiate among the four hypotheses.

- Avoid repeating previously asked queries.

\vspace{0.8em}
Before each \texttt{<interact>} block, briefly reason (in a few sentences) about what you should do in a:
\texttt{<scratch>...\ </scratch>} block, following the protocol above.

\vspace{1em}
Let's get started:

\vspace{0.5em}
Clinical Vignette: {\var{clinical\_vignette}}

\vspace{0.3em}
Medical Question: {\var{medical\_question}}

\vspace{0.3em}
Potential Hypotheses: {\var{potential\_hypotheses}}

\vspace{0.8em}
Round 1 is an Action Round. Output your first Query.

\end{tcolorbox}
\vspace{-3mm}
\caption{Prompt Template for MediQ.}
\label{fig:prompt_mediq}
\end{figure*}

\begin{figure*}
\begin{tcolorbox}[
  colback=gray!5!white,
  colframe=gray!75!black,
  title=Input Prompts for the FloDial Dataset
]
\scriptsize

You are an interactive troubleshooting diagnosis agent.
Your goal is to iteratively maintain and refine a state vector that tracks the relative plausibility of
{\var{num\_candidates}} candidate descriptions for a given problem, through multi-round interaction.

\vspace{1em}
\#\# Setup:

- You are given:
  (i) a task description describing the user's problem, and
  (ii) a list of {\var{num\_candidates}} candidate descriptions.

- Each candidate is a possible explanation, diagnosis, or resolution suggestion related to the problem.

- Exactly \textbf{one} of the candidate descriptions best matches the actual situation described by the user.

- Your goal is to identify which candidate description is the most consistent with the situation,
  by asking diagnostic yes/no questions.

\vspace{0.8em}
- Maintain a state vector $(w_1,w_2,\dots,w_{{\var{num\_candidates}}})$,
  where each $w_i\in[0,1]$, representing your current level of support for candidate $i$.

- Initialization:

\quad Guess: $0.5,0.5,\dots,0.5$ (length {\var{num\_candidates}})

\vspace{0.8em}
- In each Action Round, you issue \textbf{only one} query.

\vspace{1em}
\#\# Alternating Interaction Protocol:

Interaction alternates between two types of rounds:

\vspace{0.8em}
1) \textbf{Action Round (odd-numbered rounds: 1,3,5,\dots)}

- Output only a query in the exact format:

\begin{flushleft}
\texttt{<interact>}\\
Query: \dots\\
\texttt{</interact>}
\end{flushleft}

- The query must be a single, atomic yes/no diagnostic question intended to reduce uncertainty among
  the {\var{num\_candidates}} candidate descriptions.

- Avoid repeating previously asked queries.

- Do \textbf{not} directly ask whether a specific candidate description is correct or incorrect.

- Compound or multi-part questions are not allowed.

\vspace{0.8em}
2) \textbf{Update Round (even-numbered rounds: 2,4,6,\dots)}

- Update and output only the state vector (Guess), based on the most recent query and the feedback returned for that query.

- Output in the exact format:

\begin{flushleft}
\texttt{<interact>}\\
Guess: w1,w2,...,w{\var{num\_candidates}}\\
\texttt{</interact>}
\end{flushleft}

- Each $w_i$ must remain within $[0,1]$.

- If the user replies ``Unknown'', leave all weights unchanged.

- Each dimension can be adjusted independently by $0$ (not changed), $+0.1$, or $-0.1$,
  depending on how the feedback affects that candidate.

\vspace{0.8em}
Before each \texttt{<interact>} block, briefly reason (in a few sentences) in a
\texttt{<scratch>...\ </scratch>} block about what to ask or how the feedback affects your belief.

\vspace{1em}
Let's get started.

\vspace{0.8em}
Task Description:

{\var{task\_description}}

\vspace{0.6em}
Candidate Descriptions:

{\var{candidate\_descriptions}}

\vspace{0.8em}
Round 1 is an Action Round. Output your first query.

\end{tcolorbox}
\vspace{-3mm}
\caption{Prompt Template for FloDial.}
\label{fig:prompt_flodial}
\end{figure*}

\end{document}